\documentclass[journal]{IEEEtran}
\usepackage{amsmath,amsfonts}
\usepackage{verbatim}
\usepackage{adjustbox}
\usepackage[ruled,linesnumbered]{algorithm2e}
\usepackage{array}
\usepackage[caption=false,font=normalsize,labelfont=sf,textfont=sf]{subfig}
\usepackage{textcomp}
\usepackage{stfloats}
\usepackage{verbatim}
\usepackage{graphicx}
\usepackage{multirow}
\usepackage{url}
\usepackage{graphicx}
\usepackage{amssymb}
\usepackage{booktabs}
\usepackage{comment}
\usepackage{times}
\usepackage{epsfig}
\usepackage{colortbl}
\usepackage{xcolor}
\usepackage{bm} 
\usepackage{amsmath} 
\usepackage{amsthm}
\usepackage{tabularx}

\usepackage{xcolor}

\definecolor{lightblue}{RGB}{217, 237, 247}
\definecolor{lightgreen}{RGB}{229, 245, 224} 

\newcommand{\rval}[2]{{#1\,\textsubscript{(#2)}}}

\usepackage{soul}

\usepackage[numbers,sort&compress]{natbib}

\usepackage[pdftex,colorlinks=true,citecolor=blue,linkcolor=blue,urlcolor=black]{hyperref}

\usepackage[capitalize,noabbrev]{cleveref}

\newtheorem{theorem}{Theorem}
\newtheorem{remark}{Remark}
\newtheorem{corollary}{Corollary}[theorem]
\newtheorem{lemma}{Lemma}
\newtheorem{definition}{Definition}
\newtheorem{assumption}{Assumption}

\crefname{figure}{Fig.}{Figs.}
\crefname{table}{Table}{Tables}
\crefname{section}{Section}{Sections}
\crefname{theorem}{Theorem}{Theorems}
\crefname{lemma}{Lemma}{Lemmas}
\crefname{definition}{Definition}{Definitions}
\crefname{equation}{Equation}{Equations}
\crefname{assumption}{Assumption}{Assumptions}
\crefname{algorithm}{Algorithm}{Algorithms}
\crefname{corollary}{Corollary}{Corollaries}

\begin{document}

\title{Efficient  
  Temporal Point Processes via Monotone Alternating Splines}
\author{Cheng~Wan, Quyu~Kong, Feng~Zhou~\IEEEmembership{Senior Member,~IEEE}
\IEEEcompsocitemizethanks{
\IEEEcompsocthanksitem Cheng Wan is with Hong Kong University of Science and Technology, Hong Kong. Email: wanc7147@gmail.com. 
\IEEEcompsocthanksitem Quyu Kong is with Alibaba Cloud, Hangzhou, China. Email: kongquyu@gmail.com;
\IEEEcompsocthanksitem Feng Zhou is with Center for Applied Statistics and School of Statistics, Renmin University of China, Beijing, China. Email: feng.zhou@ruc.edu.cn. 
}
}


\markboth{preprint}
{Wan \MakeLowercase{\textit{et al.}}: Efficient Temporal Point Processes via Monotone
Alternating Splines}


\maketitle

\begin{abstract}
Temporal point processes (TPPs) have widespread applications across various domains. Compared to modeling the conditional intensity of a TPP, modeling its cumulative conditional intensity function (CCIF) improves computational efficiency and eliminates numerical approximation errors. However, current CCIF parameterizations uniformly rely on Monotone Neural Networks (MNNs), which we identify as suffering from three structural deadlocks--convexity restrictions, saturation limits, and violations of CCIF modeling requirements--that fundamentally restrict their representational capacity for complex temporal dynamics. To resolve these bottlenecks, this paper proposes a novel framework called Monotone Alternating Splines (MAS). By leveraging distinct interpolation and extrapolation components, MAS provides a flexible and efficient framework for modeling CCIFs. Theoretically, MAS's interpolation provides strong fitting accuracy, while its extrapolation supports robust generalization, reducing the irreducible approximation gaps of MNNs. Extensive experiments show that MAS achieves superior performance on both synthetic and real-world datasets.

\end{abstract}

\begin{IEEEkeywords}
Time Series Analysis, Temporal Point Process, Deep Learning.

\end{IEEEkeywords}

\section{Introduction} \label{intro}

Event sequences are prevalent across various domains, such as criminology~\citep{mohler2011self,zhou2020auxiliary}, social networks~\citep{chen2018marked,meng2024interpretable}, finance~\citep{bacry2015hawkes,hawkes2018hawkes}, seismology~\citep{ogata1998space,ogata1999seismicity}, and neuroscience~\citep{linderman2016bayesian,zhou2022efficient}. These sequences are typically characterized by irregular timestamps in continuous time, which sets them apart from conventional regular time series data. Understanding the dynamics within these sequences provides valuable insights for forecasting future events. Temporal point processes (TPPs) offer a statistical framework for modeling such event sequences. 

The key to characterizing a TPP is the next timestamp's distribution given its history. 
One of the most common approaches to parameterize a TPP is to specify a conditional intensity function (CIF). 
Numerous methods \citep{du2016recurrent,mei2017neural,simiao2020transformer} leverage deep learning approaches to model the CIF and recover the model parameters via maximum likelihood estimation (MLE). 
However, CIF-based paradigms suffer from a persistent computational bottleneck. Evaluating the closed-form log-likelihood of a TPP sequence requires integrating the CIF over a time window. If the CIF is based on complicated neural networks, the integral must be approximated by numerical integration, such as Monte Carlo \citep{hastings1970monte}. 
This introduces cumulative errors and severely limits computational efficiency during both training and inference. 

To entirely bypass integration, an equivalent alternative is to directly model the Cumulative Conditional Intensity Function (CCIF)~\citep{omi2019fully}. By definition, the CIF is the exact derivative of the CCIF. Consequently, the required CIF can be recovered via automatic differentiation~\citep{baydin2018automatic}. CCIF-based parameterization provides an exact, integral-free paradigm that eliminates integration errors and accelerates likelihood evaluations.

Despite its structural elegance, the current CCIF parameterization is severely constrained. 
Because the underlying CIF is strictly non-negative, a valid CCIF must mathematically be a continuous, monotonically increasing function. Current literature uniformly enforces this constraint using Monotone Neural Networks (MNNs)~\citep{omi2019fully, liu2024cumulative}. Typically, MNNs are implemented using Multi-layer Perceptrons (MLPs) with strictly positive weights and non-decreasing activation functions. 
However, in this paper, we identify that the MNN architecture inevitably encounters three structural deadlocks: 
(1) \textbf{The Convexity Restriction:} Using purely convex activations (e.g., ReLU, Softplus) restricts the CCIF to be strictly convex, fundamentally preventing the predicted CIF from decaying after excitation. 
(2) \textbf{The Saturation Limit:} Even when bounded, non-convex activations (e.g., Sigmoid or Tanh) are used instead of convex ones in the MNN, the network faces a different issue: the predicted CCIF will stop growing and eventually stagnate. Consequently, the network becomes incapable of modeling long-horizon point processes where events persistently occur over time.
(3) \textbf{Violations of CCIF Requirements:} Finally, to forcibly bypass the convexity of MNNs, some methods apply non-convex temporal transformations in advance, such as $t\rightarrow\log(t)$. However, such transformations actually violate the fundamental mathematical requirements of the CCIF. Additionally, they heuristically force the intensity function to decay over time, which severely limits the model's flexibility.

To resolve these structural deadlocks, we propose a new framework--\textbf{Monotone Alternating Splines (MAS)}--to model CCIF. 
MAS replaces the rigid MNN with piecewise monotone splines, which offer greater generality, flexibility, and efficiency, and enjoy strong theoretical guarantees for both fitting and generalization. 
Specifically, MAS separates the estimated CCIF into two distinct sub-components:

Firstly, the \textbf{MAS interpolation} explicitly divides the CCIF into multiple fine-grained intervals and leverages monotone splines to accurately fit each segment.
The parameters of each interpolation segment are determined by a sequence encoder (e.g., a Transformer), and then the monotone splines are connected with $C^1$ continuity.
Through this fine-grained segmented fitting, MAS can accurately fit complex TPP sequences. Theoretically, it possesses an approximation capability comparable to existing parameterization methods, alleviating the MNN convexity restriction.

Secondly, because standard piecewise monotone splines cannot maintain global monotonicity, the \textbf{MAS extrapolation} component is introduced. 
Specifically, outside the interpolation intervals, MAS employs a simple monotonically increasing function (e.g., a linear or linear-plus-exponential function) that smoothly connects with the tail of the internal interpolation part. The extrapolation guarantees global monotonicity over the unbounded time horizon. Furthermore, it allows MAS to recover several classical TPP baselines, thereby securing robust generalization for distant future predictions.

Beyond the architecture, we establish a comprehensive theoretical foundation for the MAS framework. First, we analyze the generalization error of MAS, explicitly decomposing it into the interpolation error, the extrapolation error, and the complexity error. Based on this established generalization bound, we theoretically validate how the extrapolation component enhances the generalization capability of MAS. 
Furthermore, by comparing the approximation error bounds between MAS and MNNs, we theoretically prove that MAS possesses superior fitting and generalization capabilities compared to MNNs. 
Finally, we provide theoretical guidance on how to configure optimal MAS hyperparameters.

Our contribution can be summarized as follows:
\begin{itemize}
    \item We expose the fundamental limitations of existing MNNs for CCIF modeling. We demonstrate that MNN architecture inherently traps the model in three structural deadlocks, critically restricting model expressivity for complex TPP dynamics.
    \item We propose Monotone Alternating Splines. By introducing a highly expressive interpolation component and a globally stable extrapolation component, MAS satisfies all mathematical requirements for CCIF modeling, delivering higher flexibility and computational efficiency.
    \item We establish a comprehensive theoretical foundation for MAS. By decomposing and bounding MAS's generalization error, we prove its significant superiority over MNNs. 
    Moreover, we explicitly provide a theoretical guideline for MAS hyperparameter selection. Extensive experiments across diverse synthetic and real-world event datasets validate the effectiveness of MAS.
\end{itemize}

The remainder of this paper is organized as follows. \cref{related_work} reviews related work. \cref{preliminary} formulates the preliminaries of CCIF and its required conditions.
Next, we analyze the limitations of MNNs in CCIF modeling in \cref{analysis}.
\cref{method} details the mathematical formulation of the MAS framework. In \cref{theory}, we detail the theoretical analyses comparing structural approximation bounds and formalizing generalization guidelines. \cref{experiments} presents the experimental evaluations. Finally, \cref{conclusion} concludes the paper.

\section{Related Work} 
\label{related_work}

This section reviews related work on TPPs and monotone splines. 

\subsection{Temporal Point Processes}
Temporal Point Processes provide the mathematical framework for modeling asynchronous event sequences in continuous time \citep{daley2007introduction}. Classical statistical methods typically parameterize the CIF with predefined functional forms under specific assumptions. For example, the homogeneous Poisson process models independent event occurrences \citep{kingman1992poisson}; the Hawkes process captures self-exciting phenomena, where past events increase the probability of future arrivals \citep{hawkes1971spectra,bacry2015hawkes}; and the Self-correcting process models stress accumulation that decreases after an event \citep{isham1979self}. 
However, these classical models, whether optimized via Maximum Likelihood Estimation (MLE) or Bayesian frameworks, still struggle to capture the highly nonlinear, multimodal temporal dependencies common in modern large-scale datasets.

\subsection{CIF-based Neural TPPs} 
To improve structural expressiveness, deep learning has shifted TPP modeling toward data-driven neural methods. Early approaches used recurrent architectures to parameterize the exact intensity function. Specifically, RMTPP \citep{du2016recurrent} and NHP \citep{mei2017neural} map historical event embeddings into a dynamic continuous-time CIF. Later methods leveraged self-attention to capture long-range interactions, leading to autoregressive models such as THP \citep{simiao2020transformer} and SAHP \citep{zhang2020SAHP}. In parallel, continuous-time state formulations based on Ordinary Differential Equations \cite{chen2018neural,rubanova2019latent} and Stochastic Differential Equations \citep{2024_ICML_NJDTPP} were proposed to flexibly model intensity trajectories between events. Despite improved predictive performance, these CIF-based models share a key computational bottleneck: exact Negative Log-Likelihood (NLL) evaluation requires integrating the CIF over continuous time. Since neural CIFs generally lack closed-form integrals, their integration must rely on Monte Carlo sampling \citep{hastings1970monte} or numerical quadrature \citep{golub1969calculation}, which introduces approximation error and slows training.

\subsection{Alternative TPP Parameterizations and CCIF Modeling} 
To bypass the CIF bottleneck, recent work has explored alternative parameterizations and estimation strategies. Some methods avoid likelihood integration entirely by using  Normalizing Flows \citep{shchur2020fast}, or Score Matching frameworks \citep{li2023smurf, cao2025scorematching}. Meanwhile, other approaches model intermediate variables equivalent to the CIF to avoid integration. For example, some methods model the conditional distribution of the next event time using mixture distributions \citep{shchur2019intensity, panosdecomposable}, while others directly model the CCIF, which is the integral of the CIF, and then obtain the exact CIF via automatic differentiation. Existing CCIF-based models \citep{omi2019fully, liu2024cumulative,wang2024cumulative} typically adopt MNNs \citep{Sill1997MNN} to enforce the positive-intensity prerequisite. Specifically, our method focuses on improving current CCIF approaches.

\subsection{Monotone Splines in Continuous Modeling}
Monotone splines have a long history in statistical numerical analysis as tools for modeling monotone data \citep{ramsay1988monotone}. Early work established methods for constructing smooth, monotonically increasing trajectories using piecewise basis functions \cite{CubicSpline1980Fritsch}. Later methods introduced Rational Linear Splines (RLS) \citep{2020RLS}, Rational Quadratic Splines (RQS) \cite{RQS1987Geogeory}, and Rational Cubic Splines (RCS) \cite{RCS2012}. These rational monotone splines guarantee strict monotonicity and high-order smoothness, and they also support stable analytical inverses. Moreover, each segmented interval can be controlled independently by localized parameters without affecting global monotonicity, which provides high flexibility \citep{durkan2019neuralflows}.
In this work, we identify the structural deadlocks of previous CCIF modeling methods and leverage monotone splines to introduce MAS as a more expressive alternative with stronger approximation guarantees.

\section{Preliminaries} \label{preliminary}

In this section, we provide the mathematical background on TPPs, CCIF, and how to structurally model the CCIF sequence following certain events. 

\subsection{Temporal Point Processes} \label{tpp}
A TPP is a probabilistic model that describes sequences of events occurring within a continuous time window $[0, T]$. A realization of a multivariate TPP is represented as an event sequence $S = \{(t_n, k_n)\}_{n=1}^{N}$, where $N$ is a random variable denoting the total number of events. Here, $0 < t_1 < \dots < t_N < T$ are the event timestamps in continuous time, and $k_n \in \{1,\dots,K\}$ is the mark (or event type) associated with the $n$-th event. 

TPPs are primarily characterized by the CIF $\lambda_k^*(t \mid \mathcal{H}_{t^-})$, which defines the instantaneous occurrence rate of a type-$k$ event given the historical data $\mathcal{H}_{t^-} = \{(t_n, k_n) : t_n < t\}$:
$$
    \lambda_k^*(t) = \lim_{\delta_t \to 0} \frac{p(\text{type-}k\text{ event} \in [t,t+\delta_t] \mid \mathcal{H}_{t^-})}{\delta_t}, 
$$
where the asterisk indicates conditioning on the history up to but not including time $t$. 
Given the CIF, the log-likelihood of a multivariate TPP for an observed sequence $S$ is:
\begin{equation}
    \log p(S) = \sum_{n=1}^{N} \log\lambda_{k_n}^*(t_n) -\int_0^T \lambda^*(t) dt, 
\label{likelihood}
\end{equation}
where $\lambda^*(t)=\sum_{k=1}^K\lambda^*_k(t)$ is the total intensity across all event types. Model parameters are estimated by maximizing the log-likelihood in \cref{likelihood}. Additionally, given the history $\mathcal{H}_{t_n}$, the probability density function of the next event timestamp $t_{n+1}$ is:
$$
    p(t_{n+1} \mid \mathcal{H}_{t_n}) = \lambda^*(t_{n+1}) \exp\left(-\int_{t_{n}}^{t_{n+1}} \lambda^*(t)dt\right). 
$$
We can estimate the expected next event timestamp $\widehat{t}_{n+1}$ and the most likely mark $\widehat{k}_{n+1}$ using the following estimators:
\begin{equation}
\widehat{t}_{n+1}=\int_{t_n}^\infty tp(t \mid \mathcal{H}_{t_n})dt, \ \ \widehat{k}_{n+1}=\arg\max_k\frac{\lambda_{k}^*(\widehat{t}_{n+1})}{\lambda^*(\widehat{t}_{n+1})}.
\label{predicting t_n+1}
\end{equation}

\subsection{CCIF Parameterization}
A major drawback of standard CIF-based modeling is that the integral $\int_0^T \lambda^*(t) dt$ in \cref{likelihood} generally lacks a closed-form solution when parameterized by complex neural networks, thereby requiring computationally inefficient numerical integration. An alternative, mathematically equivalent formulation is to model the CCIF directly: $\Lambda_k^*(t) = \int_0^t \lambda_k^*(\tau) d\tau$. Under this framework, \cref{likelihood} can be rewritten as:
\begin{equation}
    \log p(S) = \sum_{n=1}^{N} \log\frac{d}{dt}\Lambda_{k_n}^*(t_n^-) - \Lambda^*(T), 
\label{CCIF-likelihood}
\end{equation}
where $\Lambda^*(t) = \sum_{k=1}^K \Lambda_{k}^*(t)$, and $\frac{d}{dt}\Lambda_{k_n}^*(t_n^-)$ represents the left-hand derivative of $\Lambda_k^*(t)$ evaluated at $(t_n, k_n)$. Incorporating the CCIF allows MLE to bypass integral computations entirely. The required instantaneous intensity is easily and precisely recovered via automatic differentiation.

\subsection{Modeling CCIF after Certain Events}
Following each observed event $t_1,t_2,\cdots,t_N$, the continuous global function $\Lambda^*(t)$ can be decomposed and be expressed as:
\begin{equation}
    \Lambda^*(t) = \sum_{n=0}^{N} {I}_{(t_n,t_{n+1}]}\Lambda^{*(n)}(t), 
\label{eq: decomposes}
\end{equation}
where $t_0=0$, $t_{N+1}=T$, and ${I}_{(t_n,t_{n+1}]}$ is an indicator function taking the value 1 if $t \in (t_n, t_{n+1}]$ and 0 otherwise. Here, $\Lambda^{*(n)}(t) = \int_{t_n}^t \lambda^*(\tau) d\tau + \Lambda^{*(n-1)}(t_n)$ represents the CCIF following the $n$-th event, with the base case $\Lambda^{*(-1)}(0) = 0$. We emphasize that the natural support domain of $\Lambda^{*(n)}(t)$ is $(t_n, \infty)$ because, during inference, the subsequent event $t_{n+1}$ is unknown and could theoretically occur indefinitely far in the future. During training, $t_{n+1}$ is observed, allowing us to cleanly truncate $\Lambda^{*(n)}(t)$ to $(t_n, t_{n+1}]$ via the indicator function, subsequently concatenating $\Lambda^{*(n)}(t)$ and $\Lambda^{*(n+1)}(t)$ to reconstruct the exact global CCIF $\Lambda^*(t)$.

\section{Failures of Monotone Neural Networks}
\label{analysis}

\begin{figure}[t]
\centering
    \includegraphics[width=\linewidth]{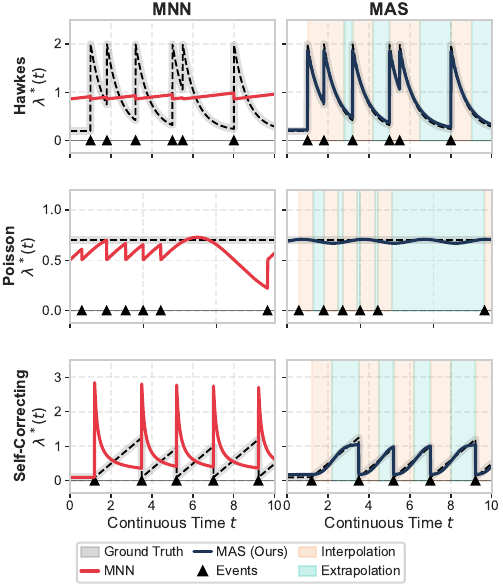}
        \caption{Performance comparison of MNNs and MAS on three classical TPPs (Hawkes, Poisson, Self-Correcting). The left column shows the three MNN deadlocks—Convexity Restriction, Saturation Limit, and Violations of CCIF Requirements—while the right column shows that MAS overcomes these limitations and accurately reconstructs the ground-truth dynamics.}
    \label{fig:demo_trilemma}
\end{figure}

In this section, we investigate the principles of using MNNs for CCIF modeling. We show that positive-weighted MLPs restrict MNN expressiveness and prevent them from modeling complex TPP dynamics. 

\subsection{MNNs for TPP Modeling}
\label{sec:mnn_principles}

We first explain the MNN-based parameterization of CCIF.
Specifically, an $L$-layer MNN $\Lambda^*(t)$ can be expressed as:
$$
\begin{aligned}
    &\boldsymbol{h}^{(1)} = \sigma\!\left(\boldsymbol{W}^{(1)} t + \boldsymbol{b}^{(1)}\right), \\
    &\boldsymbol{h}^{(l)} = \sigma\!\left(\boldsymbol{W}^{(l)} \boldsymbol{h}^{(l-1)} + \boldsymbol{b}^{(l)}\right), \quad l=2,\dots,L-1, \\
    &\Lambda^*(t) = \boldsymbol{W}^{(L)} \boldsymbol{h}^{(L-1)} + \boldsymbol{b}^{(L)},
\end{aligned}
$$
where $t$ is the elapsed time, $\sigma(\cdot)$ is the element-wise activation function, $\boldsymbol{W}^{(l)}$ and $\boldsymbol{b}^{(l)}$ are the connection weights and biases, respectively. The corresponding CIF is derived as:
\begin{equation}
\label{equ: MNN CIF}
        \lambda^*(t)
    = \frac{\partial \Lambda^*(t)}{\partial t}
    = \boldsymbol{W}^{(L)}
    \left(
    \prod_{l=L-1}^{1}
    \mathrm{diag}\!\left[\sigma'(\boldsymbol{z}^{(l)})\right]
    \boldsymbol{W}^{(l)}
    \right).
\end{equation}

The core motivation behind using MNNs for modeling CCIF is grounded in the following theorem. 
\begin{theorem}
\label{MNN Monotonicity}
If all the model weights are strictly constrained to be non-negative ($\boldsymbol{W}^{(l)} \ge 0, \forall l$) and the activation function is non-decreasing ($\sigma'(x) \ge 0, \forall x$), then the resulting derivative is guaranteed to be non-negative: 
$\lambda^*(t)\ge 0.$
\end{theorem}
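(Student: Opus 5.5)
The argument is a direct sign analysis of the chain-rule expression \cref{equ: MNN CIF}. Write $\boldsymbol{z}^{(1)}=\boldsymbol{W}^{(1)}t+\boldsymbol{b}^{(1)}$ and $\boldsymbol{z}^{(l)}=\boldsymbol{W}^{(l)}\boldsymbol{h}^{(l-1)}+\boldsymbol{b}^{(l)}$ for the pre-activations. I would first confirm that \cref{equ: MNN CIF} is the correct derivative, by induction on the depth. The base case is $\partial\boldsymbol{h}^{(1)}/\partial t=\mathrm{diag}[\sigma'(\boldsymbol{z}^{(1)})]\boldsymbol{W}^{(1)}$. The inductive step is $\partial\boldsymbol{h}^{(l)}/\partial t=\mathrm{diag}[\sigma'(\boldsymbol{z}^{(l)})]\boldsymbol{W}^{(l)}\,\partial\boldsymbol{h}^{(l-1)}/\partial t$. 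The output layer contributes a final left factor $\boldsymbol{W}^{(L)}$, and the biases drop out because they do not depend on $t$.

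Next I would show, again by induction on $l$, that each Jacobian vector $\boldsymbol{J}^{(l)}:=\partial\boldsymbol{h}^{(l)}/\partial t$ is entrywise non-negative.
\begin{itemize}
\item For $l=1$, every entry of $\boldsymbol{J}^{(1)}$ is $\sigma'(z^{(1)}_i)W^{(1)}_{i}$. This is a product of two non-negative numbers by hypothesis, so it is non-negative.
\item For the inductive step, suppose $\boldsymbol{J}^{(l-1)}\ge 0$. Then $\boldsymbol{W}^{(l)}\boldsymbol{J}^{(l-1)}$ is a matrix--vector product with non-negative entries on both sides. Each of its coordinates is a sum of non-negative products, so it is $\ge 0$. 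Left-multiplying by the diagonal matrix with non-negative diagonal $\sigma'(\boldsymbol{z}^{(l)})$ scales each coordinate by a non-negative number, which preserves non-negativity.
\end{itemize}
Finally, $\lambda^*(t)=\boldsymbol{W}^{(L)}\boldsymbol{J}^{(L-1)}$ is an inner product of a non-negative row with a non-negative column, hence $\lambda^*(t)\ge 0$. Equivalently, the set of entrywise non-negative matrices is closed under multiplication, and every factor in \cref{equ: MNN CIF} belongs to it.

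There is no real obstacle. The only point needing care is differentiability of $\sigma$, for instance ReLU at $0$. There I would interpret $\sigma'$ as a one-sided derivative or subgradient, which still lies in $[0,\infty)$ for a non-decreasing $\sigma$. The same inductive argument then goes through for one-sided derivatives, and this is consistent with the left-hand derivative used in \cref{CCIF-likelihood}.
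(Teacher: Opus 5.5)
Your proof is correct and is essentially the paper's argument: both apply the chain rule to obtain \cref{equ: MNN CIF} and observe that every factor is entrywise non-negative, so the product $\lambda^*(t)$ is non-negative. Your explicit inductions and the remark on one-sided derivatives at kinks such as ReLU's origin only add rigor. The one-sided chain rule is valid there because each pre-activation is itself non-decreasing in $t$.
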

While \cref{MNN Monotonicity} ensures MNN's monotonicity, it simultaneously traps the model in three structural deadlocks, as discussed in the following subsection.

\subsection{Three Structural Deadlocks of MNNs}
\label{sec:mnn_limitations}

To construct a practical MNN, designers must make choices regarding the non-decreasing activation function $\sigma(\cdot)$ in \cref{MNN Monotonicity}. However, we demonstrate that these choices force the MNN into one of three structural deadlocks: the Convexity Restriction, the Saturation Limit, or the Violations of CCIF Requirements.

\subsubsection{Convexity Restriction}

First, when equipped with standard convex activation functions, the resulting MNN becomes strictly convex, preventing the model from capturing any decaying intensity patterns. 
While unbounded and convex activations like ReLU or Softplus are favored by modern deep learning paradigms, these common activation functions conflict with the MNN architecture in \cref{MNN Monotonicity}. 
\begin{theorem}
\label{thm: convexity trap}
Let the CCIF be parameterized by an MNN mapping $t \to \Lambda^*(t)$. If the activation function is convex ($\sigma''(x) \ge 0$) and the weights are non-negative ($\boldsymbol{W}^{(l)} \ge 0$), then the CCIF is convex globally:
$
 \Lambda^{*\prime\prime}(t) = \lambda^{*\prime}(t) \ge 0, \quad \forall t.
$
\end{theorem}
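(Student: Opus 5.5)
The plan is to prove a stronger statement by induction on the layers: every hidden unit $h^{(l)}_j(t)$, viewed as a function of the scalar input $t$, is convex and non-decreasing. The claim about $\Lambda^*$ then follows from the final affine layer. Working with function-level convexity rather than differentiating the product formula in \cref{equ: MNN CIF} has two advantages. It avoids tracking second derivatives of products of Jacobians, and it also covers activations such as ReLU, which are not twice differentiable everywhere; there $\sigma''\ge 0$ is read as convexity of $\sigma$.

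\textbf{Base case.} Each pre-activation $z^{(1)}_j(t)=W^{(1)}_j t+b^{(1)}_j$ is affine in $t$ with $W^{(1)}_j\ge 0$, so it is convex and non-decreasing. Composing it with $\sigma$ keeps both properties. Composition with an affine map preserves convexity of $\sigma$, and monotonicity is preserved because $\sigma$ is non-decreasing and $W^{(1)}_j\ge 0$. Hence each $h^{(1)}_j$ is convex and non-decreasing.

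\textbf{Inductive step.} Suppose every $h^{(l-1)}_i$ is convex and non-decreasing.
\begin{itemize}
\item The pre-activation $z^{(l)}_j=\sum_i W^{(l)}_{ji}h^{(l-1)}_i+b^{(l)}_j$ is a non-negative combination of convex, non-decreasing functions plus a constant, because $\boldsymbol W^{(l)}\ge 0$. It is therefore convex and non-decreasing.
\item For the composition $\sigma\circ z^{(l)}_j$, apply the standard fact that a convex non-decreasing outer function composed with a convex inner function is convex. Concretely, for $\theta\in[0,1]$, convexity of $z$ gives $z(\theta s+(1-\theta)u)\le\theta z(s)+(1-\theta)z(u)$. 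Monotonicity of $\sigma$ then gives $\sigma(z(\theta s+(1-\theta)u))\le\sigma(\theta z(s)+(1-\theta)z(u))$, and convexity of $\sigma$ bounds this by $\theta\sigma(z(s))+(1-\theta)\sigma(z(u))$.
\item Monotonicity of the composition is immediate, since $\sigma$ and $z^{(l)}_j$ are both non-decreasing.
\end{itemize}
This step is the crux of the argument. The two hypotheses do separate jobs: $\sigma'\ge 0$, assumed in \cref{MNN Monotonicity}, lets the outer function preserve the convexity inequality, and $\boldsymbol W\ge 0$ keeps the inner combination convex.

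\textbf{Conclusion.} The output $\Lambda^*(t)=\boldsymbol W^{(L)}\boldsymbol h^{(L-1)}+b^{(L)}$ is again a non-negative combination of convex functions, so it is convex on $\mathbb R$. Where $\Lambda^*$ is twice differentiable, convexity is equivalent to $\Lambda^{*\prime\prime}(t)=\lambda^{*\prime}(t)\ge 0$. At the finitely many kinks introduced by a piecewise-smooth $\sigma$, convexity means $\lambda^*$ is non-decreasing, which is the intended reading of the statement.

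As a sanity check in the smooth case, the result can also be obtained by differentiating directly. One has $\partial_t h^{(l)}=\mathrm{diag}[\sigma'(z^{(l)})]\,\boldsymbol W^{(l)}\partial_t h^{(l-1)}$, and each second-derivative term is a sum of products of non-negative factors $\sigma''$, $\sigma'$, $\boldsymbol W$ and $(\partial_t z)^2$. Every term is non-negative, so $\lambda^{*\prime}(t)\ge 0$.
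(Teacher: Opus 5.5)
Your proof is correct and matches the paper's argument: induction over layers, using that non-negative combinations of convex functions are convex and that a convex non-decreasing activation composed with a convex function stays convex, with $\sigma'\ge 0$ taken from the monotonicity theorem. Tracking monotonicity of the hidden units is not needed for the convexity conclusion but does no harm, and your explicit handling of non-smooth activations such as ReLU is slightly more careful than the paper's.
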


\cref{thm: convexity trap} reveals a severe side-effect: positive linear combinations of convex functions preserve $\Lambda^*(t)$'s convexity strictly.
If common convex activation functions are applied in MNNs, the magnitude of the CIF $\lambda^*(t)$ can only monotonically increase or remain completely flat. 
This severely limits the model's expressive power. 
In real-world dynamics, such as an exponentially decaying Hawkes process, the CIF spikes after an event occurrence and then smoothly decays (i.e., $\lambda^{*\prime}(t) < 0$). An MNN bounded by \cref{thm: convexity trap} inherently fails to fit such a decreasing CIF, regardless of its depth or width.

\subsubsection{Saturation Limit}


Second, to bypass the aforementioned convexity restriction, an intuitive approach is to replace ReLU with S-shaped activations (e.g., Tanh or Sigmoid), which offer concave regions ($\sigma'' < 0$), thus permitting the derivative $\lambda^*(t)$ to decline. Unfortunately, this exposes the MNN to the following saturation limit. 
\begin{theorem}
\label{thm: saturation limit}
Let $\sigma(\cdot)$ be bounded such that $\sigma(x) \in [c_{\min}, c_{\max}]$. Assuming finite network weights ($\boldsymbol{W}^{(l)} < \infty$), there exists a rigorous finite upper bound $M < \infty$ for the generated CCIF:
$
\lim_{t \to \infty} \Lambda^*(t) \le M < \infty.
$
\end{theorem}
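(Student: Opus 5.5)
The plan is to bound the final affine layer of the MNN using only the boundedness of the last hidden representation. Depth and the inner structure of the network play no role; only the outermost layer matters. Concretely, we have
\[
\Lambda^*(t) = \boldsymbol{W}^{(L)} \boldsymbol{h}^{(L-1)}(t) + \boldsymbol{b}^{(L)},
\]
and every coordinate of $\boldsymbol{h}^{(L-1)}(t)$ is an output of $\sigma$. So for every $t$, and regardless of the values of $\boldsymbol{W}^{(1)},\dots,\boldsymbol{W}^{(L-1)}$ or of the earlier hidden layers, each coordinate satisfies $h^{(L-1)}_j(t) \in [c_{\min}, c_{\max}]$.

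First, use the finite-weight assumption to define the constant
\[
M := \sum_j \max\bigl(W^{(L)}_j c_{\max},\, W^{(L)}_j c_{\min}\bigr) + b^{(L)},
\]
which is finite because $\boldsymbol{W}^{(L)}$ has finitely many finite entries and $b^{(L)}$ is finite. In the MNN setting of \cref{MNN Monotonicity}, $\boldsymbol{W}^{(L)} \ge 0$, so this simplifies to $M = c_{\max}\sum_j W^{(L)}_j + b^{(L)}$. The general form is stated anyway so the argument does not depend on the sign constraint. Bounding each term $W^{(L)}_j h^{(L-1)}_j(t)$ from above by $\max(W^{(L)}_j c_{\max}, W^{(L)}_j c_{\min})$ then gives $\Lambda^*(t) \le M$ for all $t$.

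Second, justify writing $\lim_{t\to\infty}$. By \cref{MNN Monotonicity}, $\Lambda^*$ is non-decreasing in $t$, since $\lambda^* \ge 0$. A non-decreasing function that is bounded above has a finite limit as $t\to\infty$, namely its supremum, and that supremum is at most $M$. This yields $\lim_{t\to\infty}\Lambda^*(t) \le M < \infty$. If monotonicity were not available, the same uniform bound would still give $\limsup_{t\to\infty}\Lambda^*(t) \le M$.

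The only delicate point is interpretive rather than technical. We must make sure the bound is uniform in $t$, which holds because boundedness of $\sigma$ is used pointwise at the last layer and the earlier layers never enter. We must also observe that ``finite weights'' is needed only for $\boldsymbol{W}^{(L)}$ and $\boldsymbol{b}^{(L)}$. As a closing remark, the conclusion implies $\int_0^\infty \lambda^*(t)\,dt < \infty$. Such a CCIF therefore cannot represent processes with infinitely many events over an unbounded horizon, for example a homogeneous Poisson process, whose CCIF $\Lambda^*(t) = \mu t$ grows without bound.
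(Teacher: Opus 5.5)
Your proof is correct and follows essentially the same route as the paper: bound $\boldsymbol{h}^{(L-1)}$ using the boundedness of $\sigma$, then bound the final affine layer $\boldsymbol{W}^{(L)}\boldsymbol{h}^{(L-1)}+\boldsymbol{b}^{(L)}$ (both arguments also tacitly need finite biases). Your version is slightly tighter in two ways. First, you skip the paper's layer-by-layer induction, which is unnecessary because the last hidden layer is directly an image of $\sigma$. Second, you invoke the monotonicity from \cref{MNN Monotonicity} to obtain an actual limit, whereas the paper's proof only concludes $\limsup_{t\to\infty}\Lambda^*(t)\le M$.
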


As is implied in \cref{thm: saturation limit}, non-convex activation functions force the CCIF $\Lambda^*(t)$ into an enforced upper limit $M$, precluding the mechanism from modeling persistent event streams. 
For any ordinary sequence demonstrating a baseline rate $\mu > 0$, the mathematically correct total expected events must approach infinity over time ($\lim_{t \to \infty} \Lambda^*(t) = \infty$). 
Consequently, Tanh- or Sigmoid-based MNNs fundamentally fail to represent a stationary long-term trend; the learned intensity $\lambda^*(t)$ will invariably exhaust itself and force the predicted sequence to permanently freeze. 

\subsubsection{Violations of CCIF Requirements}
Since using either convex or bounded, non-convex activation functions leads to the shortcomings of MNN-based $\Lambda^*(t)$, the remaining option is to apply a non-convex transformation $t\rightarrow F(t)$ to time 
$t$ in \cref{equ: MNN CIF} in advance, yielding:
$$
\Lambda^*(t) = \mathrm{MNN}(F (t)),
$$
where $\mathrm{MNN}(\cdot)$ is a positive weighted MLP.
For example, FullyNN \citep{omi2019fully} leverages a log-transformation $F(t) = \log(t)$. Because $\log t$ is intrinsically concave, this pre-processing step avoids the Convexity Restriction without requiring the $\mathrm{MNN}(\cdot)$ itself to possess non-convex activations. 

Nevertheless, this mathematical tactic still introduces inevitable structural flaws. It violates the boundary condition of CCIF and restricts the expressive power of the model:
\begin{itemize}
    \item \textbf{Violation of the Initial Physical Condition.} Pre-transforming $t$ permanently introduces an unavoidable intercept at $t=0$, causing inaccurate NLL optimization. 
    By definition, the cumulative integration at exact time zero must be zero: $\Lambda^*(0) = \int_0^0 \lambda^*(\tau) d\tau = 0$. However, evaluating the deformed CCIF sequence at $t \to 0^+$ yields a limit breakdown:
    $$
    \Lambda^*(0) = \lim_{t \to 0^{+}} \mathrm{MNN}(\log t) = \mathrm{MNN}(-\infty)\neq 0.
    $$
    Because MNN is a nested composition of affine layers and non-negative activations, its value at $-\infty$ is unpredictable and can only be approximated during initialization. This inevitably leaves a persistent positive intercept $\Lambda^*(0) = C \neq 0$. Such a non-zero starting point directly interferes with the exactness of the likelihood computation in \cref{eq: decomposes}.
    
    \item \textbf{Enforced Heuristic Decay Constraints.} Additionally, deforming time $t$ restricts the intrinsic trend of the derived CIF, severely degrading the network's capability to model diverse TPP behaviors. 
    According to the chain rule, CIF $\lambda^*(t)$ is tied to the derivative of $\log(t)$:
    $$
    \lambda^*(t) = \frac{\partial \mathrm{MNN}(\log t)}{\partial (\log t)} \cdot \frac{1}{t}=O(t^{-1}).
    $$
    This mathematical entanglement dictates an $O(t^{-1})$ deterministic decay pattern acting as a global multiplier. While this $1/t$ allows the MNN to roughly approximate TPPs with decaying CIFs, it contradicts other basic TPP behaviors. For instance, in a Self-Correcting process, the CIF accumulates and increases over time between events. 
    The inherent $1/t$ penalty suppresses this upward trend, making the model poorly suited for ascending dynamics. 
\end{itemize}

Consequently, MNN-based CCIF methods face three fundamental deadlocks. As shown in \cref{fig:demo_trilemma}, continuous-time MNNs inevitably suffer from expressive limitations and boundary violations, regardless of activation design or time deformation. To overcome this bottleneck, MAS replaces MNNs with piecewise monotone splines for explicit CCIF parameterization, yielding a more general and flexible framework. 

\section{Methodology} \label{method}
In this section, we introduce a novel CCIF modeling method: MAS, and detail its implementation.
For complex TPPs, the exact CCIF after event $t_n$, $\Lambda^{*(n)}(t)$, can be highly intricate across the interval $[t_n, \infty)$. Directly fitting this complete trajectory using a single elementary function or a rigid MNN is generally suboptimal. 

Consequently, a natural idea is to divide the entire CCIF $\Lambda^{*(n)}(t)$ into smaller segments that are easier to fit. Inspired by this idea, we propose a flexible parameterization framework termed \textbf{MAS} to model the CCIF. MAS consists of two parts: an \textbf{interpolation} component to fit the recent CCIF dynamics right after an event, and an \textbf{extrapolation} component to model the long-term CCIF tail. 
Formally, MAS models the CCIF after the $n$-th event as: 
\begin{equation}
\label{eq:MAS_for_CCIF}
\Lambda_\theta^{*(n)}(t)=
\underbrace{\sum_{m=1}^M {I}_{(w^{(n)}_{m-1},w^{(n)}_m]}(t) f^{*(n)}_{m,\theta}(t)}_{\text{MAS interpolation}}
+
\underbrace{{I}_{(w^{(n)}_{M},\infty)}(t) g^{*(n)}_\theta(t)}_{\text{MAS extrapolation}},
\end{equation}
where the interval $(t_n, \infty)$ is partitioned into $M+1$ contiguous segments. The first $M$ segments form the interior $(w^{(n)}_0, w^{(n)}_1], \dots, (w^{(n)}_{M-1}, w^{(n)}_M]$, with the first knot starting exactly at the event time $w^{(n)}_0 = t_n$. The $m$-th piecewise function $f^{*(n)}_{m,\theta}(t)$ models the MAS interpolation within its respective interval, while the function $g^{*(n)}_\theta(t)$ represents the MAS extrapolation over $(w^{(n)}_M, \infty)$. The vector $\theta$ encapsulates the underlying neural network parameters driving the model. 

\subsection{MAS Interpolation}
\label{sec:mas_interpolation}

The MAS interpolation directly determines the near-term variation trend of the CCIF. By segmenting $\Lambda_\theta^{*(n)}(t)$,  the MAS interpolation provides high flexibility. However, because $\Lambda_\theta^{*(n)}(t)$ is the integral of a positive CIF, a valid MAS interpolation must satisfy the following requirements:

\textbf{(1) Global Monotonicity:} Since the intensity $\lambda^*(t) \ge 0$, $\Lambda_\theta^{*(n)}(t)$ must be non-decreasing. For any $t_n < t_1 \le t_2 < \infty$, it must hold that $\Lambda_\theta^{*(n)}(t_1) \le \Lambda_\theta^{*(n)}(t_2)$.

\textbf{(2) $C^1$-Continuity Between Timestamps:} For any continuous interval strictly between two events $t_n < t < t_{n+1}$, $\Lambda^*(t)$ must be continuous and possess a continuous derivative $\Lambda^{*\prime}(t)$. 

\textbf{(3) $C^0$-Continuity At Timestamps:} At each timestamp $t_n$, the CCIF is continuous ($\Lambda^{*}(t_n^-) = \Lambda^{*}(t_n^+)$), but its derivative is permitted to be discontinuous ($\Lambda^{*\prime}(t_n^-) \neq \Lambda^{*\prime}(t_n^+)$). 

The second and third conditions ensure that the CIF remains continuous between any two consecutive timestamps but may exhibit jumps at the timestamps, which is a core characteristic of many history-dependent TPPs, such as the Hawkes process and the Self-Correcting process.
To ensure the generated $\Lambda_\theta^{*(n)}(t)$ perfectly satisfies monotonicity and $C^1$-continuity within $(t_n, t_{n+1})$, the adjacent functions in MAS must be seamlessly connected at the partition knots (i.e., $w^{(n)}_1, \dots, w^{(n)}_M$) in both function values and first derivatives:
\begin{equation}
\label{eq:conditions_for_MAS}
\resizebox{0.9\linewidth}{!}{$
\begin{aligned}
    f^{*(n)}_{m,\theta}(w^{(n)}_{m}) = f^{*(n)}_{m+1,\theta}(w^{(n)}_{m}),& 
    f^{*(n)\prime}_{m,\theta}(w^{(n)}_{m}) = f^{*(n)\prime}_{m+1,\theta}(w^{(n)}_{m}), \\
    f^{*(n)}_{M,\theta}(w^{(n)}_{M}) = g_{\theta}^{*(n)}(w^{(n)}_{M}),& 
    f^{*(n)\prime}_{M,\theta}(w^{(n)}_{M}) = g_{\theta}^{*(n)\prime}(w^{(n)}_{M}).
\end{aligned}$}
\end{equation}

Piecewise monotone splines perfectly satisfy these requirements, because their analytical forms are directly parameterized by the endpoint values and derivatives. Taking the Rational Quadratic Spline (RQS) as an example, given an interval $[w_{m-1}, w_{m}]$, boundary values $\{y_{m-1},y_m\}$, and boundary derivatives $\{\delta_{m-1},\delta_{m}\}$, the $m$-th piecewise RQS is:
\begin{equation}
\label{eq:RQS_example}
f_{m}(t) = y_{m-1} + \frac{\left(y_{m} - y_{m-1}\right) \left[s_m \tau^2 + \delta_{m-1} \tau (1 - \tau)\right]}{s_{m} + \left(\delta_{m} + \delta_{m-1} - 2s_{m}\right) \tau (1 - \tau)},
\end{equation}
where $s_m=\frac{y_{m}-y_{m-1}}{w_m-w_{m-1}}$ and $\tau=\frac{t-w_{m-1}}{w_m-w_{m-1}} \in [0, 1]$. By substituting $\tau = 0$ (i.e., $t = w_{m-1}$) and $\tau = 1$ (i.e., $t = w_{m}$) into Eq. \ref{eq:RQS_example} and its corresponding derivative, we obtain:
\begin{equation}
\label{eq:RQS_property}
\begin{aligned}
    f_{m}(w_{m-1}) &= y_{m-1}, \quad &f_{m}(w_{m}) &= y_{m}, \\
    f'_{m}(w_{m-1}) &= \delta_{m-1}, \quad &f'_{m}(w_{m}) &= \delta_{m}.
\end{aligned}
\end{equation}
This implies that an RQS spline piece is uniquely determined by its boundary constants $(y_{m-1}, y_m)$ and derivatives $(\delta_{m-1}, \delta_{m})$. Therefore, as long as adjacent splines share the same structural parameters $\{(w_m, y_m, \delta_m)\}_{m=0}^M$, they are connected with exact $C^1$-continuity. Consequently, all existing monotone splines can act as the basis for MAS interpolation. In Appendix B, we discuss several specific variations and their implementation.

\begin{figure}[t]
\includegraphics[clip,width=0.9\linewidth]{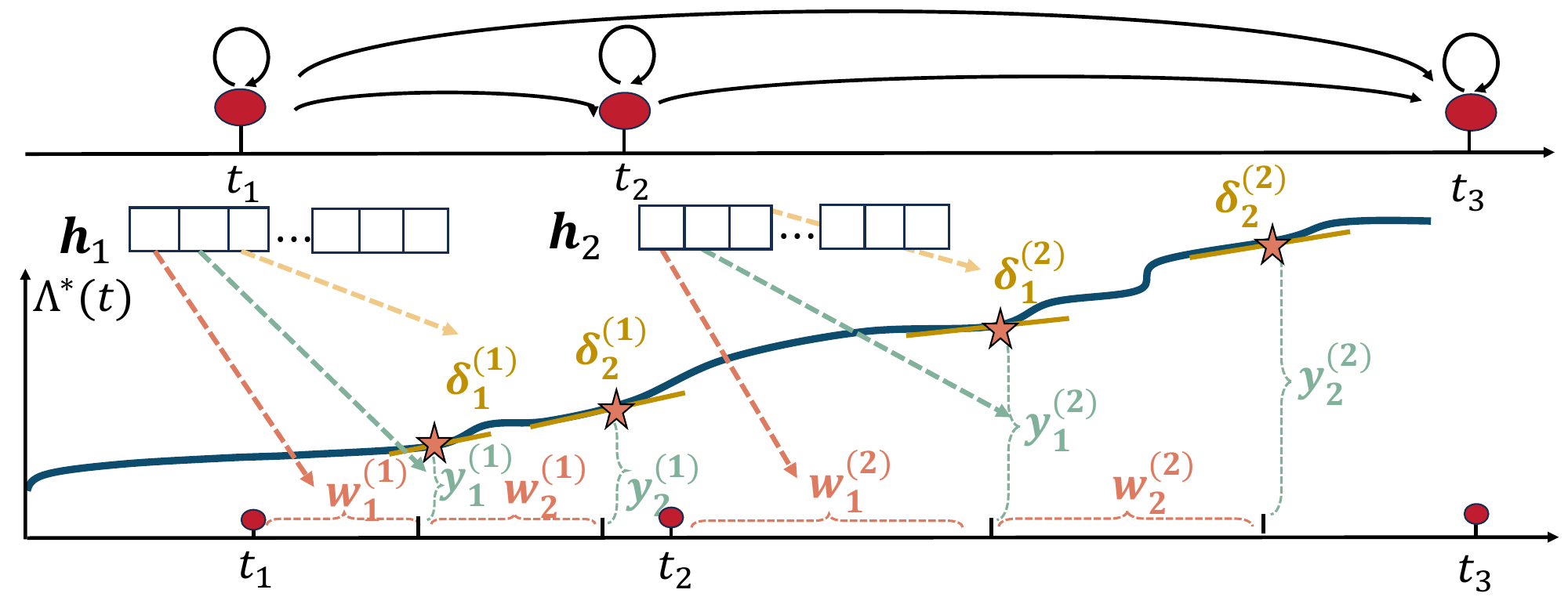}
    \caption{Implementation of MAS interpolation. The history encoder maps past events into a hidden representation, from which an MLP predicts positive interval widths, cumulative increments, and knot derivatives; cumulative summation then constructs an MAS interpolation.}
    \label{fig: Parametrization of Monotone Splines}
\end{figure}

\textbf{Parameterization of MAS Interpolation}:
Next, we discuss how the MAS interpolation is implemented. 
In practice, the defining parameters of MAS—the interpolation knots $\{w^{(n)}_m\}_{m=1}^M$, cumulative function values $\{y^{(n)}_m\}_{m=1}^M$, and derivatives $\{\delta^{(n)}_m\}_{m=1}^M$—are all unknown and must be derived from historical data using autoregressive or recurrent encoders.
Specifically, at timestamp $t_n$, we employ an encoder (e.g., a Transformer \citep{simiao2020transformer}, parameterized by $\theta_1$) to compress the event history $\mathcal{H}_{t_n}$ into an embedding $\mathbf{h}_n$. Then, an MLP (parameterized by $\theta_2$) followed by a Softplus activation function maps $\mathbf{h}_n$ to strictly positive step intervals ($\Delta w_m, \Delta y_m, \delta_m$). The knots are obtained cumulatively:
$$
\begin{aligned}
    w^{(n)}_m = w^{(n)}_{m-1} + \Delta w_m,\ \  y^{(n)}_m = y^{(n)}_{m-1} + \Delta y_m.
\end{aligned}
$$
To ensure the seamless concatenation of consecutive CCIF sequences at event boundaries (the $C^0$-continuity requirement), we set $w^{(n)}_0 = t_n$ and $y^{(n)}_0 = \Lambda^{*(n-1)}(t_n)$. For adjacent interpolation segments, the network only predicts a single common derivative $\delta^{(n)}_{m}$ at knot $w^{(n)}_m$, natively forcing $f^{*(n)}_{m,\theta}$ and $f^{*(n)}_{m+1,\theta}$ to share the identical gradient.

\textbf{Fitting Ability of MAS Interpolation}: 
\textcolor{black}{
By introducing piecewise monotone splines, the fitting capability of MAS is significantly enhanced. Unlike MNNs, the theoretically superior expressiveness of MAS can be rigorously supported. Specifically, shorter interpolation intervals allow the MAS interpolation to approximate the ground-truth $\Lambda^{*(n)}(t)$ more accurately. }
Formally, we obtain the following theorem. 
\begin{theorem}
\label{thm: accuracy of monotone splines}
Suppose the MAS interpolation support can extend to infinity, i.e., \( w^{(n)}_{M} \to \infty \). Then, for any ground-truth \( \Lambda^*(t) \), there exists an MAS representation $\Lambda_\theta^*(t) = \sum_{n =0}^N \mathcal{I}_{(t_n, t_{n+1}]} \Lambda_\theta^{*(n)}(t)$, such that: 
\begin{equation}
    \Vert\Lambda^*(t)-\Lambda_\theta^*(t)\Vert_0\le \frac{1}{2}C_0\Delta^2, 
\end{equation}
where \( \Vert f(t) \Vert_0 = \underset{t>0}{\sup} \vert f(t) \vert \), \( \Delta \) is the maximum interpolation interval, and \( C_0 \) is a constant independent of the data distribution. 
Furthermore, there exists an $\Lambda_\theta^*(t)$, such that, 
\begin{equation}
    \Vert\Lambda^{*\prime}(t)-\Lambda_\theta^{*\prime}(t)\Vert_0 \le C_0\Delta,
\end{equation}
where, at the timestamp $t_n$, the above derivatives are taken as the left-hand derivatives. 
\end{theorem}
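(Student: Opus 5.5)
We would prove the statement constructively: build an explicit MAS representation by Hermite-type interpolation of the ground truth on each inter-event interval, then bound the error one spline piece at a time. Throughout we assume what the constant $C_0$ implicitly requires, namely that on each open interval $(t_n,t_{n+1})$ the ground-truth $\Lambda^{*(n)}$ is $C^2$ with $\sup|\Lambda^{*\prime\prime}|\le C_0$, i.e.\ $\lambda^*$ is Lipschitz between events with constant $C_0$.

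\textbf{Step 1: knots and nodal data.} Fix $n$ and choose knots $t_n=w^{(n)}_0<w^{(n)}_1<\cdots$ with $w^{(n)}_m-w^{(n)}_{m-1}\le\Delta$. Since $w^{(n)}_M\to\infty$ is allowed, this partition covers $(t_n,t_{n+1}]$, and we may also insert $t_{n+1}$ itself as a knot. At every knot we prescribe the exact values and derivatives:
\begin{equation*}
y_m=\Lambda^{*(n)}\bigl(w^{(n)}_m\bigr),\qquad \delta_m=\lambda^{*}\bigl(w^{(n)}_m\bigr),
\end{equation*}
where one-sided limits are used at the event times. These data are admissible for the parameterization. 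Monotonicity gives $y_m\ge y_{m-1}$ and $\delta_m\ge 0$. If some increment is zero we perturb it by an arbitrarily small amount, which does not affect the bounds.

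\textbf{Step 2: continuity of the construction.} We take $y_0=\Lambda^{*(n-1)}(t_n)$. By \cref{eq:RQS_property}, adjacent spline pieces then join with $C^1$ continuity at interior knots and with $C^0$ continuity at event times. So the constructed $\Lambda^*_\theta$ is a valid MAS function.

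\textbf{Step 3: per-piece derivative bound.} Write $e=\Lambda^{*(n)}-f_m$ on $[w_{m-1},w_m]$, an interval of length $h\le\Delta$. By construction $e$ vanishes at both endpoints, and $e'$ vanishes at both endpoints as well. By Rolle's theorem, $e'$ has a zero at some interior point $\xi$. We then integrate $e''$ from the nearest zero of $e'$. Because $e'$ vanishes at both endpoints, every point lies within distance $h/2$ of a zero of $e'$, which gives
\begin{equation*}
|e'(t)|\le \sup|e''|\cdot \frac{h}{2}.
\end{equation*}
Integrating once more from the nearest endpoint, where $e=0$, gives
\begin{equation*}
|e(t)|\le \sup|e''|\cdot \frac{h^2}{8}.
\end{equation*}

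\textbf{Step 4: conclusion.} Choosing the constant appropriately, namely absorbing the bound on $\sup|e''|$ described next into $C_0$, yields the claimed $\tfrac12 C_0\Delta^2$ and $C_0\Delta$ bounds. Taking the supremum over $m$ and over $n$ finishes the proof. Left-hand derivatives at $t_n$ are handled by the inclusion of $t_{n+1}$ as a knot in Step 1, which makes $f'_m=\lambda^*$ exactly there.

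\textbf{Main obstacle.} The hard part is controlling $\sup|f_m''|$ for the rational spline, which is needed because $e''=\Lambda^{*\prime\prime}-f_m''$. An RQS interpolant's curvature is not automatically bounded by that of the target. The cleanest route is to compute $f_m''$ from \cref{eq:RQS_example} and bound it in terms of $|\delta_{m-1}-s_m|/h$ and $|\delta_m-s_m|/h$. By the mean value theorem, each of these is at most $C_0$ times a constant, since $s_m=\lambda^*(\eta)$ for some $\eta$ in the piece. The denominator $s_m+(\delta_m+\delta_{m-1}-2s_m)\tau(1-\tau)$ is bounded below by $\min(s_m,\delta_{m-1},\delta_m)$-type quantities, which is delicate when $\lambda^*$ is near zero. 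We would first try to restrict to the case $\lambda^*\ge\underline\lambda>0$ and let $C_0$ depend on $\|\lambda^{*\prime}\|_\infty/\underline\lambda$. Alternatively, we would switch to monotone cubic Hermite pieces, for which $f_m''$ is linear in $\tau$ with coefficients $O(C_0)$ directly.
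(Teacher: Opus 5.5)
Your proposal follows the paper's proof, and then proves the one step the paper assumes. The paper also matches $\Lambda^*$ and $\Lambda^{*\prime}$ at both knots of each piece. It writes $|\Lambda^{*\prime}_\theta(t)-\Lambda^{*\prime}(t)|\le|\Lambda^{*\prime}_\theta(t)-\Lambda^{*\prime}_\theta(a)|+|\Lambda^{*\prime}(t)-\Lambda^{*\prime}(a)|\le C_0|t-a|$ and integrates once more to get $\tfrac12C_0\Delta^2$. Your nearest-endpoint version only sharpens the constants to $h/2$ and $h^2/8$.

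The difference is the step you call the main obstacle. The paper's proof never bounds $f_m''$. Its Assumption~1 simply postulates that the MAS interpolant, like the ground truth, has a derivative that is Lipschitz in $t$ with a uniform constant. Its Assumption~2 separately postulates a floor $v_1$, which corresponds to your $\underline\lambda$. Since the spline depends on the data, its curvature is not controlled a priori. Your route, once the curvature estimate is done, therefore gives the theorem from hypotheses on $\lambda^*$ alone.

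For RQS that estimate works even without the floor. Write $\delta_{m-1}=s_m+a$ and $\delta_m=s_m+b$, with $|a|,|b|\le C_0h$. Then $f_m'=s_m^2P/D^2$, where $P=\delta_m\tau^2+2s_m\tau(1-\tau)+\delta_{m-1}(1-\tau)^2$ and $D=s_m(\tau^2+(1-\tau)^2)+(\delta_{m-1}+\delta_m)\tau(1-\tau)\ge s_m/2$. Moreover $|dP/d\tau|\le2\max(|a|,|b|)$ and $|dD/d\tau|\le|a|+|b|$. Nonnegativity plus the Lipschitz bound force $\max(\delta_{m-1},\delta_m)\le\max(2s_m,\sqrt{2C_0hs_m})$. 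Together these give $|f_m''|\le72C_0$ uniformly.

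Your cubic Hermite fallback is weaker than you suggest. With exact nodal data it is not automatically monotone: if $\lambda^*$ nearly vanishes inside a piece, the quadratic $f_m'$ can dip below zero. There the floor is genuinely required, via $f_m'\ge\underline\lambda-O(C_0\Delta)>0$ once $\Delta$ is small.
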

\cref{thm: accuracy of monotone splines} states that 
MAS fits any ground-truth CIF and CCIF provided that the interpolation intervals are sufficiently small, which implies its fitting performance is comparable to any other TPP parameterization.

\subsection{MAS Extrapolation}
\label{sec:mas_extrapolation}
\textcolor{black}{Next, we discuss another necessary component, MAS extrapolation, and how it is implemented.}
\cref{thm: accuracy of monotone splines} assumes that the MAS interpolation support can extend to infinity, i.e., \( w^{(n)}_{M} \to \infty \). However, in practice, this is not possible. Therefore, an additional monotonic extrapolation function \( g^{*(n)}_\theta(t) \) must be used to model $\Lambda^{*(n)}(t)$ over \( (w^{(n)}_{M},\infty) \). 
It is worth noting that most monotone splines (without any additional tail design) are based on polynomials, which do not guarantee monotonicity outside the interpolation interval. Taking the RQS in \cref{eq:RQS_example} as an example, when \( s^{(n)}_m \) is smaller than \( \delta^{(n)}_{m-1} \) and \( t \) is sufficiently large, \( f^{(n)}_{m}(t) \) becomes monotonically decreasing and can even take negative values. See \cref{fig:differences in MAS and monotone splines} for an illustration.  
Thus, an alternative parameterization is required for \( g^{*(n)}_\theta(t) \) in \cref{eq:MAS_for_CCIF} to ensure the global monotonicity of $\Lambda^{*(n)}(t)$ over \( (w^{(n)}_{M},\infty) \). 

\begin{figure}[t]
\includegraphics[clip,width=0.9\linewidth]{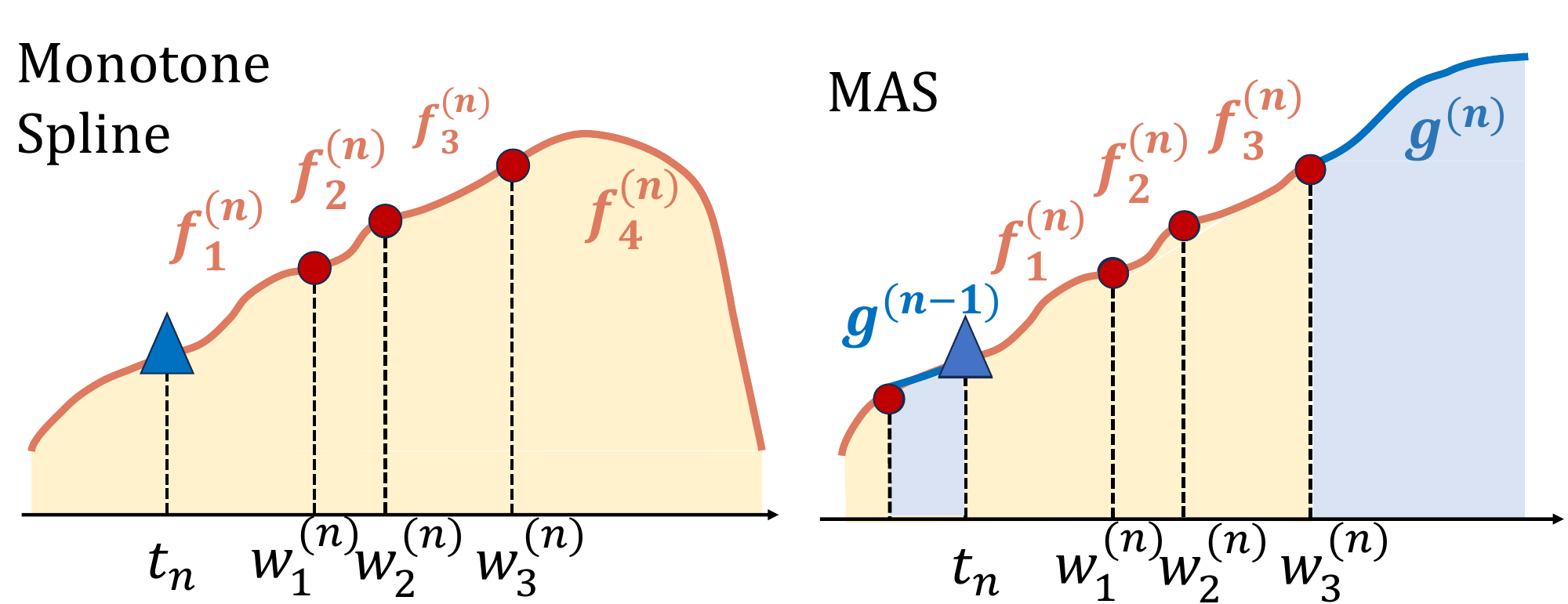}
\caption{Comparison between MAS and a standalone monotone spline. A monotone spline may fail to preserve monotonicity outside its interpolation support, whereas MAS attaches a monotone extrapolation function after the final knot to maintain global monotonicity and smooth connection of the CCIF.}
    \label{fig:differences in MAS and monotone splines}
\end{figure}

In implementation, \( g^{*(n)}_\theta(t) \) can be a linear function \( g^{*(n)}_\theta(t) = a t + b \) with \( a,b > 0 \), or a linear-plus-exponential function \( g^{*(n)}_\theta(t) = c + b t - \exp(-a t) \) with \( a,b,c > 0 \). 
Similar to the MAS interpolation, the parameters $a$, $b$, and $c$ are predicted from the history embedding $\mathbf{h}_n$ by the MLP. Notably, when connecting the interpolation and the extrapolation at the final knot $w^{(n)}_M$, we must guarantee \cref{eq:conditions_for_MAS}. To ensure this smoothness easily, we first determine the extrapolation function $g^{*(n)}_\theta(t)$ from the network. Then, we strictly assign the final knot target as $y^{(n)}_M = g^{*(n)}_{\theta}(w^{(n)}_M)$ and the final derivative target as $\delta^{(n)}_M = g^{*(n)\prime}_{\theta}(w^{(n)}_M)$. According to the properties of monotone splines in \cref{eq:RQS_property}, substituting these targets shapes the final interpolation piece $f_M$, satisfying the strict $C^1$-continuity of $\Lambda^{*(n)}(t)$.

\textbf{Equivalence to Classical TPPs}: 
As an important hyperparameter in \cref{eq:MAS_for_CCIF}, 
a larger \( w^{(n)}_{M} \) increases the contribution of piecewise monotone spline interpolation, enhancing the model's fitting performance. Conversely, a smaller \( w^{(n)}_{M} \) amplifies the influence of the extrapolation component, improving the model's generalization performance (further discussed in \cref{theory}). 
Notably, when \( w^{(n)}_{M} = t_n \), MAS reduces to classical TPP models. 
\begin{corollary}
    A homogeneous Poisson process with intensity \( \mu \) can be expressed by an MAS with linear extrapolation, where \( w^{(n)}_{M} = t_n \) and \( g^{*(n)}(t) = \mu t, t\in[t_n,\infty)\).
\end{corollary}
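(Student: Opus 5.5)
The plan is a direct verification: build the MAS explicitly and check that it reproduces the CCIF of the homogeneous Poisson process. For that process $\lambda^*(t)\equiv\mu$ independently of history, so the ground-truth CCIF is $\Lambda^*(t)=\int_0^t\mu\,d\tau=\mu t$. It therefore suffices to show three things about the MAS in \cref{eq:MAS_for_CCIF} with $w^{(n)}_M=t_n$ and $g^{*(n)}(t)=\mu t$:
\begin{enumerate}
\item it is a valid MAS, i.e.\ it meets requirements (1)--(3) and the matching conditions \cref{eq:conditions_for_MAS};
\item after concatenation via \cref{eq: decomposes} it equals $\mu t$ globally;
\item it induces intensity $\mu$ and hence the Poisson likelihood.
\end{enumerate}

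\textbf{Step 1 (degenerate interpolation).} Setting $w^{(n)}_M=w^{(n)}_0=t_n$ forces every width $\Delta w_m$ to vanish. Every interval $(w^{(n)}_{m-1},w^{(n)}_m]$ is then empty, so the interpolation sum in \cref{eq:MAS_for_CCIF} is identically zero, and $\Lambda^{*(n)}_\theta(t)=I_{(t_n,\infty)}(t)\,g^{*(n)}(t)=\mu t$ on $(t_n,\infty)$. The interior matching conditions in \cref{eq:conditions_for_MAS} are vacuous because no spline pieces exist. The last pair of conditions only assigns the spline targets $y^{(n)}_M=g^{*(n)}(t_n)$ and $\delta^{(n)}_M=\mu$, and these place no restriction on $g^{*(n)}$.

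\textbf{Step 2 (continuity at timestamps and gluing).} The convention $y^{(n)}_0=\Lambda^{*(n-1)}(t_n)$ forces the extrapolation to start at the previous CCIF value. I would check this by induction on $n$.
\begin{itemize}
\item Base case: $\Lambda^{*(-1)}(0)=0$, and $g^{*(0)}(0)=0$.
\item Inductive step: if $\Lambda^{*(n-1)}(t)=\mu t$ on $(t_{n-1},t_n]$, then $\Lambda^{*(n-1)}(t_n)=\mu t_n=g^{*(n)}(t_n)$, which is exactly the $C^0$ requirement (3).
\end{itemize}
Equivalently, one can write the extrapolation in the form $g^{*(n)}(t)=\mu(t-t_n)+\Lambda^{*(n-1)}(t_n)$ and observe that it collapses to $\mu t$. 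Substituting into \cref{eq: decomposes} then gives $\Lambda^*_\theta(t)=\sum_n I_{(t_n,t_{n+1}]}(t)\,\mu t=\mu t$ for all $t\in(0,T]$.

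\textbf{Step 3 (requirements and likelihood).} On each $(t_n,\infty)$ the function $\mu t$ is $C^\infty$ with derivative $\mu\ge 0$. This gives global monotonicity (1) and $C^1$-continuity between timestamps (2). The left-hand derivative at every $t_n$ equals $\mu$. Plugging into \cref{CCIF-likelihood} yields $\sum_n\log\mu-\mu T$, which is the homogeneous Poisson log-likelihood. So the two processes have identical finite-dimensional laws, and the representation is exact.

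\textbf{Expected obstacle.} The only subtle point is the parameter constraint: the linear extrapolation family was introduced as $at+b$ with $a,b>0$, whereas $\mu t$ has zero intercept in absolute time. I would resolve this by noting two things. First, the intercept is not a free parameter; it is pinned by the $C^0$ gluing condition $y^{(n)}_0=\Lambda^{*(n-1)}(t_n)$. Second, in elapsed-time coordinates the intercept equals $\Lambda^{*(n-1)}(t_n)=\mu t_n\ge 0$, which is positive for $n\ge1$. If one insists on strict positivity of the constants, the case $n=0$ is recovered as the limit $b\to0^+$.
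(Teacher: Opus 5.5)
Your proposal is correct and follows the route the paper intends. The paper gives no separate proof, only the remark that setting $w^{(n)}_M = t_n$ empties the interpolation, so MAS reduces to its extrapolation $g^{*(n)}(t)=\mu t$, whose derivative is the constant intensity $\mu$; your checks of the $C^0$ gluing at timestamps, of the likelihood $\sum_n \log\mu - \mu T$, and of the $b>0$ intercept constraint make that one-line reduction explicit.
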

\begin{corollary}
    An exponential-kernel Hawkes process with the intensity function \( \lambda^*(t) = \mu + \sum_{t_n < t} \alpha \exp(-\beta(t - t_n)) \) can be expressed by an MAS with the sum of linear and exponential extrapolations, where \( w^{(n)}_{M} = t_n \) and 
     \( g^{*(n)}(t) = \mu t + \frac{\alpha}{\beta} \sum_{i=1}^n (1 - \exp(-\beta(t - t_i))), t\in[t_n,\infty) \). 
\end{corollary}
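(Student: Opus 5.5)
The plan is to compute the ground-truth CCIF of the exponential-kernel Hawkes process in closed form and check that, on each inter-event interval, it coincides with the stated $g^{*(n)}$. Setting $w^{(n)}_M = t_n$ makes the interior partition in \cref{eq:MAS_for_CCIF} empty ($M=0$). Then $\Lambda_\theta^{*(n)}(t) = I_{(t_n,\infty)}(t)\, g^{*(n)}(t)$, and the $C^1$ knot conditions of \cref{eq:conditions_for_MAS} are vacuous. So it suffices to verify four properties of $g^{*(n)}$ as the piece used in the decomposition \cref{eq: decomposes}: (i) its derivative equals $\lambda^*$ on $(t_n,t_{n+1})$; (ii) the concatenation is $C^0$ at every $t_n$; (iii) the base case starts at zero; (iv) it is globally monotone on $[t_n,\infty)$, as required of an MAS extrapolation.

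First, I integrate the intensity directly. For $t\in(t_n,t_{n+1}]$ the history sum $\sum_{t_i<t}$ runs exactly over $i=1,\dots,n$, and $\int_{t_i}^{t}\alpha e^{-\beta(\tau-t_i)}d\tau = \frac{\alpha}{\beta}\bigl(1-e^{-\beta(t-t_i)}\bigr)$. Hence
$$\Lambda^*(t)=\mu t+\frac{\alpha}{\beta}\sum_{i=1}^n\bigl(1-e^{-\beta(t-t_i)}\bigr)=g^{*(n)}(t), \qquad t\in(t_n,t_{n+1}].$$
This gives (i), since differentiating yields $g^{*(n)\prime}(t)=\mu+\sum_{i=1}^n\alpha e^{-\beta(t-t_i)}=\lambda^*(t)$ there. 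Alternatively, I can verify the recursive definition $\Lambda^{*(n)}(t)=\int_{t_n}^t\lambda^*(\tau)d\tau+\Lambda^{*(n-1)}(t_n)$ directly. For that, (ii) follows from
$$g^{*(n)}(t_n)-g^{*(n-1)}(t_n)=\frac{\alpha}{\beta}\bigl(1-e^{0}\bigr)=0,$$
and (iii) follows from $g^{*(0)}(0)=0$, since the sum is empty when $n=0$. For (iv), $g^{*(n)\prime}(t)\ge\mu>0$ for all $t\ge t_n$, so $g^{*(n)}$ is strictly increasing on the whole unbounded extrapolation region. Also, $\lim_{t\to\infty}g^{*(n)}(t)=\infty$, avoiding the saturation issue of \cref{thm: saturation limit}.

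The only step I expect to need care is showing that $g^{*(n)}$ genuinely belongs to the admissible ``linear-plus-exponential'' extrapolation family predicted from $\mathbf{h}_n$, rather than being an ad hoc sum. I would handle this by regrouping:
$$g^{*(n)}(t)=c_n+\mu t-K_n e^{-\beta t},\qquad c_n=\frac{n\alpha}{\beta},\quad K_n=\frac{\alpha}{\beta}\sum_{i=1}^n e^{\beta t_i}>0.$$
This is the form $c+bt-\exp(-at)$ up to a positive amplitude on the exponential term, and $K_n e^{-\beta t}=e^{-\beta t+\log K_n}$ can be absorbed into a time shift. All of its coefficients $(c_n,\mu,K_n,\beta)$ are deterministic functions of the history $\mathcal{H}_{t_n}$, so an encoder can represent them through $\mathbf{h}_n$. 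Combining (i)--(iv) with \cref{eq: decomposes} yields $\Lambda_\theta^*=\Lambda^*$ exactly, which proves the corollary.
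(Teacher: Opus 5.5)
Your proposal is correct and follows essentially the same route as the paper: the paper gives no written proof of this corollary, treating it as immediate from $\int_{t_i}^{t}\alpha e^{-\beta(\tau-t_i)}\,d\tau=\frac{\alpha}{\beta}\bigl(1-e^{-\beta(t-t_i)}\bigr)$ and the $C^0$ gluing at $t_n$, both of which you check. One minor caveat: your ``absorbed into a time shift'' remark does not literally put $K_n e^{-\beta t}$ into the implemented family $c+bt-\exp(-at)$, since that family has unit amplitude and no shift parameter, and equality would force $a=\beta$ and $K_n=1$; however, the corollary only asks for a linear-plus-exponential extrapolation with the stated $g^{*(n)}$, so this step is unnecessary.
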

Similar to the two corollaries above, other classical TPPs can also be represented within the MAS framework. By incorporating the piecewise monotone spline interpolation component, MAS extends classical TPPs and significantly enhances the model's flexibility. 

\subsection{Efficiency of MAS}
MAS also enjoys superior efficiency compared to MNN-based methods.
During the model training process using MLE, MAS does not require automatic differentiation to compute time derivatives at each timestamp in \cref{CCIF-likelihood}, since all monotone splines have analytical derivatives. 
While automatic differentiation is necessary for MNN-based methods to recover the intensity from the CCIF, MAS avoids this derivative-computation step and further improves computational efficiency. In Appendix B, we provide a derivation of the analytical derivative for RQS.

\subsection{MAS for Multivariate TPPs}
MAS can be extended to multivariate TPPs. 
If the multivariate TPPs consist of \( K \) different event types, we can use \( K \) MAS models to represent \( K \) CCIFs. 
Given the event history \( \{(t_1, k_1), \dots, (t_n, k_n)\} \), a Transformer (parameterized by \( \theta_1 \)) is employed to encode the historical information into a history embedding \( \mathbf{h}_n \). Then, \( \mathbf{h}_n \) is mapped by \( K \) different MLPs followed by Softplus (parameterized by \( \theta_{2,1}, \dots, \theta_{2,K} \)) to \( K \) sets of distinct variables for \( K \) MAS models. 
Consequently, we obtain \( \Lambda_1^{*}(t), \dots, \Lambda_K^{*}(t) \). The overall model parameters are given by \( \theta = \{\theta_1, \theta_{2,1}, \dots, \theta_{2,K}\} \). 
Subsequently, we train the model using \cref{CCIF-likelihood} and utilize \cref{predicting t_n+1} to predict the next timestamp and event type. 

\section{Generalization Analysis}
\label{theory}

This section explores the theoretical generalization guarantees of Monotone Alternating Splines (MAS). We systematically outline our analysis in three aspects: first, we establish the generalization bound of MAS; second, we formally prove its theoretical superiority over conventional MNNs in overcoming inherent approximation barriers; and finally, we provide principled guidelines for MAS hyperparameter selection based on the established bounds.

\subsection{Generalization Bound of MAS}

To assess the model's reliability on unseen data, we evaluate the gap between the empirical negative log-likelihood and its expectation. Suppose the training set consists of \( Z \) i.i.d. point process sequences \( S_1, \dots, S_Z \) on the time window $[0,T]$. The \( n \)-th timestamp of the \( z \)-th sequence is denoted as \( t_{zn} \). 
For simplicity, we consider the univariate case, where the max sequence length is $N_0$. Without loss of generality, we assume the interpolation interval is a constant $\Delta>0$. 
The empirical loss is defined as the averaged negative log-likelihood over all sequences: $\mathcal{\widehat{L}}:=-\frac{1}{Z}\sum_{z=1}^{Z}\frac{1}{T}\log p(S_z)$. We train the model by minimizing $\mathcal{\widehat{L}}$ and are interested in its expected value over the data distribution, denoted as $\mathcal{{L}}:=\mathbb{E}[\mathcal{\widehat{L}}]$. 
We can first obtain the following lemma. 

\begin{lemma}
\label{lem: Lipschitz for MAS}
    Given a sequence $S_z = \{t_{z1}, \cdots, t_{zN}\}$ with $N \le N_0$ and an MAS model ${\Lambda}^{*}_\theta(t)$ determined by parameter $\theta$ or $\eta$, the negative log-likelihood loss $-\log p_\theta(S_z)$ is Lipschitz continuous w.r.t. $\theta$: 
    $$
        \left\Vert (-\log p_{\theta}(S_z)) - (-\log p_{\eta}(S_z)) \right\Vert \le \frac{C_1}{T} \sum_{n=1}^N r_n \left\Vert \theta - \eta \right\Vert,
    $$
    where $r_n=m$ if $w_{m-1}^{(n)} \le t_{zn} \le w_{m}^{(n)}$, and $C_1$ is a structural constant independent of the data distribution. 
\end{lemma}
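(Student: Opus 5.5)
We write the loss through the CCIF form in \cref{CCIF-likelihood}, with the normalization by $1/T$ used in the definition of $\widehat{\mathcal{L}}$:
$$
-\tfrac{1}{T}\log p_\theta(S_z)=\tfrac{1}{T}\Big(\Lambda^*_\theta(T)-\sum_{n=1}^N \log \lambda^*_\theta(t_{zn})\Big),
$$
where $\lambda^*_\theta(t_{zn})=\frac{d}{dt}\Lambda^{*(n-1)}_\theta(t_{zn}^-)$ is the analytical left derivative of the spline piece containing $t_{zn}$. The difference of losses at $\theta$ and $\eta$ then splits into a compensator term $|\Lambda^*_\theta(T)-\Lambda^*_\eta(T)|$ and a sum of event terms $|\log\lambda^*_\theta(t_{zn})-\log\lambda^*_\eta(t_{zn})|$. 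We bound each term by a multiple of $\Vert\theta-\eta\Vert$, and the count $r_n$ of the segment in which $t_{zn}$ falls is what the event terms will produce.

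\textbf{Step 1: parameters to knot data.} The knot data $(\Delta w_m,\Delta y_m,\delta_m)$ and the extrapolation coefficients are outputs of the encoder followed by an MLP and Softplus. We assume the parameter set is compact and that inputs and embeddings are bounded, so this composite map is Lipschitz with some constant $L_0$. Softplus is $1$-Lipschitz, so each increment $\Delta y_j$ and each derivative $\delta_j$ moves by at most $L_0\Vert\theta-\eta\Vert$. The knot values are cumulative sums, $y_m=y_0+\sum_{j\le m}\Delta y_j$ and likewise for $w_m$. By the triangle inequality, the parameters of segment $m$ therefore move by at most $m L_0\Vert\theta-\eta\Vert$ plus the shift in $y_0$. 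This cumulative-sum structure is the source of the factor $r_n=m$.

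\textbf{Step 2: knot data to spline values and derivatives.} The RQS formula in \cref{eq:RQS_example} and its derivative are rational functions of $(w_{m-1},w_m,y_{m-1},y_m,\delta_{m-1},\delta_m)$, evaluated at $\tau\in[0,1]$. The denominator equals $s_m(1-\tau)^2+\delta_m\tau^2+\delta_{m-1}\cdot 2\tau(1-\tau)\cdot\tfrac12\cdots$; more precisely, it is bounded below by $\min(s_m,\delta_{m-1},\delta_m)$ times a positive constant. The main assumption is that this minimum is bounded away from zero, as are the intensities, on the admissible parameter set (for instance by adding a small floor to the Softplus outputs). Under that assumption both $f_m$ and $f_m'$ are Lipschitz in their knot data with a uniform constant. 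Since $\lambda\ge\lambda_{\min}>0$, we have $|\log a-\log b|\le|a-b|/\lambda_{\min}$. Combining with Step 1, each event term is at most $C\,r_n\Vert\theta-\eta\Vert$.

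\textbf{Step 3: compensator and boundary values.} The value $\Lambda_\theta^*(T)$ is obtained by chaining the segments, using $y^{(n)}_0=\Lambda^{*(n-1)}(t_n)$. Naively, the dependence on $y_0$ propagates across all $N$ events. To control this, we write $\Lambda^*_\theta(T)$ as a telescoping sum of increments $\Lambda^{*(n)}(t_{n+1})-\Lambda^{*(n)}(t_n)$. Each increment depends only on the $n$-th segment's data and is Lipschitz with constant $\propto r_n$ by Step 2. On the extrapolation region, the coefficients $a,b,c$ are Lipschitz in $\theta$, and $t\le T$ is bounded. The final segment ending at $T$ contributes one more term of the same kind, which is absorbed into $C_1$ because $r_n\ge1$.

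Summing the event and compensator bounds gives $\frac{C_1}{T}\sum_n r_n\Vert\theta-\eta\Vert$. The main obstacle is Step 2, namely securing uniform lower bounds on the denominators and intensities so that $C_1$ is truly independent of the data. We handle it by restricting to a compact parameter set with positive floors on the Softplus outputs, and by making these assumptions explicit in the statement of $C_1$.
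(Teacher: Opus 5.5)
Your skeleton is the same as the paper's. You split each per-event summand into a log-intensity term and a compensator increment. You control the log term through a uniform lower bound on the intensity ($v_1$ in the paper, your $\lambda_{\min}$). You then telescope $\Lambda^*_\theta(t_n)-\Lambda^*_\theta(t_{n-1})$ over the spline pieces to produce the factor $r_n$.

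The difference is where the Lipschitz constants come from. The paper simply assumes them, via its regularity assumptions with constants $l_2$ and $v_1$. It splits $\theta$ into blocks $\theta^m$, each setting one knot's height and slope. At the start of that appendix subsection it also fixes the grid $w^{(n-1)}_m=t_{n-1}+m\Delta$. The event term then sees only $\theta^{r_n}$, and $r_n$ comes solely from the compensator.

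You instead derive the constants from the encoder, Softplus, cumulative-sum and RQS structure, so $r_n$ also enters the event term through $w_{m-1}$. Your observation that a piece depends on the heights only through $\Delta y_m$ is what stops the chained $y^{(n)}_0=\Lambda^{*(n-1)}(t_n)$ from accumulating across events, a point the paper passes over. You should say explicitly that this invariance is also what removes the ``shift in $y_0$'' from the event terms when you pass from Step 1 to Step 2.

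The step that does not go through as written is Step 2, because your knots move. Since $\Delta w_m$ depends on the parameters, $t_{zn}$ can lie in piece $m$ under $\theta$ but in piece $m\pm1$, or in the tail, under $\eta$. Comparing $f'_{m,\theta}$ with $f'_{m,\eta}$ is then not comparing the two intensities. Under $\eta$ the point has $\tau\notin[0,1]$, where the RQS formula is not $\Lambda^*_\eta$ and its denominator is not controlled. The same problem affects how many full increments $\Delta y_j$ enter the compensator.

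You have two ways out. You can freeze the grid as the paper does. Or you can repair the argument using the matching conditions $f_m(w_m)=f_{m+1}(w_m)$ and $f'_m(w_m)=f'_{m+1}(w_m)$. These make $\theta\mapsto\Lambda^*_\theta(t)$ and $\theta\mapsto\Lambda^{*\prime}_\theta(t)$ well defined across knot crossings. A function on the segment from $\theta$ to $\eta$ that is Lipschitz on each of finitely many closed sets covering it is Lipschitz with the largest of those constants. This gives your bound with $r_n$ replaced by the largest piece index visited along that segment (at most $M+1$), so the statement's $r_n$ must be read that way.

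Step 3's claim that each increment depends only on its own piece's data has two further requirements. First, the tail must be anchored at $y^{(n)}_0$. With the paper's recipe $y^{(n)}_M=g(w^{(n)}_M)$ and an absolutely levelled $g(t)=c+bt-e^{-at}$, the last spline piece has $\Delta y_M=g(w^{(n)}_M)-y^{(n)}_{M-1}$. That quantity depends on the accumulated $y^{(n)}_0$, so its Lipschitz constant would grow with $n$. Second, that $\Delta y_M$ is not a Softplus output, so your floor on $s_M$ has to be imposed separately.
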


Then, we can derive the error bound between \( \mathcal{L} \) and \( \widehat{\mathcal{L}} \). 

\begin{theorem}
\label{thm: Gen bound for MAS}
Suppose there are $M$ valid interpolation knots within a MAS segment. Then, with probability $1 - \xi$:
\begin{equation*}
\resizebox{\linewidth}{!}{$
\left| \widehat{\mathcal{L}} - \mathbb{E}[\widehat{\mathcal{L}}] \right|
\le \frac{1}{\sqrt{Z}}
\left(
\frac{1}{2}\sqrt{\log \frac{1}{\xi}}
+ C_2 \sqrt{N_0 M} \int_0^{c} \sqrt{C_3 - \log t} \, dt
\right)
$},
\end{equation*}
where $c, C_2, C_3$ are data-independent constants.
\end{theorem}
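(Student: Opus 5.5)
The argument splits the deviation into a concentration term and a complexity term. The concentration term comes from McDiarmid's inequality; the complexity term comes from symmetrization followed by Dudley's entropy integral. The Lipschitz property of \cref{lem: Lipschitz for MAS} controls the covering numbers of the loss class.

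\textbf{Step 1: bounded differences.} Write $\widehat{\mathcal{L}}=\frac{1}{Z}\sum_z \ell_\theta(S_z)$ with $\ell_\theta(S)=-\frac{1}{T}\log p_\theta(S)$. I will first argue that, on the parameter set of interest, $\ell_\theta$ takes values in an interval of length $1$. This holds after normalization by $T$, with $\theta$ ranging over a bounded set, the Softplus outputs bounded above and below, and hence $\Lambda_\theta$ and $\lambda_\theta$ bounded away from $0$ and $\infty$ on $[0,T]$.

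Replacing one sequence $S_z$ then changes $\sup_\theta|\widehat{\mathcal{L}}-\mathcal{L}|$ by at most $1/Z$. McDiarmid's inequality gives, with probability $1-\xi$,
\[
\sup_\theta|\widehat{\mathcal{L}}-\mathcal{L}|\le \mathbb{E}\sup_\theta|\widehat{\mathcal{L}}-\mathcal{L}|+\tfrac{1}{2}\sqrt{\log(1/\xi)/Z},
\]
possibly after adjusting constants. This produces the $\frac12\sqrt{\log\frac1\xi}$ term.

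\textbf{Step 2: symmetrization and Dudley.} Standard symmetrization bounds the expected supremum by twice the Rademacher complexity $\mathfrak{R}_Z$ of $\{\ell_\theta\}$. Dudley's entropy integral then gives
\[
\mathfrak{R}_Z\le \frac{C}{\sqrt{Z}}\int_0^{c}\sqrt{\log \mathcal{N}(\epsilon,\{\ell_\theta\},\|\cdot\|_\infty)}\,d\epsilon,
\]
where $c$ is the diameter of the loss class, which is finite by Step 1.

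\textbf{Step 3: covering numbers via \cref{lem: Lipschitz for MAS}.} Each $r_n\le M$ and $N\le N_0$, so the lemma gives
\[
|\ell_\theta(S)-\ell_\eta(S)|\le \frac{C_1}{T^2}N_0 M\|\theta-\eta\|.
\]
An $\epsilon/L$-cover of the parameter ball $\{\|\theta\|\le B\}\subset\mathbb{R}^d$, with $L=C_1N_0M/T^2$, therefore induces an $\epsilon$-cover of the loss class. This yields
\[
\log\mathcal{N}(\epsilon)\le d\log(3BL/\epsilon)=d\,(C_3'-\log\epsilon).
\]

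The main obstacle is producing exactly the $\sqrt{N_0M}$ dependence rather than $N_0M$ inside a logarithm. Naively the Lipschitz constant enters only as $\log(N_0M)$. To get a $\sqrt{N_0M}$ prefactor, I would treat the effective parameter dimension as scaling with $N_0M$: per sequence, the relevant quantities are the $O(M)$ knot triples $(w_m,y_m,\delta_m)$ at each of at most $N_0$ events, all produced by the encoder. Covering in this output space, which has dimension proportional to $N_0M$, with a per-coordinate Lipschitz constant independent of $M$ gives
\[
\log\mathcal{N}(\epsilon)\le C N_0M\,(C_3-\log\epsilon).
\]
Here $C_3$ absorbs the ranges of the spline parameters and the constants of \cref{lem: Lipschitz for MAS}. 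Taking the square root, $\sqrt{N_0M}$ comes out of the integral, and after substituting $t=\epsilon$ we get the claimed $C_2\sqrt{N_0M}\int_0^c\sqrt{C_3-\log t}\,dt/\sqrt{Z}$.

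Combining Steps 1 through 3 gives the stated bound for $|\widehat{\mathcal{L}}-\mathbb{E}[\widehat{\mathcal{L}}]|$, uniformly over $\theta$ and hence in particular at the trained parameter. The integral is finite because $\int_0^c\sqrt{-\log t}\,dt<\infty$.
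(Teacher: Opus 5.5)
Your proposal is correct at the paper's level of rigor and follows essentially the same route. It uses a Rademacher-complexity bound (McDiarmid plus symmetrization, which the paper cites as a lemma) and a Dudley-type entropy integral controlled through the parameter-Lipschitz property of Lemma~\ref{lem: Lipschitz for MAS}, where the paper uses $L_2$ bracketing numbers via van der Vaart's lemma and you use sup-norm covering numbers. It obtains the $\sqrt{N_0M}$ factor exactly as the paper does, by taking the parameter dimension to be $d=N_0Mp$ and pulling $\sqrt{d}$ out of the integral.

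The one thing to word carefully is that this dimension count rests on the paper's abstraction: the knot-level sub-vectors $\theta_1,\dots,\theta_M$ attached to each of the $N_0$ events are themselves the model parameters. A cover of the per-sequence output space of the encoder, as you phrase it, would not by itself give a cover of the loss class that is uniform over the sample.
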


\cref{thm: Gen bound for MAS} offers the bound between the empirical negative log-likelihood and its expectation, implying minimizing the empirical loss $\widehat{\mathcal{L}}$ helps reduce the expected loss $\mathbb{E}[\widehat{\mathcal{L}}]$. 
Taking a step forward, we combine \cref{thm: accuracy of monotone splines} and \cref{thm: Gen bound for MAS} to derive the upper bound for MAS's generalized error. 

\begin{theorem}
\label{thm: Gen Error for MAS: 2}
Suppose the total interpolation length of MAS is a fixed constant $L$, and each interval of the monotone spline has length $\Delta$. Then, with probability at least $1 - \xi$: 
\begin{equation}
\label{thm:generalization bound} 
\begin{aligned}
    \mathbb{E}[\widehat{\mathcal{L}}]&\le
    {\mathcal{L}^*}+
    \underbrace{\frac{1}{2\sqrt{Z}}\sqrt{{\log\frac1\xi}}}_{\textnormal{probability error}}+\underbrace{\frac{B}{T\cdot L}}_{
        \textnormal{extrapolation error}
    }
     \\&+\underbrace{C_0\left(\frac{\Delta}{C_4}+\frac{\Delta^2}{2}\right)}_{\textnormal{interpolation error}}+
    \underbrace{\frac{C_2}{\sqrt{Z}}\frac{\sqrt{L}}{\sqrt{\Delta}}R(L,\Delta)}_{\textnormal{complexity error}},
\end{aligned} 
\end{equation}
where $\mathcal{L}^*$ is the optimal negative log-likelihood induced by the ground-truth CCIF, $B, C_0, C_2, C_4$ are constants independent of sampling randomness, and $R(L,\Delta)$ is a constant monotonically increasing as $L$ increases or $\Delta$ decreases. 
\end{theorem}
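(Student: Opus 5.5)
Our plan is the standard three-term decomposition that compares the learned MAS to the ground truth through an intermediate ``oracle'' MAS. Let $\widehat{\theta}$ be the empirical minimizer and let $\theta^\dagger$ be the MAS parameter whose interpolation part is the spline approximant guaranteed by \cref{thm: accuracy of monotone splines}, restricted to the interpolation window of length $L$ and closed by an extrapolation tail. We write
\begin{equation*}
\mathbb{E}[\widehat{\mathcal{L}}(\widehat\theta)] - \mathcal{L}^*
= \bigl(\mathbb{E}[\widehat{\mathcal{L}}(\widehat\theta)] - \widehat{\mathcal{L}}(\widehat\theta)\bigr)
+ \bigl(\widehat{\mathcal{L}}(\widehat\theta) - \widehat{\mathcal{L}}(\theta^\dagger)\bigr)
+ \bigl(\widehat{\mathcal{L}}(\theta^\dagger) - \mathcal{L}^*\bigr).
\end{equation*}
The middle term is nonpositive by optimality of $\widehat\theta$. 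The first term is a uniform deviation and is handled by \cref{thm: Gen bound for MAS}: its probability part gives $\frac{1}{2\sqrt{Z}}\sqrt{\log(1/\xi)}$. We substitute $M = L/\Delta$ into the $\sqrt{N_0 M}$ factor to obtain $\frac{C_2}{\sqrt Z}\sqrt{L/\Delta}\,R(L,\Delta)$. Here $R$ absorbs the Dudley integral $\int_0^c\sqrt{C_3-\log t}\,dt$ and the $N_0$ dependence. Through the Lipschitz constant of \cref{lem: Lipschitz for MAS}, whose weights $r_n\le M$ grow with the knot count, the covering radius $c$ grows as $L$ increases or $\Delta$ decreases, so $R$ is monotone as claimed. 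The third term, after taking expectations (and absorbing a second concentration step into the same probability term), reduces to $\mathcal{L}(\theta^\dagger)-\mathcal{L}^*$, which is a deterministic approximation error.

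To bound this approximation error, we use the CCIF form \cref{CCIF-likelihood}. Per sequence, the difference of losses is
\begin{equation*}
\frac1T\Bigl|\sum_n \bigl(\log\lambda^*(t_n)-\log\lambda^*_{\theta^\dagger}(t_n)\bigr)\Bigr| + \frac1T\bigl|\Lambda^*(T)-\Lambda^*_{\theta^\dagger}(T)\bigr|.
\end{equation*}
We split each event's contribution according to whether the next event falls inside the interpolation window $(t_n, t_n+L]$ or beyond it. On the window, \cref{thm: accuracy of monotone splines} gives $|\Lambda^*-\Lambda^*_{\theta^\dagger}|\le \frac12 C_0\Delta^2$ and $|\lambda^*-\lambda^*_{\theta^\dagger}|\le C_0\Delta$. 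We assume the ground-truth intensity is bounded below by some $C_4>0$. The mean value theorem for $\log$ then turns the intensity error into $C_0\Delta/C_4$, which yields the interpolation error $C_0(\Delta/C_4+\Delta^2/2)$ after normalizing by $T$ and the sequence length.

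The main obstacle is the extrapolation error $B/(TL)$. On the tail beyond $t_n+L$ the spline guarantee is unavailable, so we bound the error of the linear (or linear-plus-exponential) tail $g$ by a constant per event, using boundedness of both intensities. We then multiply by the probability that the inter-event gap exceeds $L$. We would first try Markov's inequality, $P(t_{n+1}-t_n>L)\le \mathbb{E}[t_{n+1}-t_n]/L$. Combined with the number of events and the $1/T$ normalization, this should give a term of the form $B/(T L)$, with $B$ collecting the mean gap, the sequence-length bound $N_0$, and the tail error constant.

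If Markov alone gives the wrong dependence on $T$, the fallback is different. Since the gaps within a window of length $T$ sum to at most $T$, at most $T/L$ gaps can exceed $L$. This deterministic count, times a bounded per-gap error, should also give $O(B/(TL))$ after a suitable normalization of $B$. Summing the three error pieces with the deviation bound yields \eqref{thm:generalization bound}.
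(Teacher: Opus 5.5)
Your proof is correct at the paper's level of rigor and uses the same three estimates, but it organizes them differently. The paper introduces no empirical minimizer. It writes $\mathbb{E}\widehat{\mathcal{L}}\le\mathcal{L}^*+|\mathcal{L}^*-\widehat{\mathcal{L}}|+|\widehat{\mathcal{L}}-\mathbb{E}\widehat{\mathcal{L}}|$, with $\widehat{\mathcal{L}}$ evaluated at the approximating MAS of \cref{thm: accuracy of monotone splines} itself and $\mathcal{L}^*$ implicitly read as the ground-truth NLL on the same sample. It then splits the middle term into tail events $W_1$ and window events $W_2$, and quotes \cref{thm: Gen bound for MAS} with $M=L/\Delta$ for $W_3$. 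For $W_1$ it uses a per-event constant $B_0$, takes the expectation of $W_1$ alone, and applies Markov; for $W_2$ it uses the spline bound plus the $1/v_1$ Lipschitz bound on $\log$. These are exactly your three pieces.

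Your estimation/approximation split via ERM optimality buys two things. It gives a statement about the trained $\widehat\theta$ rather than the existential approximant. It also makes the Markov step clean, because $\mathcal{L}(\theta^\dagger)-\mathcal{L}^*$ is deterministic, whereas the paper slips an expectation bound on $W_1$ into a high-probability inequality. The price is twofold. First, $\theta^\dagger$ must lie in the class over which $\widehat\theta$ is minimized, i.e.\ the encoder and MLP must be able to output the oracle knots, values and derivatives. Second, you pay the deviation twice, so the explicit coefficient $\frac{1}{2\sqrt Z}\sqrt{\log(1/\xi)}$ becomes something like $2\sqrt{\log(2/\xi)/(2Z)}$. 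This is a constant-level change, and the paper's own constants are not tight either. Also write $\mathcal{L}(\widehat\theta)$, the population risk at the fixed trained parameter, rather than $\mathbb{E}[\widehat{\mathcal{L}}(\widehat\theta)]$.

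Two steps need tightening.

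\textbf{The tail constant.} Boundedness of the intensities does not give a constant per-event error on the tail. The compensator error $\int_{t_n+L}^{t_{n+1}}|\lambda^*-\lambda_{\theta^\dagger}|\,dt$ grows linearly with the gap. You therefore need either the paper's assumption of a uniform per-event NLL gap $B_0$ (\cref{assum: bounded loss}), or a second moment of the gaps together with $\mathbb{E}(X-L)_+\le\mathbb{E}X^2/(4L)$, to get a $1/L$ rate with $B$ free of $T$. Your deterministic-count fallback yields only $B_0/L$, i.e.\ $B=TB_0$, which hides the $1/T$.

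\textbf{Monotonicity of $R$.} The dependence of $R(L,\Delta)$ on $M=L/\Delta$ through the Lipschitz weights $r_n$ enters via $c_1$ in \cref{thm: bracketing number of MAS}. That is, it enters through the constant $C_3$ inside the integrand, not through the integration limit $c=\operatorname{diam}\Theta$. The paper, too, only asserts this monotonicity rather than proving it.
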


\begin{remark}
    Without loss of generality, the interpolation interval $\Delta$ is set to a constant in our generalization analysis. When implementing MAS, each interpolation knot $w_m^{(n)}$ can be variable and determined by the history information embedding $\mathbf{h}_n$. Even so, \cref{thm: Gen Error for MAS: 2} still holds — one only needs to replace $\Delta$ with the upper bound on the interpolation‐interval length. 
\end{remark}

In Theorem \ref{thm: Gen Error for MAS: 2}, excluding the probability error, the core estimation errors are decomposed into three interpretable parts. First, the \textbf{interpolation error} $C_0\left(\frac{\Delta}{C_4}+\frac{\Delta^2}{2}\right)$ decreases as the interpolation interval \( \Delta \) shrinks, as a smaller \( \Delta \) leads to a more accurate fit to the ground-truth CCIF.
Second, the \textbf{extrapolation error} $\frac{B}{T\cdot L}$ decreases as the total interpolation length \( L \) increases. As the contribution of the interpolation part becomes larger, the influence of the extrapolation part diminishes. 
Finally, the \textbf{complexity error} $\frac{C_2}{\sqrt{Z}}\frac{\sqrt{L}}{\sqrt{\Delta}}R(L,\Delta)$ increases as the number of parameters in MAS increases, implying higher model complexity. 

\subsection{Theoretical Superiority over MNNs}

As highlighted in Section \ref{analysis}, due to the inevitable convexity restriction, convex MNNs fundamentally fail to capture TPPs with decaying CIFs. We formally quantify this limitation as an irreducible error bound.

\begin{theorem}
\label{thm: MAS vs MNN gap}
Let $\Lambda^*(t)$ be the ground-truth CCIF governed by a history-dependent TPP exhibiting decaying excited intensity on an interval $[t_n, t_n + T_w]$. Suppose the actual decay causes the true intensity $\lambda^*(t)$ to drop strictly by a margin $\gamma > 0$ (i.e., $\lambda^*(t_n^+) - \lambda^*(t_n + T_w) = \gamma$).
For any estimated intensity $\hat{\lambda}(t)$ modeled by a given neural architecture, we define its maximum approximation error over the decaying window as:
$$
    \text{Error}(\hat{\lambda}) := \sup_{t \in [t_n, t_n+T_w]} \vert \lambda^*(t) - \hat{\lambda}(t) \vert.
$$
Let $\Lambda_{\text{MNN}}(t)$ be parameterized by any strictly convex MNN (forcing $\lambda_{\text{MNN}}'(t) \ge 0$), and let $\Lambda_{\text{MAS}}(t)$ be parameterized by MAS with a maximum knot interval $\Delta$. The approximation gap between the MNN and MAS estimators satisfies the following intrinsic lower bound:
\begin{equation}
\label{eq: MNN_MAS_gap}
    \inf_{\text{MNN}}\, \text{Error}(\lambda_{\text{MNN}}) - \text{Error}(\lambda_{\text{MAS}}) \ge \frac{\gamma}{2} - C_0 \Delta.
\end{equation}
\end{theorem}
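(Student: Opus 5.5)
The proof splits into two independent bounds: a lower bound on $\text{Error}(\lambda_{\text{MNN}})$ that holds uniformly over all strictly convex MNNs, and an upper bound on $\text{Error}(\lambda_{\text{MAS}})$ taken from \cref{thm: accuracy of monotone splines}. Subtracting the two gives \eqref{eq: MNN_MAS_gap}. Throughout, $\lambda_{\text{MAS}}$ is understood as the MAS estimator whose existence \cref{thm: accuracy of monotone splines} guarantees, with maximum knot interval $\Delta$.

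\textbf{Step 1 (MNN lower bound).} Fix any convex MNN. By \cref{thm: convexity trap}, $\lambda_{\text{MNN}}' \ge 0$, so $\lambda_{\text{MNN}}$ is non-decreasing on $[t_n, t_n+T_w]$. Write $a = t_n^+$ and $b = t_n + T_w$, and let $e(t) = \lambda^*(t) - \lambda_{\text{MNN}}(t)$. Then
$$
e(a) - e(b) = \big(\lambda^*(a) - \lambda^*(b)\big) + \big(\lambda_{\text{MNN}}(b) - \lambda_{\text{MNN}}(a)\big) \ge \gamma .
$$
It follows that $\max\{|e(a)|, |e(b)|\} \ge \tfrac12 |e(a) - e(b)| \ge \gamma/2$. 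Hence $\text{Error}(\lambda_{\text{MNN}}) \ge \gamma/2$ for every convex MNN, and so $\inf_{\text{MNN}} \text{Error}(\lambda_{\text{MNN}}) \ge \gamma/2$.

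One technical point needs care: the value at $t_n^+$ is a right limit. The sup over the window may not attain it, but the sup is at least the limit of $|e(t)|$ as $t \downarrow t_n$. Since $\lambda_{\text{MNN}}$ is continuous (smooth activations) and $\lambda^*$ has a right limit at $t_n$, the inequality passes to the limit without change.

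\textbf{Step 2 (MAS upper bound).} Apply the derivative part of \cref{thm: accuracy of monotone splines}: there is an MAS representation with $\|\Lambda^{*\prime} - \Lambda_\theta^{*\prime}\|_0 \le C_0\Delta$. Restricting to the window gives $\text{Error}(\lambda_{\text{MAS}}) \le C_0\Delta$.

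The main obstacle is the endpoint $t_n$. \cref{thm: accuracy of monotone splines} uses left-hand derivatives at timestamps, whereas the decay window starts at $t_n^+$. I would handle this by noting that on $(t_n, t_n+T_w]$ the interpolation for $\Lambda^{*(n)}$ is a single $C^1$ spline with $w_0^{(n)} = t_n$. The spline error estimate therefore holds uniformly on the half-open interval, and by continuity it extends to the right limit at $t_n$. If the window were to extend past $t_{n+1}$, the same argument applies piecewise on each inter-event interval.

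\textbf{Step 3.} Subtract the bound of Step 2 from that of Step 1 to obtain $\inf_{\text{MNN}} \text{Error}(\lambda_{\text{MNN}}) - \text{Error}(\lambda_{\text{MAS}}) \ge \gamma/2 - C_0\Delta$. This establishes \eqref{eq: MNN_MAS_gap}, and the right-hand side is strictly positive once $\Delta < \gamma/(2C_0)$.
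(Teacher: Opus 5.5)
Your proposal is correct and follows the paper's proof essentially verbatim. The paper likewise uses the non-decreasing convex-MNN intensity at the endpoints $a=t_n^+$, $b=t_n+T_w$ to force an endpoint error of at least $\gamma/2$ (phrased as a contradiction rather than your $\max\{|e(a)|,|e(b)|\}\ge\frac{1}{2}|e(a)-e(b)|$), then subtracts the $C_0\Delta$ bound from \cref{thm: accuracy of monotone splines}; your extra handling of the right limit at $t_n$ is a harmless refinement, though it needs only the monotonicity of $\lambda_{\text{MNN}}$ (which guarantees one-sided limits) rather than continuity, so it also covers ReLU networks, whose intensity is a step function.
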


\cref{thm: MAS vs MNN gap} reveals that, for any sufficiently fine-grained MAS configuration satisfying $\Delta < \frac{\gamma}{2 C_0}$, MAS achieves a strictly smaller approximation error than MNNs. The error gap mathematically proves that MAS breaks the inherent approximation deadlock of continuous MNNs.

\subsection{Hyperparameter Selection for MAS}

Beyond theoretical comparisons, \cref{thm: Gen Error for MAS: 2} also offers guidance on selecting hyperparameters. Given a fixed sample size $Z$, two important hyperparameters, $\Delta$ and $L$, can be chosen by minimizing the bound in \cref{thm:generalization bound}, as is presented in the following corollary. 

\begin{corollary}
\label{col: selection of L and delta}
    In \cref{thm: Gen Error for MAS: 2}, when the generalization bound is minimized, the total interpolation length $L$ and interpolation interval $\Delta$ satisfy:
    \vspace{-3pt}
    \begin{equation}
        L\ge \sqrt{\frac{\mathbb{E}[t_i] \cdot \lambda_{\text{min}}}{l_1}}, \quad \Delta\le C_{\Delta}\frac{(N_0q)^{\frac{1}{3}}\lambda_{\text{min}}^{\frac{5}{6}}(\mathbb E [t_i])^{\frac{1}{6}}}{Z^{\frac{1}{3}}l_1^{\frac{5}{6}}},
    \vspace{-3pt}
    \end{equation} 
    where $\mathbb E[t_i]$ represents the expectation of the event interval, $q$ is the number of free parameters per interpolation knot, $\lambda_{\text{min}}$ is the base intensity during $[0,T]$, $l_1$ is the Lipschitz constant of the intensity during event intervals, and $C_{\Delta}$ is a data-independent constant. Specifically, for any $\tau_1,\tau_2$ s.t. $t_{z,i}<\tau_1<\tau_2<t_{z,i+1}$, $|\lambda(\tau_1)-\lambda(\tau_2)|\le l_1|\tau_1-\tau_2|$.
\end{corollary}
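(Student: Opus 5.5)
The plan is to treat the bound in \cref{thm: Gen Error for MAS: 2} as a function $F(L,\Delta)$ and minimize it. The two hyperparameters enter through competing terms. $L$ appears in the extrapolation error $B/(TL)$, which decreases in $L$, and in the complexity error $\frac{C_2}{\sqrt Z}\sqrt{L/\Delta}\,R(L,\Delta)$, which increases in $L$. $\Delta$ appears in the interpolation error, which increases in $\Delta$, and in the complexity error, which decreases in $\Delta$. Since $R$ is monotone, I will freeze it at a reference value, absorbing it into constants, and argue at the end that the monotonicity only pushes the optimizer further in the stated direction. That is why the conclusions are inequalities rather than equalities.

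\textbf{Step 1: make the constants data-dependent.} Before optimizing, I would express the constants in terms of the quantities appearing in the corollary, by revisiting where they come from.
\begin{itemize}
\item For $C_0$, \cref{thm: accuracy of monotone splines} rests on a Taylor/spline estimate. The derivative error is controlled by the Lipschitz constant of the intensity, so $C_0\propto l_1$.
\item For $B/(TL)$, the extrapolation error arises from events falling beyond the interpolation window. The window covers a fraction $L/\mathbb{E}[t_i]$ of a typical inter-event gap. I would bound this tail contribution by something like $l_1\mathbb{E}[t_i]/(\lambda_{\min}L^2)$, or an equivalent form. This bound uses $\lambda_{\min}$ to lower-bound the log-intensity term, and it uses the linear extrapolation matching the slope at $w_M$, so the mismatch grows at most like $l_1$ times the distance.
\item For the complexity term, $\sqrt{N_0M}$ with $M=L/\Delta$ knots, each carrying $q$ free parameters, yields $\sqrt{N_0 q L/\Delta}$. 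The Lipschitz factor $1/\lambda_{\min}$ comes from the $\log\lambda$ term in \cref{lem: Lipschitz for MAS}.
\end{itemize}

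\textbf{Step 2: optimize over $L$.} Take $\partial F/\partial L$ and compare the decrease of the extrapolation term against the $O(\sqrt L)$ growth of the complexity term. With the scaling above, balancing the extrapolation term against the interpolation term (whose $L$-independent baseline is fixed by $l_1$) gives $L^2\gtrsim \mathbb{E}[t_i]\lambda_{\min}/l_1$. The complexity term only increases further with $L$ through $R$, but at the minimizer the extrapolation error must not dominate. This yields the lower bound $L\ge\sqrt{\mathbb{E}[t_i]\lambda_{\min}/l_1}$.

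\textbf{Step 3: optimize over $\Delta$.} Setting $\partial F/\partial\Delta=0$ gives
\[
C_0/C_4 + C_0\Delta \;=\; \tfrac{C_2}{2\sqrt Z}\sqrt{N_0qL}\,\Delta^{-3/2}R .
\]
For small $\Delta$ the linear term dominates, giving $\Delta^{3/2}\sim \sqrt{N_0 q L}/(\sqrt Z\, l_1)$, up to the $\lambda_{\min}$ factors from the Lipschitz constant. Hence $\Delta\sim (N_0q)^{1/3}L^{1/3}Z^{-1/3}(\cdots)$.

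Substituting the boundary value of $L$ from Step 2, so that $L^{1/3}$ contributes $(\mathbb{E}[t_i])^{1/6}\lambda_{\min}^{1/6}l_1^{-1/6}$, and combining with the $\lambda_{\min}^{2/3}l_1^{-2/3}$ from Step 1 produces the stated exponents $\lambda_{\min}^{5/6}(\mathbb{E}[t_i])^{1/6}l_1^{-5/6}Z^{-1/3}$.

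The direction of the inequality on $\Delta$ follows from monotonicity of $R$ in $\Delta$. $R$ increases as $\Delta$ decreases, which strengthens the complexity penalty and so can only move the minimizer toward smaller $\Delta$ relative to the frozen-$R$ solution. Checking this rigorously requires comparing signs of the derivative. The same argument, combined with $R$ increasing in $L$, justifies treating the $L$ condition as a bound rather than an equality.

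\textbf{Main obstacle.} The hard part is Step 1: tying $B$, $C_0$, $C_2$ to $\lambda_{\min}$, $l_1$, $\mathbb{E}[t_i]$ with exactly the right powers. The theorem lists these as generic constants, so I would have to reopen the proofs of \cref{thm: accuracy of monotone splines} and \cref{lem: Lipschitz for MAS}, tracking the $l_1$ and $\lambda_{\min}$ dependence explicitly. I would then fix the exponents by requiring dimensional consistency, since time has units $s$, intensity $s^{-1}$, and $l_1$ has units $s^{-2}$. Indeed, $\sqrt{\mathbb{E}[t_i]\lambda_{\min}/l_1}$ has units of time, which supports this scaling. I would then absorb all remaining pure numbers into $C_\Delta$.
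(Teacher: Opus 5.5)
Your Step 3 is essentially the paper's second step. At fixed $L$, minimizing $\frac{l_1}{v_1}\Delta+\frac{c}{\sqrt Z}\sqrt{N_0qL/\Delta}$ is the same as the paper's minimization of $A/p+B\sqrt p$ with $p=L/\Delta$ and $A=l_1L/v_1$. It gives $\Delta^\star=L/p^\star\propto L^{1/3}(v_1/l_1)^{2/3}(N_0q/Z)^{1/3}$.

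\textbf{The gap is in Step 2.} With $(L,\Delta)$ as the free variables, the interpolation error $C_0(\Delta/C_4+\Delta^2/2)$ does not depend on $L$. It therefore cannot be balanced against the extrapolation term in $\partial F/\partial L$. The only genuine first-order condition in $L$ is extrapolation versus complexity, $B/(TL^2)\asymp \frac{C_2R}{2\sqrt{Z\Delta}}L^{-1/2}$. Its solution depends on $Z$, $N_0q$ and $\Delta$, which is not the $Z$-free bound you want.

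Replacing the extrapolation term by $l_1\mathbb{E}[t_i]/(\lambda_{\min}L^2)$ does not help. That is not the theorem's $B/(TL)$, which comes from Markov's inequality $\mathbb{P}(t_n-t_{n-1}\ge L)\le\mathbb{E}[t_i]/L$ times the per-event loss bound $B_0$. Balancing it against any term actually present in the bound also does not produce $\sqrt{\mathbb{E}[t_i]\lambda_{\min}/l_1}$.

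The missing idea is to optimize $L$ at a fixed number of knots $p$, with $\Delta=L/p$. Then:
\begin{itemize}
\item the interpolation error $\frac{l_1L}{v_1p}$ grows linearly in $L$;
\item the complexity term $\propto\sqrt{N_0pq}$ is independent of $L$;
\item minimizing $\frac{N_0B_0\mathbb{E}[t_i]}{pTL}+\frac{l_1L}{v_1p}$ gives $L^\star=\sqrt{v_1N_0B_0\mathbb{E}[t_i]/(l_1T)}$.
\end{itemize}
The paper then obtains the stated lower bound from $N_0B_0\ge T$ and $v_1=\lambda_{\min}$.

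\textbf{Your mechanism for the inequality directions is reversed.} A penalty that increases with $L$ moves the minimizer toward \emph{smaller} $L$. A penalty that increases as $\Delta$ decreases moves the minimizer toward \emph{larger} $\Delta$. So the monotonicity of $R$ would give $L\le L_0$ and $\Delta\ge\Delta_0$ relative to the frozen-$R$ solution, the opposite of what you need. The paper does not use $R$ here at all. Its $L$-inequality comes from $N_0B_0/T\ge 1$, and its $\Delta$-expression is obtained by evaluating $L/p^\star$ at that scale of $L$, with constants absorbed into $C_\Delta$.

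\textbf{Your $\lambda_{\min}$ bookkeeping is also inconsistent with the exponent you claim.} To get $\lambda_{\min}^{+2/3}$ in $\Delta^\star$, the factor $1/v_1$ must multiply the interpolation term. In the paper's $W_2$ bound it does: $|\log\Lambda^{*\prime}_\theta-\log\Lambda^{*\prime}|\le v_1^{-1}|\Lambda^{*\prime}_\theta-\Lambda^{*\prime}|$ yields $C_0/C_4\propto l_1/v_1$. Your Step 1 attaches $1/\lambda_{\min}$ to the complexity term instead, which flips the sign of that exponent. In the paper, the Lipschitz constant of the loss enters the complexity term only inside the logarithm of the bracketing number. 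Dimensional analysis cannot settle the exponents: there is one dimension and three dimensional quantities, which leaves two exponents undetermined.
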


According to Corollary \ref{col: selection of L and delta}, hyperparameters $L$ and $\Delta$ depend on two dataset-specific quantities: (1) the expected inter-event time $\mathbb{E}[t_i]$, and (2) the degree of temporal variability in the TPP intensity, captured by its Lipschitz constant $l_1$. As $\mathbb{E}[t_i]$ grows, a longer interpolation horizon, i.e., a larger $L$, is required to track changes in the CCIF accurately.
Moreover, as the intensity fluctuation, reflected by $l_1$, becomes sharper, MAS should provide a finer-grained approximation, which requires a smaller $\Delta$. A larger sample size $Z$ also supports a finer interpolation resolution, while a larger per-knot freedom $q$ or longer maximum sequence length $N_0$ requires a coarser interval to control complexity.
Furthermore, Corollary \ref{col: selection of L and delta} provides a simple plug-in strategy for selecting $L$ and $\Delta$:
For synthetic datasets where the functional form of the intensity is known, $l_1$, $\lambda_{\min}$, and $\mathbb{E}[t_i]$ can be estimated directly from samples, yielding an approximate lower bound for $L$ and an approximate upper bound for $\Delta$. 
For general real-world datasets, inspired by Scott’s rule \citep{hollander2013nonparametric}, we suggest assuming that the data follow a classic TPP, e.g., exponential-kernel Hawkes process, and estimating $L$ and $\Delta$ accordingly. 
Given $L$ and $\Delta$, the number of interpolation knots $p$ can also be calculated as $p=L/\Delta$.



It is worth noting that, although \cref{thm: Gen Error for MAS: 2} shows that an excessively large $L$ and an overly small $\Delta$ may lead to overfitting and thus increased generalization error, such overfitting can be effectively mitigated by standard training techniques (e.g., dropout \citep{srivastava2014dropout} or SGD-based optimization \citep{keskar2017improving}). In practice, we observe that this overfitting effect is mild. 
Therefore, we recommend using a moderately large $L$ and a relatively fine $\Delta$, consistent with the bounds provided in Corollary \ref{col: selection of L and delta}.

\section{Experiments}
\label{experiments}

\begin{figure*}[htbp]
    \captionsetup[subfloat]{font=footnotesize, labelfont=footnotesize}
    \centering
    \subfloat[Hawkes1 CIF\label{fig:hawkes1}]{
    \adjustbox{width=2.8cm,height=2.8cm}{
    \includegraphics[clip, trim=3mm 0cm 3mm 0cm, width=\linewidth]{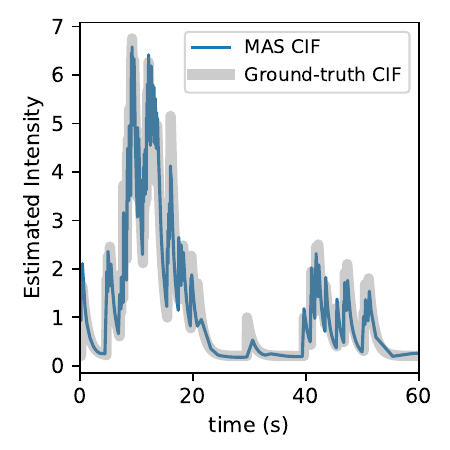}
    }
    }
    \hfill
        \subfloat[Self-correcting CIF\label{fig:sc1}]{
        \adjustbox{width=2.8cm,height=2.8cm}{
        \includegraphics[clip, trim=3mm 0cm 3mm 0cm, width=\linewidth]{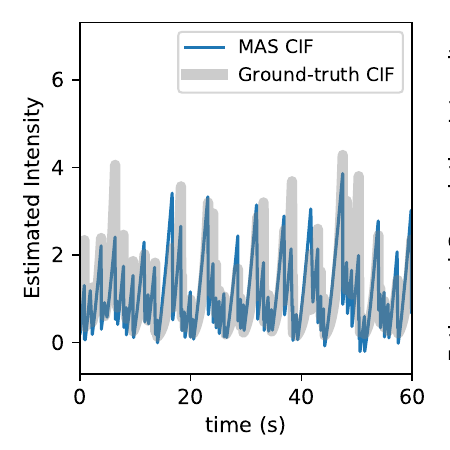}
        }
    }
    \hfill
        \subfloat[Hawkes1 CCIF\label{fig:hawkes2}]{
        \adjustbox{width=2.8cm,height=2.8cm}{
        \includegraphics[clip, trim=0mm 0cm 3mm 0cm, width=\linewidth]{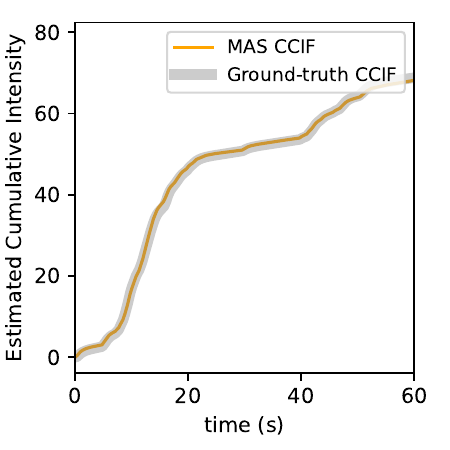}
        }
    }
    \hfill
        \subfloat[Self-correcting CCIF\label{fig:sc2}]{
        \adjustbox{width=2.8cm,height=2.8cm}{
        \includegraphics[clip, trim=0mm 1.5mm 3mm 1.5mm, width=\linewidth]{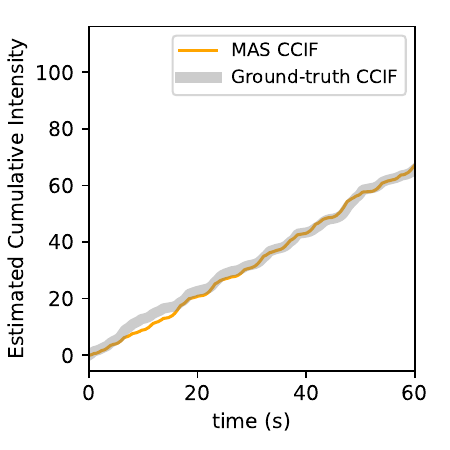}
        }
    }
    \hfill
        \subfloat[Training Time\label{fig:time}]{
        \adjustbox{width=2.8cm,height=2.8cm}{
        \includegraphics[clip, trim=2mm 0mm -2mm 0mm, width=\linewidth]{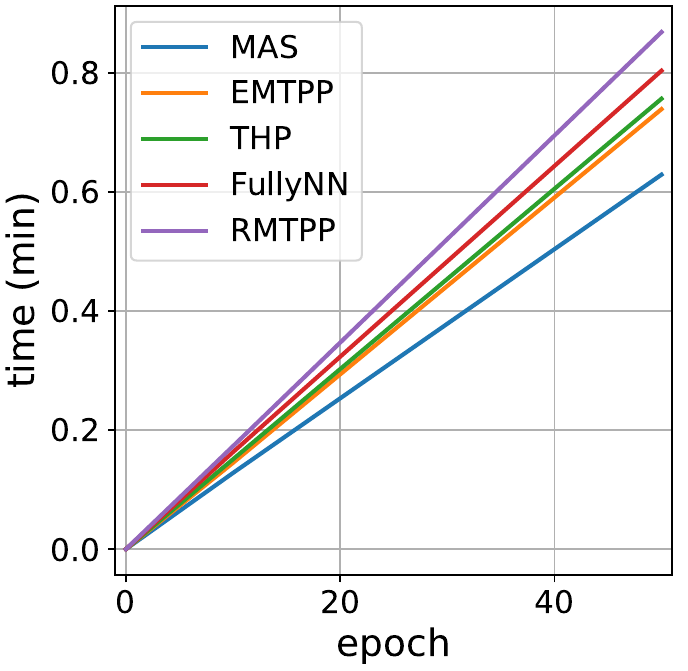}
        }
        }
\caption{Qualitative intensity estimation and efficiency of MAS. The first four panels compare the estimated and ground-truth CIF/CCIF on the Hawkes1 and Self-correcting datasets, while the last panel reports training time under matched model sizes, showing that MAS captures temporal dynamics with favorable computational efficiency.}
    \label{fig:intensity_estimation}
\end{figure*}

\begin{table*}[t]
    \centering
    \caption{Performance comparison between baselines and MAS on the univariate datasets w.r.t. NLL and RMSE. The best is highlighted in \colorbox{lightblue}{\textbf{bold}}, the second-best \colorbox{lightgreen}{\underline{underlined}}. Standard deviations are reported in parentheses.}
    \resizebox{1.0\linewidth}{!}{
    \begin{sc}
    \begin{tabular}{@{\hskip 2pt}l@{\hskip 4pt}c@{\hskip 4pt}c@{\hskip 4pt}c@{\hskip 4pt}c@{\hskip 4pt}c@{\hskip 4pt}c@{\hskip 4pt}c@{\hskip 4pt}c@{\hskip 4pt}c@{\hskip 4pt}c@{\hskip 4pt}c@{\hskip 4pt}c@{\hskip 2pt}}
        \toprule
        \multirow{2}{*}{Model} & \multicolumn{2}{c}{Hawkes1} & \multicolumn{2}{c}{Hawkes2} & \multicolumn{2}{c}{Renewal1} & \multicolumn{2}{c}{Renewal2} & \multicolumn{2}{c}{Self-correcting} \\
        \cmidrule{2-11}
        & NLL($\downarrow$) & RMSE($\downarrow$) & NLL($\downarrow$) & RMSE($\downarrow$) & NLL($\downarrow$) & RMSE($\downarrow$) & NLL($\downarrow$) & RMSE($\downarrow$) & NLL($\downarrow$) & RMSE($\downarrow$) \\
        \midrule
        IPP & -1.010\,\textsubscript{(0.001)} & 2.391\,\textsubscript{(0.019)} & -0.992\,\textsubscript{(0.006)} & 2.613\,\textsubscript{(0.005)} & -1.491\,\textsubscript{(0.002)} & 1.976\,\textsubscript{(0.025)} & -0.809\,\textsubscript{(0.001)} & {3.028}\textsubscript{(0.015)} & -1.281\,\textsubscript{(0.002)} & 1.144\,\textsubscript{(0.019)} \\

        {Hawkes} & \rval{-1.645}{0.009} & \rval{2.365}{0.002} & \rval{-1.553}{0.0092} & \rval{2.523}{0.004} & \rval{-1.336}{0.007} & \rval{2.143}{0.013} & \rval{1.487}{0.012} & \rval{3.382}{0.009} & \rval{-1.123}{0.008} & \rval{1.282}{0.011} \\

        RMTPP & -1.472\,\textsubscript{(0.023)} & 2.387\,\textsubscript{(0.006)} & -1.880\,\textsubscript{(0.021)} & 2.583\,\textsubscript{(0.004)} & -1.571\,\textsubscript{(0.039)} & 2.071\,\textsubscript{(0.012)} & -1.950\,\textsubscript{(0.030)} & 3.064\,\textsubscript{(0.007)} & -1.348\,\textsubscript{(0.021)} & 1.162\,\textsubscript{(0.001)} \\

        THP & -1.512\,\textsubscript{(0.010)} & 2.453\,\textsubscript{(0.018)} & -1.534\,\textsubscript{(0.028)} & 2.637\,\textsubscript{(0.052)} & -1.619\,\textsubscript{(0.001)} & {1.710}\textsubscript{(0.002)} & -1.508\,\textsubscript{(0.016)} & {2.980}\,\textsubscript{(0.085)} & -1.312\,\textsubscript{(0.001)} & 1.221\,\textsubscript{(0.005)} \\

        SAHP & {-1.766}\textsubscript{(0.029)} & 2.460\,\textsubscript{(0.017)} & \colorbox{lightgreen}{\underline{-2.102}}\textsubscript{(0.001)} & 2.645\,\textsubscript{(0.053)} & {-1.888}\textsubscript{(0.007)} & 1.737\,\textsubscript{(0.001)} & {-1.973}\textsubscript{(0.009)} & 2.981\,\textsubscript{(0.089)} & \colorbox{lightgreen}{\underline{-1.421}}\textsubscript{(0.001)} & 1.122\,\textsubscript{(0.019)} \\

        FullyNN & -1.471\,\textsubscript{(0.014)} & 2.463\,\textsubscript{(0.033)} & -1.488\,\textsubscript{(0.021)} & 2.651\,\textsubscript{(0.018)} & -1.648\,\textsubscript{(0.011)} & 1.929\,\textsubscript{(0.057)} & -1.453\,\textsubscript{(0.011)} & 3.064\,\textsubscript{(0.039)} & -1.374\,\textsubscript{(0.006)} & 1.220\,\textsubscript{(0.018)} \\

        TriTPP & -1.510\,\textsubscript{(0.005)} & 2.409\,\textsubscript{(0.022)} & -1.939\,\textsubscript{(0.000)} & 2.628\,\textsubscript{(0.007)} & -1.821\,\textsubscript{(0.001)} & 2.038\,\textsubscript{(0.027)} & -1.910\,\textsubscript{(0.002)} & 3.081\,\textsubscript{(0.028)} & -1.328\,\textsubscript{(0.002)} & \colorbox{lightblue}{\textbf{0.897}}\textsubscript{(0.017)} \\

        EMTPP & -1.466\,\textsubscript{(0.021)} & \colorbox{lightgreen}{\underline{2.324}}\textsubscript{(0.049)} & -1.468\,\textsubscript{(0.013)} & {2.541}\textsubscript{(0.007)} & -1.646\,\textsubscript{(0.004)} & {1.671}\textsubscript{(0.102)} & -1.453\,\textsubscript{(0.012)} & 3.038\,\textsubscript{(0.014)} & {-1.378}\textsubscript{(0.003)} & 1.201\,\textsubscript{(0.061)} \\

        IFTPP & \colorbox{lightgreen}{\underline{-1.802}}\,\textsubscript{(0.026)} & {2.429}\textsubscript{(0.032)} & -1.873\,\textsubscript{(0.007)} &{2.563}\textsubscript{(0.052)} & 
        \colorbox{lightgreen}{\underline{-1.903}}\,\textsubscript{(0.009)} & 1.789\,\textsubscript{(0.012)} &
        -1.385\,\textsubscript{(0.004)} & {2.967}\textsubscript{(0.104)} &  \colorbox{lightblue}{\textbf{-1.478}}\textsubscript{(0.002)} & {0.963}\,\textsubscript{(0.009)} \\

        WSM & -1.552\,\textsubscript{(0.0114)} & {2.438}\textsubscript{(0.015)} & -1.578\,\textsubscript{(0.016)} &{2.609}\textsubscript{(0.041)} & 
        -1.599\,\textsubscript{(0.001)} & 1.681\,\textsubscript{(0.119)} &
        -1.531\,\textsubscript{(0.016)} & 2.943\,\textsubscript{(0.116)} &  {-1.308}\textsubscript{(0.001)} & 1.219\,\textsubscript{(0.000)} \\

        {DTPP} & \rval{-1.455}{0.005} & \rval{2.951}{0.007} & \rval{-2.031}{0.012} & \rval{\colorbox{lightgreen}{\underline{2.517}}}{0.002} & \rval{-1.651}{0.007} & \rval{1.833}{0.018} & \rval{\colorbox{lightblue}{\textbf{-2.097}}}{0.011} & \rval{\colorbox{lightgreen}{\underline{2.875}}}{0.046} & \rval{-1.341}{0.003} & \rval{\colorbox{lightgreen}{\underline{0.898}}}{0.000} \\
        
       {CuFun} & \rval{-1.488}{0.006} & \rval{2.441}{0.009} & \rval{-1.543}{0.011} & \rval{2.634}{0.005} & \rval{-1.844}{0.003} & \rval{\colorbox{lightblue}{\textbf{1.441}}}{0.002} & \rval{-1.454}{0.005} & \rval{3.069}{0.023} & \rval{-1.410}{0.002} & \rval{1.220}{0.001} \\

        {HYPRO} & \rval{-1.239}{0.003} & \rval{2.361}{0.011} & \rval{-0.948}{0.005} & \rval{2.553}{0.021} & \rval{-1.524}{0.004} & \rval{1.712}{0.082} & \rval{-1.521}{0.012} & \rval{3.012}{0.012} & \rval{-1.214}{0.004} & \rval{1.192}{0.002} \\
        

        \midrule
        MAS & \colorbox{lightblue}{\textbf{-1.810}}\textsubscript{(0.024)} & \colorbox{lightblue}{\textbf{2.313}}\textsubscript{(0.025)} & \colorbox{lightblue}{\textbf{-2.190}}\textsubscript{(0.010)} & \colorbox{lightblue}{\textbf{2.488}}\textsubscript{(0.059)} & \colorbox{lightblue}{\textbf{-1.952}}\textsubscript{(0.003)} & \colorbox{lightgreen}{\underline{1.533}}\,\textsubscript{(0.178)} & \colorbox{lightgreen}{\underline{-2.062}}\textsubscript{(0.012)} & \colorbox{lightblue}{\textbf{2.823}}\textsubscript{(0.041)} & -1.380\,\textsubscript{(0.006)} & {1.215}\textsubscript{(0.054)} \\
        \bottomrule
    \end{tabular}
    \end{sc}  
    } 
    \label{tab:hawkes_results}
\end{table*}

This section presents the performance of MAS compared to other baseline models. We also evaluate the efficiency of MAS relative to other CCIF- or CIF-based methods. An ablation study is conducted at the end of the experiment. 

\subsection{Baselines, Evaluation Metrics and Experiment Setups}

\textbf{Baselines}:
For experiment baselines, we systematically categorize thirteen representative models into four paradigms:
\begin{itemize}
    \item \textit{Parametric Models:} Inhomogeneous Poisson Process (\textbf{IPP}) \citep{daley2007introduction} and the standard \textbf{Hawkes} process \citep{hawkes1971spectra}.
    \item \textit{CIF Models:} Approaches that directly parameterize the CIF, including \textbf{RMTPP} \citep{du2016recurrent}, \textbf{THP} \citep{simiao2020transformer}, and \textbf{SAHP} \citep{zhang2020SAHP}.
    \item \textit{MNN-based CCIF Models:} Approaches that leverage continuous MNNs to model the CCIF, like \textbf{FullyNN} \citep{omi2019fully}, \textbf{EMTPP} \citep{liu2024cumulative}, and \textbf{CuFun} \citep{wang2024cumulative}.
    \item \textit{Integral/Intensity-Free and other state-of-the-art Models:} Alternative flow-based or score-matching-based models, including \textbf{IFTPP} \citep{shchur2019intensity}, \textbf{TriTPP} \citep{shchur2020fast} and \textbf{WSM} \citep{cao2025scorematching}. We also compare other state-of-the-art models like \textbf{HYPRO} \citep{xue2022hypro} and \textbf{DTPP} \citep{panosdecomposable}.
\end{itemize}

Some baselines, such as IPP, {Hawkes}, FullyNN, and TriTPP, only consider the univariate case in their original implementations. Therefore, our experiments are divided into two parts. First, we compare the performance of MAS with all baselines on univariate synthetic data. Then, we evaluate MAS on multivariate real-world datasets, comparing it with multivariate baselines, including RMTPP, THP, SAHP, EMTPP, IFTPP, WSM, {CuFun, HYPRO, and DTPP}.


\textbf{Metrics}:
We use three evaluation metrics: negative log-likelihood (\textbf{NLL}), root-mean-square error (\textbf{RMSE}) of predicted timestamps, and accuracy (\textbf{ACC}) of predicted event types. 
The log-likelihood, predicted timestamps, and event types are computed based on \cref{likelihood,predicting t_n+1}. 

\textbf{Experimental Setup}:
To ensure the performance gains solely originate from our proposed spline-based CCIF formulation rather than the capacity of the history encoder, we rigorously align the encoder architecture. Specifically, MAS, FullyNN, THP, EMTPP, and WSM are all uniformly equipped with an identically configured Transformer network (1 layer, 16 attention heads, 64-dimensional model size) to extract the history embedding. 
Furthermore, since continuous MNNs (e.g., FullyNN, EMTPP) are highly sensitive to large numerical inputs and prone to encountering convergence failures during optimization, we applied a uniform timestamp scaling strategy (e.g., dividing inter-event times by 10 or 100 depending on the absolute scale of the datasets). All baselines were retrained and evaluated under the exact same scaled configurations, identical learning rate ($0.001$), fixed batch size ($64$), and identical $60\% - 20\% - 20\%$ train-validation-test splits. 
For MAS, we incorporate 10 knots for the interpolation component, using RQS as the interpolation monotone spline and a linear-plus-exponential function \(g(t)= c + bt - \exp(-at) \) as the extrapolation function. 
The interpolation support \( L \) is set to a constant. 
All models are trained with the Adam optimizer, conducted on an RTX 4090 with 24GB of memory. Details refer to Appendix C. 


\subsection{Synthetic Univariate Data}
\textbf{Datasets}:
In the synthetic univariate experiment, we consider five datasets constructed by FullyNN \citep{omi2019fully}: two Hawkes processes (1,2) with different intensities, two renewal processes 1,2 (stationary/non-stationary), and a self-correcting process. Details refer to Appendix C.

\begin{table*}[t]
    \centering
    \caption{Performance comparison between baselines and MAS on the multivariate real-world datasets w.r.t. NLL, ACC and RMSE. 
    The best is highlighted in \colorbox{lightblue}{\textbf{bold}}, the second-best \colorbox{lightgreen}{\underline{underlined}}. Standard deviations are reported in parentheses. 
    }
\resizebox{\linewidth}{!}{
     \begin{sc}
      \begin{tabular}{@{\hskip 2pt}l@{\hskip 4pt}c@{\hskip 4pt}c@{\hskip 4pt}c@{\hskip 4pt}c@{\hskip 4pt}c@{\hskip 4pt}c@{\hskip 4pt}c@{\hskip 4pt}c@{\hskip 4pt}c@{\hskip 4pt}c@{\hskip 4pt}c@{\hskip 4pt}c@{\hskip 2pt}}
        \toprule
        \multirow{2}{*}{Model} & \multicolumn{3}{c}{Retweet} & \multicolumn{3}{c}{Earthquake} & \multicolumn{3}{c}{Taxi} & \multicolumn{3}{c}{Taobao} \\
        \cmidrule{2-13}
        & NLL($\downarrow$) & ACC($\uparrow$) & RMSE($\downarrow$) & NLL($\downarrow$) & ACC($\uparrow$) & RMSE($\downarrow$) & NLL($\downarrow$) & ACC($\uparrow$) & RMSE($\downarrow$) & NLL($\downarrow$) & ACC($\uparrow$) & RMSE($\downarrow$) \\
        \midrule

        RMTPP    & {-0.468}\,\textsubscript{(0.011)} & 0.559\,\textsubscript{(0.001)} & 22.19\,\textsubscript{(0.003)} & 0.251\,\textsubscript{(0.006)} & 0.424\,\textsubscript{(0.000)} & 1.530\,\textsubscript{(0.004)} & -0.299\,\textsubscript{(0.059)} & 0.905\,\textsubscript{(0.005)} & 0.366\,\textsubscript{(0.001)} & -0.191\,\textsubscript{(0.029)} & 0.442\,\textsubscript{(0.001)} & {0.132}\,\textsubscript{(0.000)} \\

        THP      & -0.459\,\textsubscript{(0.003)} & 0.585\,\textsubscript{(0.000)} & \colorbox{lightgreen}{\underline{20.88}}\,\textsubscript{(0.004)} & 0.300\,\textsubscript{(0.010)} & 0.451\,\textsubscript{(0.000)} & \colorbox{lightgreen}{\underline{1.391}}\,\textsubscript{(0.005)} & -0.293\,\textsubscript{(0.001)} & 0.913\,\textsubscript{(0.001)} & {0.356}\,\textsubscript{(0.000)} & -0.227\,\textsubscript{(0.025)} & \colorbox{lightgreen}{\underline{0.594}}\,\textsubscript{(0.021)} & 0.132\,\textsubscript{(0.000)} \\

        SAHP     & -0.454\,\textsubscript{(0.000)} & 0.584\,\textsubscript{(0.000)} & \colorbox{lightblue}{\textbf{20.43}}\,\textsubscript{(0.019)} & 0.275\,\textsubscript{(0.012)} & 0.417\,\textsubscript{(0.003)} & {1.448}\,\textsubscript{(0.001)} & {-0.442}\,\textsubscript{(0.077)} & 0.902\,\textsubscript{(0.005)} & 0.358\,\textsubscript{(0.002)} & {-0.586}\,\textsubscript{(0.099)} & 0.572\,\textsubscript{(0.011)} & 0.141\,\textsubscript{(0.003)} \\

        EMTPP & -0.446\,\textsubscript{(0.013)} & {0.595}\,\textsubscript{(0.001)} & 22.10\,\textsubscript{(0.075)} & 0.388\,\textsubscript{(0.009)} & {0.458}\,\textsubscript{(0.002)} & 1.517\,\textsubscript{(0.006)} & -0.312\,\textsubscript{(0.025)} & {0.915}\,\textsubscript{(0.003)} & 0.362\,\textsubscript{(0.001)} & -0.039\,\textsubscript{(0.005)} & 0.591\,\textsubscript{(0.008)} & {0.131}\,\textsubscript{(0.001)} \\

        IFTPP & \colorbox{lightgreen}{\underline{-0.477}}\,\textsubscript{(0.015)} & \colorbox{lightblue}{\textbf{0.603}}\,\textsubscript{(0.003)} & 22.18\,\textsubscript{(0.204)} & \colorbox{lightgreen}{\underline{0.191}}\,\textsubscript{(0.034)} & {0.434}\,\textsubscript{(0.013)} & 1.488\,\textsubscript{(0.007)} & -0.206\,\textsubscript{(0.019)} & {0.914}\,\textsubscript{(0.006)} & 0.377\,\textsubscript{(0.003)} & {-0.594}\,\textsubscript{(0.013)} & 0.446\,\textsubscript{(0.005)} & {0.134}\,\textsubscript{(0.003)} \\

        WSM & -0.449\,\textsubscript{(0.002)} & {0.599}\,\textsubscript{(0.000)} & 22.12\,\textsubscript{(0.017)} & {0.235}\,\textsubscript{(0.002)} & \colorbox{lightblue}{\textbf{0.470}}\,\textsubscript{(0.000)} & 1.510\,\textsubscript{(0.003)} & -0.281\,\textsubscript{(0.001)} & \colorbox{lightgreen}{\underline{0.916}}\,\textsubscript{(0.001)} & 0.361\,\textsubscript{(0.001)} & \colorbox{lightblue}{\textbf{-1.115}}\,\textsubscript{(0.053)} & 0.581\,\textsubscript{(0.005)} & {0.132}\,\textsubscript{(0.000)} \\

        {DTPP} & \rval{-0.459}{0.002} & \colorbox{lightgreen}{\underline{0.601}}\scriptsize{(0.001)} & \rval{22.38}{0.012} & \rval{0.256}{0.015} & \rval{0.462}{0.002} & \rval{1.405}{0.041} & \rval{\colorbox{lightgreen}{\underline{-0.446}}}{0.012} & \rval{\colorbox{lightblue}{\textbf{0.923}}}{0.001} & \rval{0.361}{0.007} & \rval{-0.984}{0.003} & \rval{0.593}{0.005} & \rval{\colorbox{lightgreen}{\underline{0.128}}}{0.000} \\

        {CuFun} & \rval{-0.462}{0.001} & \rval{0.600}{0.002} & \rval{20.96}{0.008} & \rval{0.230}{0.012} & \rval{0.463}{0.003} & \rval{1.430}{0.014} & \rval{-0.273}{0.008} & \rval{0.914}{0.002} & \colorbox{lightgreen}{\underline{0.345}}\scriptsize{(0.004)} & \rval{-0.878}{0.012} & \rval{0.585}{0.005} & \rval{0.133}{0.003} \\

        {HYPRO} & \rval{-0.411}{0.023} & \rval{0.599}{0.002} & \rval{21.24}{0.003} & \rval{0.743}{0.032} & \rval{0.456}{0.001} & \rval{1.452}{0.012} & \rval{-0.428}{0.011} & \rval{{0.914}}{0.005} & \rval{0.363}{0.003} & \rval{-0.947}{0.032} & \rval{0.583}{0.004} & \rval{0.130}{0.002} \\

        \midrule
        MAS      & \colorbox{lightblue}{\textbf{-0.489}}\,\textsubscript{(0.044)} & {0.598}\,\textsubscript{(0.005)} & {22.15}\,\textsubscript{(0.673)} & \colorbox{lightblue}{\textbf{0.186}}\,\textsubscript{(0.007)} & \colorbox{lightblue}{\textbf{0.470}}\,\textsubscript{(0.004)} & \colorbox{lightblue}{\textbf{1.217}}\,\textsubscript{(0.013)} & \colorbox{lightblue}{\textbf{-0.479}}\,\textsubscript{(0.003)} & {0.905}\,\textsubscript{(0.002)} & \colorbox{lightblue}{\textbf{0.331}}\,\textsubscript{(0.001)} & \colorbox{lightgreen}{\underline{-1.005}}\,\textsubscript{(0.055)} & \colorbox{lightblue}{\textbf{0.596}}\,\textsubscript{(0.021)} & \colorbox{lightblue}{\textbf{0.126}}\,\textsubscript{(0.003)} \\
        \bottomrule
    \end{tabular}
    \end{sc}
     }
     
    \label{tab:performance_comparison}
\end{table*}

\textbf{Results}:
\cref{tab:hawkes_results} presents the performance comparison between all baselines and MAS. 
MAS achieved the best performance across most metrics on the majority of datasets. This validates our theory that, by introducing piecewise monotone spline as the interpolation component, MAS is capable of fitting various ground-truth CCIFs derived from different classical TPP. 
In contrast, other traditional models (e.g., IPP) or deep Hawkes processes (e.g., THP) become less effective in characterizing other TPP. 
Meanwhile, MAS also outperforms models that parametrize CCIF using MNNs in terms of fitting ability. \cref{fig:intensity_estimation} shows MAS’s predicted CIF and CCIF for two datasets, implying MAS successfully captures the intensity patterns of different TPP. 
A detailed comparison between MAS and FullyNN is provided in Appendix C, further validating the superiority of MAS over classic CCIF-based methods.

\subsection{Real Multivariate Data}
\textbf{Datasets}:
We consider four real multivariate datasets: \textbf{Taxi} \cite{TAXI}, \textbf{Taobao} \cite{xue2023easytpp}, \textbf{Retweet} \cite{RETWEET} and \textbf{Earthquake} \cite{xue2023easytpp}. Detailed information is in Appendix C.

\textbf{Results}:
\cref{tab:performance_comparison} presents the performance comparison between all baselines and MAS. 
On real-world datasets, MAS once again achieved the best results across multiple metrics. This further demonstrates the advantages of MAS in terms of flexibility and generalization, indicating that in multivariate scenarios, MAS successfully captures the interactions among multiple variables.
The success of MAS can be attributed to two factors: first, the inclusion of more interpolation knots along the timeline, which makes MAS more flexible than baseline models; second, the introduction of the extrapolation component, which ensures that MAS correctly simulates the CCIF while maintaining strong generalization ability. Therefore, our model achieves greater flexibility compared to baseline models without compromising generalization.

\subsection{Efficiency Test}
We further compare the computational efficiency of MAS against THP, RMTPP, and two CCIF-based models, EMTPP and FullyNN. To ensure a fair comparison, we roughly align the parameter counts of all models on the Hawkes1 dataset and record the actual training time over 50 epochs under identical settings. As shown in \cref{fig:time}, MAS runs significantly faster than all four baselines, demonstrating that it is highly competitive in computational efficiency alongside its modeling accuracy. The details are provided in Appendix C.
For EMTPP and FullyNN, automatic differentiation invariably increases the per-step computational overhead and exacerbates backpropagation costs over long sequences and high-dimensional parameter spaces. 
In contrast, the explicitly decoupled piecewise interpolation and extrapolation components of MAS enable direct intensity evaluations within each time segment, eliminating the need for gradient flows or numerical approximations. Ultimately, MAS delivers significantly lower training times and superior computational efficacy.

\subsection{Long Horizon Prediction}
We further evaluate the long-horizon prediction capabilities of MAS on the univariate Hawkes1 and multivariate TAXI datasets, comparing its performance against THP, EMTPP, and CuFun in forecasting the next 3, 5, and 10 events. As shown in \cref{tab:long_horizon}, MAS not only maintains high accuracy in short-term predictions but also consistently outperforms the baselines as the forecasting horizon extends. 
The sustained superiority of MAS in multi-step scenarios indicates that it does not merely overfit localized intensity fluctuations immediately following an event; rather, it effectively leverages the extrapolation component to secure stable long-term trends, thereby significantly mitigating cumulative error propagation. For sequences exhibiting strong self-exciting dynamics, such as Hawkes1, MAS precisely captures both the rapid intensity decay and subsequent re-excitations. Meanwhile, on multivariate datasets like TAXI, MAS seamlessly integrates shared history representations with highly expressive piecewise monotone splines, effectively modeling complex mutual interactions across different event types and consequently enhancing the joint forecasting accuracy for multiple future events.

\begin{table}[t]
    \centering
    \caption{Long-horizon results. The best is in \colorbox{lightblue}{\textbf{bold}}.}
    \begingroup
    \resizebox{0.9\linewidth}{!}{
    \begin{sc}
    \begin{tabular}{c c c c c}
        \toprule
        & & \multicolumn{2}{c}{TAXI} & HAWKES1 \\
        \cmidrule(lr){3-4} \cmidrule(lr){5-5}
        \# Events & Model & ACC($\uparrow$) & RMSE($\downarrow$) & RMSE($\downarrow$) \\
        \midrule
        \multirow{4}{*}{\textbf{Next 3}} 
        & THP   & 0.818 & 0.585 & 3.597 \\
        & EMTPP & 0.818 & 0.588 & 3.921 \\
        & CuFun & 0.802 & \colorbox{lightblue}{\textbf{0.523}} & 3.833 \\
        & MAS   & \colorbox{lightblue}{\textbf{0.819}} & 0.525 & \colorbox{lightblue}{\textbf{3.312}} \\

        \midrule
        \multirow{4}{*}{\textbf{Next 5}} 
        & THP   & \colorbox{lightblue}{\textbf{0.796}} & 0.832 & 5.445 \\
        & EMTPP & 0.794 & 0.740 & 5.129 \\
        & CuFun & 0.790 & 0.758 & 5.146 \\
        & MAS   & 0.791 & \colorbox{lightblue}{\textbf{0.704}} & \colorbox{lightblue}{\textbf{4.281}} \\

        \midrule
        \multirow{4}{*}{\textbf{Next 10}} 
        & THP   & 0.780 & 1.338 & 8.356 \\
        & EMTPP & 0.750 & 1.173 & 7.995 \\
        & CuFun & 0.772 & 1.341 & 7.797 \\
        & MAS   & \colorbox{lightblue}{\textbf{0.785}} & \colorbox{lightblue}{\textbf{1.005}} & \colorbox{lightblue}{\textbf{6.235}} \\
        \bottomrule
    \end{tabular}
    \end{sc}
    }
    \endgroup
    \label{tab:long_horizon}
\end{table}

\subsection{Ablation Study}
\textbf{Parameter Sensitivity}: We conduct ablation studies to verify the contribution of each component in MAS. We consider the following factors: \textbf{(1)} the type of interpolation monotone spline (RQS/RLS), \textbf{(2)} the type of extrapolation function (linear/linear-plus-exponential), \textbf{(3)} the interpolation support length ($L=4/6/10$), \textbf{(4)} the number of interpolation knots ($p=5/10/50/100$). 
The experiments are performed on the Taxi and Earthquake datasets, using NLL and ACC as metrics. We set RQS interpolation, linear-plus-exponential extrapolation, \( L = 6 \), and \( p = 10 \) as the default choices. 

\begin{table}[t]
    \centering
    \caption{Ablation study results. The best is in \colorbox{lightblue}{\textbf{bold}}.}
    \resizebox{0.9\linewidth}{!}{
    \begin{sc}
    \begin{tabular}{c c c c c c}
        \toprule
        & & \multicolumn{2}{c}{TAXI} & \multicolumn{2}{c}{EARTHQUAKE} \\
        \cmidrule(lr){3-4} \cmidrule(lr){5-6}
        Component & Setting & NLL($\downarrow$) & ACC($\uparrow$) & NLL($\downarrow$) & ACC($\uparrow$) \\
        \midrule
        
        \multirow{4}{*}{\shortstack{\# Knots \\ ($p$)}} 
        & p=5   & -0.460   & 0.899 & 0.185  & 0.467  \\
        & p=10  & \colorbox{lightblue}{\textbf{-0.479}}  & \colorbox{lightblue}{\textbf{0.905}} & 0.183  & \colorbox{lightblue}{\textbf{0.470}} \\
        & p=50  & -0.467 & 0.901 & \colorbox{lightblue}{\textbf{0.180}}   & 0.464 \\
        & p=100 & -0.447 & 0.899 & 0.209  & 0.441 \\
        
        \midrule
        \multirow{3}{*}{\shortstack{Length \\ ($L$)}} 
        & L=4  & {-0.427} & 0.897 & 0.190   & 0.457  \\
        & L=6  & \colorbox{lightblue}{\textbf{-0.479}} & \colorbox{lightblue}{\textbf{0.905}} & 0.186 & \colorbox{lightblue}{\textbf{0.470}}  \\
        & L=10  & -0.424 & 0.904 & \colorbox{lightblue}{\textbf{0.172}} & {0.466} \\
        
        \midrule
        \multirow{2}{*}{Extrap.} 
        & exp    & \colorbox{lightblue}{\textbf{-0.479}} & \colorbox{lightblue}{\textbf{0.905}} & 0.186 & \colorbox{lightblue}{\textbf{0.470}} \\
        & linear & -0.460  & 0.900  & \colorbox{lightblue}{\textbf{0.180}}  & 0.465  \\
        
        \midrule
        \multirow{2}{*}{Interp.} 
        & RQS & \colorbox{lightblue}{\textbf{-0.479}} & 0.905 & \colorbox{lightblue}{\textbf{0.186}} & \colorbox{lightblue}{\textbf{0.470}}  \\
        & RLS & -0.470  & \colorbox{lightblue}{\textbf{0.910}}  & {0.186} & {0.466}\\
        \bottomrule
    \end{tabular}
    \end{sc}
    }
    \label{tab:ablation_study}
\end{table}

As shown in \cref{tab:ablation_study}, as the number of interpolation knots \( p \) increases and the interpolation support length \( L \) extends, the model's performance on both datasets first improves and then declines. This validates our theoretical analysis: the increase in interpolation support length and the number of interpolation knots leads to a rise in the complexity error term in \cref{thm: Gen Error for MAS: 2}. This overfitting, however, is not severe. 
Furthermore, MAS is insensitive to different interpolation and extrapolation functions. We speculate that under the default settings (L=6, p=10), MAS already fits the CCIFs at sufficiently fine granularity, making different functions yield negligible differences.

\section{Conclusion}
\label{conclusion}
We propose a novel approach, MAS, for modeling the CCIF of TPPs. Compared to MNNs, MAS offers greater generality, flexibility, and efficiency, and enjoys strong theoretical guarantees for both fitting and generalization.
We prove that the interpolation component of MAS enhances fitting performance, while the extrapolation component of MAS enhances generalization performance. 
Extensive experiments demonstrate that MAS outperforms other CCIF- and CIF-based modeling methods on most univariate and multivariate benchmarks across multiple evaluation metrics.

\bibliographystyle{IEEEtran}
\bibliography{11_references}


\onecolumn
{
\appendices
\section{Notations}
\label{app: notation}
All notations used in the paper are listed in the \cref{tab: notation}. 

\begin{table*}[ht]
\caption{List of notations.}
\begin{tabularx}{\linewidth}{lX|lX}
\toprule
Symbol & Definition & Symbol & Definition \\
\midrule
$a_m,b_m,\cdots,e_m$ & parameters of $f_m(t)$ & $C_1,\cdots,C_4,B$ & generic constants \\
$\delta_m, y_m, w_m$ & derivative, cumulative value, and knot location at the $m$-th interpolation point & $\Delta$ & maximum interpolation interval length \\
$f_m(t)$ & $m$-th monotone spline interpolation segment & $f_{m,\theta}^{*(n)}$ & $m$-th MAS interpolation after $t_n$, controlled by $\theta$ \\
$g(t)$ & an extrapolation function & $g_{\theta}^{*(n)}$ & MAS extrapolation after $t_n$, controlled by $\theta$ \\
$\boldsymbol{h}_n$ & history embedding & $k_n$ & $n$-th event type \\
$L$ & total interpolation support length & $\lambda^*_k$ & ground-truth conditional intensity function \\
$\lambda^{*(n)}$ & ground-truth conditional intensity after event $n$ & $\Lambda^*_k$ & conditional cumulative intensity for the $k$-th variable \\
$\Lambda^{*(n)}$ & ground-truth conditional cumulative intensity after $t_n$ & $\Lambda_{\theta}^{*}$ & estimated cumulative intensity, controlled by $\theta$ \\
$\mathcal{L}$ & expected negative log-likelihood & $\widehat{\mathcal{L}}$ & empirical negative log-likelihood \\
${\mathcal{L}}^*$ & optimal negative log-likelihood induced by the ground-truth CCIF & $M$ & number of interpolation intervals or knots \\
$N$ & the number of timestamps in a sequence & $N_0$ & maximum sequence length \\
$p(\cdot)$ & probability measure & $S$ & a TPP sequence \\
$T$ & Time window & $t_n$ & $n$-th timestamp \\
$\theta_1$ & parameters in the encoder & $\theta_2$ & parameters in the MLP \\
$\widehat{t}_{n+1}$ & predicted event timestamp & $\widehat{k}_{n+1}$ & predicted event type \\
$\mathcal{H}_{t^-}$ & past history & $Z$ & number of sequences \\
\bottomrule
\end{tabularx}
\label{tab: notation}
\end{table*}

\section{Monotone Splines}
\label{app: monotone splines}

This appendix section introduces some common monotone splines and discusses their parameterizations.

\subsection{Rational Quadratic Splines}
\label{appen: RQS}

A piece of RQS on the interval $[w_{m-1},w_m]$ can be written as:
\begin{equation*}
f_{m}(t)= y_{m-1} + \frac{\left(y_{m} - y_{m-1}\right) \left(s_m \tau^2 + \delta_{m-1} \tau (1 - \tau)\right)}{s_{m} + \left(\delta_{m} + \delta_{m-1} - 2s_{m}\right) \tau (1 - \tau)}, 
\label{RQS_another form}
\end{equation*}
where $s_m=\frac{y_{m}-y_{m-1}}{w_m-w_{m-1}}$ and $\tau=\frac{t-w_{m-1}}{w_m-w_{m-1}}\in[0,1]$. When $t=w_{m-1}$, i.e., $\tau=0$, $f_{m}(t)=y_{m-1}$, $f^{\prime}_{m}(t)=\delta_{m-1}$. When $t=w_{m}$, i.e., $\tau=1$, $f_{m}(t)=y_{m}$, $f^{\prime}_{m}(t)=\delta_{m}$. This implies that once $w_{m-1}, w_{m}, \delta_{m-1},\delta_{m},y_{m-1},y_{m}$ are specified, the RQS piece is uniquely determined.

The derivative with respect to $\tau$ can be written as:
\begin{equation*}
    \begin{aligned}
    f_m'(\tau)
&= (y_m - y_{m-1})
\frac{
  \bigl[2s_m\tau + \delta_{m-1}(1-2\tau)\bigr]
  \;
}{
  J_m
}\\
&-(y_m - y_{m-1})\frac{\;
  \bigl[s_m\tau^2 + \delta_{m-1}\,\tau(1-\tau)\bigr]
  V_m(1-2\tau)}{J_m^2},
\end{aligned}
\end{equation*}
where $V_m=\delta_m+\delta_{m-1}-2s_m$, $J_m=\bigl[s_m + V_m\,\tau(1-\tau)\bigr]$. The time derivative is obtained by multiplying the expression above by $\frac{d\tau}{dt}=\frac{1}{w_m-w_{m-1}}$.

\subsection{Rational Cubic Splines}

A piece of RCS on the interval $[w_m,w_{m+1}]$ can be written as:
\begin{equation}
\begin{aligned}
f_m(t)  &= 
\frac{(1 - \tau)^3 v_m y_m + \tau(1 - \tau)^2 \left[ (2u_m v_m + v_m) y_m + v_m h_m d_m \right] 
}{(1 - \tau)^2 v_m + 2 u_m v_m \tau(1 - \tau) + \tau^2 u_m}\notag\\
&+\frac{\tau^2(1 - \tau)\left[ (2u_m v_m + u_m) y_{m+1} - u_m h_m \delta_{m+1} \right] 
+ \tau^3 u_m y_{m+1}}{(1 - \tau)^2 v_m + 2 u_m v_m \tau(1 - \tau) + \tau^2 u_m}.
\end{aligned}
\end{equation}
where $\tau=\frac{t-w_m}{w_{m+1}-w_m}\in[0,1]$ and $h_m=w_{m+1}-w_m$,
$f_{m}(w_m)=y_m$, 
$f_{m}(w_{m+1})=y_{m+1}$, 
$f'_{m}(w_m)=\delta_m$, 
$f'_{m}(w_{m+1})=\delta_{m+1}$. $u_m$ and $v_m$ are two shape parameters, $u_m,v_m>0$.

\subsection{Monotone Cubic Splines}

On $[w_m, w_{m+1}]$, let $h_m = w_{m+1}-w_m,\ \tau = \frac{ t- w_m}{h_m}$, then a piece of monotone cubic spline can be expressed as:
\begin{align*}
    f_m(t)
&= y_m\,h_{00}(\tau)
+ h_m\,\delta_m\,h_{10}(\tau)
\\
&+ y_{m+1}\,h_{01}(\tau)
+ h_m\,\delta_{m+1}\,h_{11}(\tau),
\end{align*}
where
\[
\begin{aligned}
h_{00}(\tau) = 2\tau^3 - 3\tau^2 + 1,\ &
h_{10}(\tau) = \tau^3 - 2\tau^2 + \tau,\\
h_{01}(\tau) = -2\tau^3 + 3\tau^2,\ &
h_{11}(\tau) = \tau^3 - \tau^2.
\end{aligned}
\]

Again, 
$f_{m}(w_m)=y_m$, 
$f_{m}(w_{m+1})=y_{m+1}$, 
$f'_{m}(w_m)=\delta_m$, 
$f'_{m}(w_{m+1})=\delta_{m+1}$.

\subsection{Rational Linear Splines}

On the interval $[w_{m-1},w_{m}]$, a piece of RLS can be written as:

\begin{equation}\label{eq:2.1}
f(t) \;=\; a_m \;+\; \frac{c_m\,(t - w_m)}{1 + d_m\,(t - w_m)},
\quad t \in [w_{m-1},\,w_m].
\end{equation}

$f(t)$ is monotone increasing as long as $c_m>0$ and the denominator remains positive on the interval.

\section{Experiments}
\label{app: experiment}
\subsection{Datasets}
We consider the following five univariate datasets:

\textbf{(1) Hawkes Process (1, 2)}: We consider two exponential-decay Hawkes processes with CIF: $\lambda^*(t)=0.2+\sum_{t_n<t}\sum_{i=1}^M\alpha_i\exp(-\beta_i(t-t_n))$. We set $M=1,\alpha_1=0.8,\beta_1=1$ for the first dataset and $M=2,\alpha_1=0.4,\beta_1=\beta_2=1,\alpha_2=0.4$ and $\beta_2=20$ for the second dataset. 

\textbf{(2) Self-correcting Process}: The CIF is given as $\lambda^*(t)=\exp(t-\sum_{t_n<t}1)$. 

\textbf{(3) Renewal Process (Stationary / Non-stationary)}: For the stationary case, timestamp intervals are independent, log-normal distributed with a mean $1.0$ and standard deviation $0.5$. For the non-stationary case, the trend function is set to $r(t)=0.99\sin(2\pi t/20000)+1$. 

For multivariate datasets, their information is listed as follows:

\textbf{(1) Taxi} \cite{TAXI}: This dataset tracks the time-stamped taxi pick-up and drop-off events across the New York City. The event types are categorized into $K=10$ types and there are $2000$ sequences with an average sequence length of $40$. 
    
\textbf{(2) Taobao} \cite{xue2023easytpp}: This dataset contains time-stamped user click behaviors on Taobao shopping pages. The event types are categorized into $K=20$ types and there are $4800$ sequences with an average sequence length of $150$. 
    
\textbf{(3) Retweet} \cite{RETWEET}: This dataset contains time-stamped user retweet event sequences. The events are categorized into $K = 3$ types. There are $9000$ sequences with an average sequence length of $90$. 
    
\textbf{(4) Earthquake} \cite{xue2023easytpp}: This dataset contains time-stamped earthquake event sequences. The events are categorized into $K = 7$ types. There are $3000$ sequences with an average sequence length of $20$.

\subsection{Experimental Setup}
Because MNN is sensitive to the scale of its outputs, we applied a timestamp transformation to part of our data. Following the approach of \citep{cao2025scorematching}, we divided every timestamp in the retweet dataset by 100 and divided every timestamp in the earthquake dataset by 5. In addition, we divided the timestamps of each of the five synthetic datasets by 10. This ensures that all models converge within 100 epochs.

All models were trained on a single NVIDIA RTX$\cdot$4090$\cdot$D (24 GB). For fairness, MAS, FullyNN, EMTPP, THP, and WSM all encode historical information using Transformers of the same size. The Transformer hyperparameters are as follows: number of attention heads = 16, number of layers = 1, model dimension = 64, inner feed‑forward dimension = 8, key dimension = 16, value dimension = 16, and dropout rate = 0.1. For MAS, when using a Rational Quadratic Spline (RQS) as the interpolation function, we set the lower bound for both the interpolation interval width and length to 0.01, and the minimum derivative at interpolation points to 0.01.

We fixed the batch size at 64 and the learning rate at 0.001 for all models. Each model was trained for 100 epochs on the train set and converged within 100 epochs. We then evaluated their performance on the test set.

When comparing the accuracy of MAS with that of other models, we observed that—even for the same model on the same dataset—the reported performance can vary dramatically across different studies. 
For common baseline models such as SAHP, THP, and RMTPP, we referenced their best-reported accuracy from the literature (for example, \citep{2024_ICML_NJDTPP}, \citep{xue2023easytpp}, and \citep{cao2025scorematching}) and our experiments, and compared those to our MAS. 
For instance, on the Taobao dataset, THP achieved an accuracy of 0.467 according to \citep{2024_ICML_NJDTPP}, 0.531 according to \citep{xue2023easytpp}, and 0.594 in \citep{cao2025scorematching}. 
all below MAS’s performance of 0.600. 

For datasets like Taobao, where the raw time intervals are very large, we first scaled them down by a factor of 1/100 to ensure rapid convergence of models such as EMTPP. However, this scaling can cause the model’s NLL to change substantially, making it difficult to compare against past work (since they may have used different scaling factors). Therefore, we retrained all baseline models on the same scaled dataset—keeping the data, training and testing strategy, and number of epochs identical—and then reported their NLL on the same test set.

\subsection{Supplementary Results}
\label{subsec:cross val}
\subsubsection{Cross Validation}

\begin{table}[t]

    \centering
    \caption{Cross-validation results.}
    \label{tab: CrossVal}
    
    \begingroup
    
    \begin{minipage}[b]{0.45\textwidth}
        \centering
        {
            \begin{tabular}{lcc}
                \toprule
                \multicolumn{3}{c}{\textbf{Softplus Activation (Size 800)}} \\
                Configuration & NLL & RMSE \\
                \midrule
                \multicolumn{3}{l}{\textit{FullyNN (positive MLP layers)}} \\
                1 layer MLP size 16 & -0.774 & 2.347 \\
                1 layer MLP size 32 & -0.782 & 2.378 \\
                2 layer MLP size 16 & -1.452 & 2.462 \\
                2 layer MLP size 32 & -1.399 & 2.463 \\
                3 layer MLP size 16 & -1.371 & 2.461 \\
                3 layer MLP size 32 & -1.282 & 2.459 \\
                \midrule
                MAS & -1.807 & 2.302 \\
                \bottomrule
            \end{tabular}
        }
    \end{minipage}%
    \hspace{0.4cm}
    \begin{minipage}[b]{0.45\textwidth}
        \centering
        {
            \begin{tabular}{lcc}
                \toprule
                \multicolumn{3}{c}{\textbf{Sigmoid Activation (Size 800)}} \\
                Configuration & NLL & RMSE \\
                \midrule
                \multicolumn{3}{l}{\textit{FullyNN (positive MLP layers)}} \\
                1 layer MLP size 16 & -0.774 & 2.347 \\
                1 layer MLP size 32 & -0.782 & 2.378 \\
                2 layer MLP size 16 & -1.437 & 2.453 \\
                2 layer MLP size 32 & -1.449 & 2.449 \\
                3 layer MLP size 16 & -1.443 & 2.469 \\
                3 layer MLP size 32 & -1.418 & 2.458 \\
                \midrule
                MAS & -1.807 & 2.302 \\
                \bottomrule
            \end{tabular}
        }
    \end{minipage}

    \vspace{10pt}
    
    \begin{minipage}[b]{0.45\textwidth}
        \centering
        {
            \begin{tabular}{lcc}
                \toprule
                \multicolumn{3}{c}{\textbf{Softplus Activation (Size 400)}} \\
                Configuration & NLL & RMSE \\
                \midrule
                \multicolumn{3}{l}{\textit{FullyNN (positive MLP layers)}} \\
                1 layer MLP size 16 & -0.778 & 2.317 \\
                1 layer MLP size 32 & -0.779 & 2.318 \\
                2 layer MLP size 16 & -1.431 & 2.464 \\
                2 layer MLP size 32 & -1.389 & 2.446 \\
                3 layer MLP size 16 & -1.327 & 2.458 \\
                3 layer MLP size 32 & -1.200 & 2.475 \\
                \midrule
                MAS & -1.779 & 2.297 \\
                \bottomrule
            \end{tabular}
        }
    \end{minipage}%
    \hspace{0.4cm}
    \begin{minipage}[b]{0.45\textwidth}
        \centering
        {
            \begin{tabular}{lcc}
                \toprule
                \multicolumn{3}{c}{\textbf{Sigmoid Activation (Size 400)}} \\
                Configuration & NLL & RMSE \\
                \midrule
                \multicolumn{3}{l}{\textit{FullyNN (positive MLP layers)}} \\
                1 layer MLP size 16 & -0.778 & 2.317 \\
                1 layer MLP size 32 & -0.779 & 2.318 \\
                2 layer MLP size 16 & -1.347 & 2.448 \\
                2 layer MLP size 32 & -1.404 & 2.490 \\
                3 layer MLP size 16 & -1.311 & 2.497 \\
                3 layer MLP size 32 & -1.325 & 2.425 \\
                \midrule
                MAS & -1.779 & 2.297 \\
                \bottomrule
            \end{tabular}
        }
    \end{minipage}
    
    \endgroup
\end{table}

To rigorously demonstrate that MAS indeed outperforms other baselines, we conduct a cross-validation study comparing MAS with a classical CCIF-based method, FullyNN. We evaluate FullyNN under different network depths, widths, activation functions, and training data sizes to ensure that MAS’s superiority is not merely due to accidental hyperparameter choices.

Specifically, we fix the encoder architecture of MAS and FullyNN to be identical (i.e., both models receive the same input features). Following the data generation procedure of the hawkes1 dataset, we independently sample 400 and 800 sequences within the time range $[0,100]$ for training. During training, we vary FullyNN’s number of fully connected layers from 1 to 3, its hidden widths between 16 and 32, and its activation functions between softplus and sigmoid (representing different levels of smoothness). FullyNN is trained for 100 epochs until convergence. We then compare its NLL and RMSE with those of MAS, as shown in \cref{tab: CrossVal}.

The results show that when the network depth exceeds two layers, the performance of FullyNN no longer improves significantly with additional depth. Moreover, regardless of whether softplus or sigmoid is used, MAS consistently outperforms FullyNN across all configurations. This cross-validation experiment further confirms the advantage of MAS over classical CCIF-based modeling methods.
\color{black}

\subsubsection{Training Time}
\label{subsec:training time}
We align the parameter sizes of five models—THP, RMTPP, MAS, EMTPP, and FullyNN—and train them for 100 epochs on the Hawkes1 dataset. Experiments are rerun three times under the same conditions. We then measure the mean and the standard deviation of their running time (in minutes), and the results are summarized in \cref{running_time}. A visualized version of this comparison is shown in \cref{fig:time}. 
\color{black}
\begin{table}[t]
    \centering
    \caption{Comparison of running time.} 
    \label{running_time}
    \begingroup
    
    \begin{tabular}{lccccc}
        \toprule
                  & MAS   & EMTPP & THP     & FullyNN & RMTPP   \\
        \midrule
        10 epochs & 0.128 \,\textsubscript{(0.001)} & 0.145 \,\textsubscript{(0.000)} & 0.151 \,\textsubscript{(0.001)}   & 0.163 \,\textsubscript{(0.001)}   & 0.173 \,\textsubscript{(0.001)}   \\
        50 epochs & 0.629 \,\textsubscript{(0.001)} & 0.739 \,\textsubscript{(0.001)} & 0.756 \,\textsubscript{(0.001)} & 0.803 \,\textsubscript{(0.001)} & 0.868 \,\textsubscript{(0.002)} \\
        \bottomrule
    \end{tabular}
    
    \endgroup
\end{table}

\section{Proof}

The following mild assumptions are required to prove the theorems in the paper. Compared with the original version, we only make explicit the regularity conditions that were implicitly used in the derivations.

\begin{assumption}[$C^1$-Lipschitz Continuity]
    The ground-truth CCIF and MAS interpolation possess Lipschitz continuous derivatives with respect to time $t$ and parameter $\theta$ on any fixed closed interpolation interval between timestamps. The Lipschitz constants are $l_1$ and $l_2$, respectively.
    \label{assum: C-1 for der}
\end{assumption}
\begin{assumption}[Positive Derivative]
    The ground-truth CCIF and MAS interpolation possess uniformly positive derivatives on the observation window: there exists $v_1>0$ such that ${\Lambda}^{*\prime}_\theta(t),{\Lambda}^{*\prime}(t)\ge v_1$.
    \label{assum: positive for der}
\end{assumption}
\begin{assumption}[Bounded Loss and Finite Tail Moment]
    The parameter space $\Theta$ is compact. The per-event negative log-likelihood gap between the ground-truth model and MAS is bounded by $B_0$, and the inter-event time has a finite first moment $\mathbb{E}[t_i]<\infty$.
    \label{assum: bounded loss}
\end{assumption}

\subsection{Proof of Theorem~\ref{MNN Monotonicity}}
\begin{proof}
By the chain rule, the derivative of the MNN is
\[
    \frac{\partial \Lambda^*(t)}{\partial t}
    =
    \boldsymbol{W}^{(L)}
    \left(
    \prod_{l=L-1}^{1}
    \mathrm{diag}\!\left[\sigma'(\boldsymbol{z}^{(l)})\right]
    \boldsymbol{W}^{(l)}
    \right).
\]
If all weights are non-negative and $\sigma'(x)\ge 0$, every factor in the product has non-negative entries. Hence $\lambda^*(t)=\partial\Lambda^*(t)/\partial t\ge0$.
\end{proof}

\subsection{Proof of Theorem~\ref{thm: convexity trap}}
\begin{proof}
We prove the result by induction over layers, together with the non-decreasing activation condition in \cref{MNN Monotonicity}. For the first layer, each hidden unit has the form $\sigma(wt+b)$ with $w\ge0$. Since $\sigma$ is convex and non-decreasing, $\sigma(wt+b)$ is convex in $t$. Suppose the units in layer $l-1$ are convex functions of $t$. A non-negative linear combination of convex functions remains convex, and composing it with a convex non-decreasing activation preserves convexity. Therefore, all units in layer $l$ are convex. The final output is again a non-negative linear combination of convex functions, so $\Lambda^*(t)$ is convex. Thus $\Lambda^{*\prime\prime}(t)=\lambda^{*\prime}(t)\ge0$.
\end{proof}

\subsection{Proof of Theorem~\ref{thm: saturation limit}}
\begin{proof}
If $\sigma(x)\in[c_{\min},c_{\max}]$ and all weights and biases are finite, then the first hidden layer is bounded. Every subsequent pre-activation is a finite affine transformation of a bounded vector, and applying $\sigma$ keeps it bounded. By induction, $\boldsymbol{h}^{(L-1)}$ is bounded. For the standard positive-weighted MLP without an additional unbounded skip connection from time $t$ to the output, $\Lambda^*(t)=\boldsymbol{W}^{(L)}\boldsymbol{h}^{(L-1)}+\boldsymbol{b}^{(L)}$ is bounded by some finite $M$, giving $\limsup_{t\to\infty}\Lambda^*(t)\le M<\infty$.
\end{proof}

\subsection{Proof of Theorem~\ref{thm: accuracy of monotone splines}}

\label{app:proof of thm: accuracy of monotone splines}

\begin{proof}
We start from ${\Lambda}^{*\prime}_\theta(t)$. When the interpolation length $w^{(n)}_M$ is sufficiently large, only the interpolation function affects MAS. 
For common monotone splines, given the endpoint values and derivatives, the spline parameter can be uniquely defined. Then, for any interval $[a,b]\subset[t_n,t_{n+1}]$ for some timestamp $t_n$, set
\begin{align*}
    &{\Lambda}^{*\prime}_\theta(a)={\Lambda}^{*\prime}(a),
    {\Lambda}^*_\theta(a)={\Lambda}^{*}(a), \\
    &{\Lambda}^{*\prime}_\theta(b)={\Lambda}^{*\prime}(b),
    {\Lambda}^*_\theta(b)={\Lambda}^{*}(b).
\end{align*}
Given \cref{assum: C-1 for der}, both derivatives are Lipschitz continuous on $[a,b]$. Thus, for any $t\in[a,b]$, the following inequality holds:
\begin{align*}
    \vert{\Lambda}^{*\prime}_\theta(t)-{\Lambda}^{*\prime}(t)\vert
    &\le
    \vert{\Lambda}^{*\prime}_\theta(t)-{\Lambda}^{*\prime}_\theta(a)\vert
    +
    \vert{\Lambda}^{*\prime}(t)-{\Lambda}^{*\prime}(a)\vert \\
    &\le C_0\vert t-a\vert,
\end{align*}
where $C_0$ absorbs the Lipschitz constants. Note that the maximum interpolation length is $\Delta$. The above inequality implies
\[
    \Vert {\Lambda}^{*\prime}_\theta-{\Lambda}^{*\prime}\Vert_0\le C_0\Delta.
\]
Meanwhile,
\begin{align*}
    \vert\Lambda^{*}_\theta(t)-{\Lambda}^*(t)\vert
    &=\left|\int_a^t\left({\Lambda}^{*\prime}(w)-\Lambda^{*\prime}_\theta(w)\right)d w\right|\\
    &\le\int_a^t\vert{\Lambda}^{*\prime}(w)-\Lambda^{*\prime}_\theta(w)\vert d w\\
    &\le\int_a^t C_0\vert w-a\vert d w\le \frac{1}{2}C_0\Delta^2.
\end{align*}
Thus, we derive the fitting error bound for $\Lambda^{*}_\theta(t)$ and $\Lambda^{*\prime}_\theta(t)$.
\end{proof}

\subsection{Proof of Theorem~\ref{thm: Gen bound for MAS}}

\label{app:proof of thm: Gen bound for MAS}
When proving the generalization bound of the MAS model, we further assume that the horizontal interval of MAS interpolation is a fixed value \(\Delta\). That is, after each timestamp \(t_n\), there are \(M\) interpolation intervals, with a total length of \(L = M\Delta\). Equivalently, the interpolation support contains \(M+1\) knots; this off-by-one convention only changes constants. For the next timestamp \(t_{n+1}\), there is a probability that it falls into one of the \(M\) interpolation intervals, or into the extrapolation interval \([t_n + M\Delta, \infty)\).
We denote the index of the interval into which \(t_{n+1}\) falls as \(r_{n+1}\), taking values in \(1, 2, \dots, M\).

We first prove \cref{lem: Lipschitz for MAS}:
\begin{proof}
    For each time series, the negative log-likelihood differs from the log-likelihood only by a sign. Therefore, it is sufficient to bound the absolute difference of the log-likelihood terms. The log-likelihood can be decomposed into the loss over the \(N, N\le N_0 \) timestamps:
\begin{equation*}
\begin{aligned}
        \log p\left( S \right) &=\sum_{n=0}^{N-1}{\log}\frac{d}{dt}\Lambda _{k_{n}}^{*}\left( t_{n+1} \right) -\Lambda ^*\left( T \right)\\
        &=\sum_{n=0}^{N-1}\left[{\log}\frac{d}{dt}\Lambda _{k_n}^{*}\left( t_{n+1} \right) -\left( \Lambda ^*\left( t_{n+1} \right) -\Lambda ^*\left( t_n \right) \right)\right].
\end{aligned}
\end{equation*}
where $t_{N+1}=T$. For simplicity, we denote the $n$-th summand by $\log p(t_n)$.

For different parameters $\theta$ and $\eta$, the difference between $\log p(t_n,\theta)$ and $\log p(t_n,\eta)$ can be written as:
\begin{align*}
&\ \ \ \ \ \Vert\log p(t_n,\theta)-\log p(t_n,\eta)\Vert\\
&= \Big| \left(\log \left( \Lambda^{*\prime}_\theta(t_n) \right) -\left( \Lambda^{*}_\theta(t_n) -\Lambda^{*}_\theta(t_{n-1}) \right)\right) \\
&\quad-\left(\log \left( \Lambda^{*\prime}_\eta(t_n) \right) -\left( \Lambda^{*}_\eta(t_n) -\Lambda^{*}_\eta(t_{n-1}) \right)\right) \Big|\\
&\le \underbrace{\left|  \log \left( \Lambda^{*\prime}_\theta(t_n) \right) -\log \left( \Lambda^{*\prime}_\eta(t_n) \right) \right|}_{T_1}+
\underbrace{\left| \left( \Lambda^{*}_\theta(t_n) -\Lambda^{*}_\theta(t_{n-1}) \right) 
-
\left( \Lambda^{*}_\eta(t_n) - \Lambda^{*}_\eta(t_{n-1}) \right) \right|}_{T_2}.
\end{align*}
We next discuss the bound for $T_1$ and $T_2$. For $T_1$, we utilize \cref{assum: positive for der} and the Lipschitz property in \cref{assum: C-1 for der}:
\begin{align*}
    T_1
    &=\left|  \log \left( \Lambda^{*\prime}_\theta(t_n) \right) -\log \left( \Lambda^{*\prime}_\eta(t_n) \right) \right|
    \le \frac{1}{v_1}\Vert\Lambda^{*\prime}_\theta(t_n) -\Lambda^{*\prime}_\eta(t_n) \Vert.
\end{align*}
Suppose $t_n$ falls in the $r_{n}$-th interval in MAS. Suppose the parameter $\theta=\{\theta_1,\theta_2,\cdots,\theta_M\}$. 
Each component $\theta_m$ is a sub-vector, deciding the height $\Lambda^{*}_\theta(t_{i}+(m-1)\Delta)$ and the derivative $\Lambda^{*\prime}_\theta(t_{i}+(m-1)\Delta)$ and further deciding the interpolation $f^{*(n)}_{m,\theta}$. Because $t_{n-1}+\Delta\cdot({r_{n}}-1)<t_n<t_{n-1}+\Delta\cdot r_{n}$, the above inequality can be written as:
\begin{align*}
    T_1
    &\le \frac{1}{v_1}\Vert\Lambda^{*\prime}_\theta(t_n) -\Lambda^{*\prime}_\eta(t_n) \Vert
    \le\frac{l_2}{v_1}\lVert \theta ^{r_{n}}-\eta ^{r_{n}}\Vert.
\end{align*}
We next bound $T_2$. Note that,
\begin{align*}
\Lambda ^*_{\theta}\left( t_n \right)-\Lambda ^*_{\theta}\left( t_{n-1} \right) &=\left( \Lambda _{\theta}^{*}\left( t_{n-1}+\Delta \right) -\Lambda _{\theta}^{*}\left( t_{n-1} \right) \right) \\
&+\left( \Lambda _{\theta}^{*}\left( t_{n-1}+2\Delta \right) -\Lambda _{\theta}^{*}\left( t_{n-1}+\Delta \right) \right)  \\
&+\cdots+
\left( \Lambda _{\theta}^{*}\left( t_{n-1}+(r_{n}-1)\Delta \right) -\Lambda _{\theta}^{*}\left( t_{n-1}+(r_{n}-2)\Delta \right) \right) \\
&+\left( \Lambda _{\theta}^{*}\left( t_n \right) -\Lambda _{\theta}^{*}\left( t_{n-1}+(r_{n}-1)\Delta \right) \right).
\end{align*}
Thus, by the Lipschitz continuity of each active segment with respect to $\theta$,
\begin{align*}
    T_2
&=\left| \left( \Lambda^{*}_\theta(t_n) -\Lambda^{*}_\theta(t_{n-1}) \right)
-
\left( \Lambda^{*}_\eta(t_n) - \Lambda^{*}_\eta(t_{n-1}) \right) \right|\\
&\le \sum_{m=1}^{r_n}{\lVert \theta ^{m}-\eta ^{m} \rVert}+l_2 \lVert \theta ^{r_{n}}-\eta ^{r_{n}} \rVert.
\end{align*}
Combining the bound for $T_1$ and $T_2$, we obtain that:
\begin{align*}
    \Vert\log p(t_n,\theta)-\log p(t_n,\eta)\Vert
    &\le T_1+T_2\\
    &\le \frac{l_2}{v_1}\lVert \theta ^{r_{n}}-\eta ^{r_{n}}\Vert
    +\sum_{m=1}^{r_{n}}{\lVert \theta ^{m}-\eta ^{m} \rVert}
    +l_2 \lVert \theta ^{r_{n}}-\eta ^{r_{n}} \rVert\\
    &\le \max \left( \frac{l_2}{v_1}+l_2+1 ,1 \right) \sum_{m=1}^{r_{n}}{\lVert \theta ^m-\eta ^m \rVert}\\
&\le C_1 r_{n}\lVert \theta -\eta \rVert.
\end{align*}
By summing up $\log p(t_n)$ and taking average over time $T$, we can acquire:
\begin{equation*}
        \left\Vert (-\log p_{\theta}(S)) - (-\log p_{\eta}(S)) \right\Vert
        =
        \left\Vert \log p_{\theta}(S) - \log p_{\eta}(S) \right\Vert
        \le \frac{C_1}{T} \sum_{n=1}^N r_n \left\Vert \theta - \eta \right\Vert.
    \end{equation*}
This finishes the proof.
\end{proof}

Before proving \cref{thm: Gen bound for MAS}, we introduce the following concepts.
\begin{definition}[bracketing number \citep{van2000asymptotic}]
Given two functions $l$ and $u$, the \textit{bracket} $[l, u]$ is the set of all functions $f$ with $l \leq f \leq u$. Given a probability measure $\mu$, 
an $\varepsilon$-\textit{bracket} is a bracket $[l, u]$ with $\Vert u - l\Vert_{L_2}=\left(\int (u-l)^2\text{d}\mu\right)^{\frac{1}{2}} < \varepsilon$. The \textit{bracketing number} 
$N_{[\,]}(\varepsilon, \mathcal{F})$
is the minimum number of $\varepsilon$-brackets needed to cover $\mathcal{F}$. Specifically, 
\begin{align}
\label{def: bracketing number}
    &N_{[\,]}(\epsilon, \mathcal{F})=\min\{n:\exists\left[l_j,u_j\right]_{j=1}^{n}, \Vert u_j-l_j\Vert \le\epsilon, \notag \\
    &\hspace{2.5cm}\text{s.t. }  \forall f\in \mathcal{F},\exists j\in [n], l_j \le f\le u_j\}. 
\end{align}
\end{definition}

Bracketing numbers quantify how many simple function pairs (brackets) are needed to approximate a complex function class within a given accuracy. The smaller the bracketing number, the simpler or more regular the function class is. Utilizing the following lemma in \citep{van2000asymptotic}, we can acquire the bracketing number bound for MAS:
\begin{lemma}
    Let \( \mathcal{F} = \{ f_\theta : \theta \in \Theta \} \) be a collection of measurable functions indexed by a bounded subset \( \Theta \subset \mathbb{R}^d \). Suppose that there exists a measurable function \( m \) such that
\[
|f_{\theta_1}(x) - f_{\theta_2}(x)| \leq m(x) \|\theta_1 - \theta_2\|, \quad \text{for every } \theta_1, \theta_2.
\]

Then there exists a constant \( K \), depending on \( \Theta \) and \( d \) only, such that the bracketing numbers satisfy
\begin{align}
    &N_{[\ ]}(\varepsilon, \mathcal{F}) \leq K \left( \frac{\Vert m \Vert_{L_2} \operatorname{diam} \Theta}{\varepsilon} \right)^d \quad \notag\text{for every } 0 < \varepsilon < \operatorname{diam} \Theta, 
\end{align}
\label{lem: bracketing number}
where $\operatorname{diam}\Theta$ is the diameter of the parameter space $\Theta$.
\end{lemma}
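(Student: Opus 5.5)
This is the standard parametric bracketing bound of \citet{van2000asymptotic}. I would prove it by transferring a covering of the parameter set $\Theta$ to a bracketing of $\mathcal{F}$ through the Lipschitz envelope $m$. The argument has three steps: cover $\Theta$ by small Euclidean balls, turn each ball into a bracket around the function at its centre, and calibrate the ball radius so that each bracket has $L_2$-width at most $\varepsilon$.

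\textbf{Step 1 (covering $\Theta$).} Since $\Theta\subset\mathbb{R}^d$ is bounded, it lies in a ball of radius $\operatorname{diam}\Theta$. A volumetric packing argument then shows that for every $0<\delta\le\operatorname{diam}\Theta$ there are points $\theta_1,\dots,\theta_n\in\Theta$ with
\[
n\le K_0\left(\frac{\operatorname{diam}\Theta}{\delta}\right)^d
\]
such that every $\theta\in\Theta$ satisfies $\Vert\theta-\theta_j\Vert\le\delta$ for some $j$. The argument goes as follows:
\begin{itemize}
\item take a maximal $\delta$-separated subset of $\Theta$; by maximality it is a $\delta$-net;
\item the balls of radius $\delta/2$ around its points are disjoint;
\item these balls all lie inside a ball of radius $\tfrac32\operatorname{diam}\Theta$;
\item comparing volumes gives $K_0=3^d$, a constant depending only on $d$.
\end{itemize}

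\textbf{Step 2 (brackets from the net).} For each $j$, define the bracket
\[
l_j=f_{\theta_j}-\delta m,\qquad u_j=f_{\theta_j}+\delta m.
\]
Given any $f_\theta\in\mathcal{F}$, pick $j$ with $\Vert\theta-\theta_j\Vert\le\delta$. The Lipschitz hypothesis gives, pointwise,
\[
|f_\theta(x)-f_{\theta_j}(x)|\le m(x)\,\delta ,
\]
so $l_j\le f_\theta\le u_j$. The width of each bracket is
\[
\Vert u_j-l_j\Vert_{L_2}=2\delta\Vert m\Vert_{L_2}.
\]

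\textbf{Step 3 (calibration).} Choose $\delta=\varepsilon/(2\Vert m\Vert_{L_2})$, so that each bracket is an $\varepsilon$-bracket. Then
\[
N_{[\,]}(\varepsilon,\mathcal{F})\le K_0\left(\frac{2\Vert m\Vert_{L_2}\operatorname{diam}\Theta}{\varepsilon}\right)^d ,
\]
which is the claimed bound with $K=2^dK_0$. If $\Vert m\Vert_{L_2}=\infty$ there is nothing to prove, and if $\Vert m\Vert_{L_2}=0$ a single bracket suffices, so we may assume $0<\Vert m\Vert_{L_2}<\infty$.

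\textbf{Main obstacle.} The only delicate point is the range of $\delta$. Step 1 needs $\delta\le\operatorname{diam}\Theta$, but the chosen $\delta$ could exceed $\operatorname{diam}\Theta$ when $\Vert m\Vert_{L_2}$ is small. In that case $\delta>\operatorname{diam}\Theta$, so a single centre already covers $\Theta$ and one bracket suffices; one only has to check that $1\le K\,(\cdot)^d$ still holds, possibly after enlarging $K$ by a factor depending on $d$. The restriction $0<\varepsilon<\operatorname{diam}\Theta$ in the statement, together with the constant depending on $\Theta$ and $d$, is exactly what absorbs this case, so the bound holds uniformly in $\varepsilon$. Measurability of $l_j$ and $u_j$ follows from the measurability of $f_{\theta_j}$ and of $m$.
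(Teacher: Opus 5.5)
Your Steps 1--3 are the standard argument for this lemma. The paper does not prove the lemma itself; it takes it from van der Vaart's \emph{Asymptotic Statistics} (Example~19.7), and your argument is essentially that one. You take a $\delta$-net of $\Theta$ with at most $3^d(\operatorname{diam}\Theta/\delta)^d$ points, use brackets $[f_{\theta_j}-\delta m,\,f_{\theta_j}+\delta m]$ of width $2\delta\Vert m\Vert_{L_2}$, and set $\delta=\varepsilon/(2\Vert m\Vert_{L_2})$. Whenever $\delta\le\operatorname{diam}\Theta$, i.e.\ $\varepsilon\le 2\Vert m\Vert_{L_2}\operatorname{diam}\Theta$, this is a complete proof with $K=6^d$.

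The gap is your ``main obstacle'' paragraph, which asserts something false. When $\delta>\operatorname{diam}\Theta$ you have $N_{[\,]}(\varepsilon,\mathcal{F})=1$, and you need $1\le K(\Vert m\Vert_{L_2}\operatorname{diam}\Theta/\varepsilon)^d$. The restriction $\varepsilon<\operatorname{diam}\Theta$ only guarantees that the right-hand side exceeds $K\Vert m\Vert_{L_2}^d$, which tends to $0$ as $\Vert m\Vert_{L_2}\to 0$. So no $K$ depending on $\Theta$ and $d$ alone can absorb this case.

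A concrete counterexample: take $\Theta=[0,1]$, a probability measure $\mu$, $f_\theta\equiv c\theta$ and $m\equiv c$. For $\varepsilon=1/2$ and small $c$, the single bracket $[0,c]$ covers $\mathcal{F}$, so $N_{[\,]}=1$, while the claimed bound equals $2Kc<1$. Your remark on $\Vert m\Vert_{L_2}=0$ has the same defect: one bracket gives $N_{[\,]}=1$ but the right-hand side is $0$ (e.g.\ $f_\theta\equiv 0$, $m\equiv 0$). So the statement as transcribed is false in this regime, and no argument can close it.

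The source of the trouble is the paper's rescaling of van der Vaart's result. That result reads $N_{[\,]}(\varepsilon\Vert m\Vert_{L_2},\mathcal{F})\le K(\operatorname{diam}\Theta/\varepsilon)^d$ for $0<\varepsilon<\operatorname{diam}\Theta$. In the paper's normalization, the correct range is therefore $0<\varepsilon<\Vert m\Vert_{L_2}\operatorname{diam}\Theta$. Under that range your $\delta$ is automatically below $\operatorname{diam}\Theta/2$, and Steps 1--3 finish the proof.

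Alternatively, you can keep the range $0<\varepsilon<\operatorname{diam}\Theta$ and use the $m$-dependent constant $K=\max(6^d,\Vert m\Vert_{L_2}^{-d})$ when $\Vert m\Vert_{L_2}>0$. Replace your last paragraph with one of these two corrections rather than claiming the stated range absorbs the edge case.
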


Utilizing \cref{lem: bracketing number} and \cref{lem: Lipschitz for MAS}, the bracketing number of MAS can be upper bounded:

\begin{theorem}
    \label{thm: bracketing number of MAS}
    The bracketing number of MAS is upper bounded. Suppose a set of MAS $\mathcal{M}$ is controlled by $d$-dimensional parameter $\theta\in\Theta\subset \mathbb{R}^d$. For any $0<\epsilon<\operatorname{diam} \Theta$,
    \begin{equation}
        N_{[\ ]}(\varepsilon, \mathcal{M}) \leq K \left( \frac{c_1 \operatorname{diam} \Theta}{\varepsilon} \right)^d,
    \end{equation}
    where $c_1=\frac{B}{T}\left(\mathbb{E}\left(\sum_{n=1}^{N_0} r_n\right)^2\right)^{\frac{1}{2}}$.
\end{theorem}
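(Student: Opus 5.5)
The plan is to apply \cref{lem: bracketing number} directly to the loss class indexed by $\theta$, using \cref{lem: Lipschitz for MAS} to supply the envelope function $m$. Concretely, I would view $\mathcal{M}$ as the class of per-sequence losses $\{S\mapsto -\log p_\theta(S):\theta\in\Theta\}$, each a measurable function of the random sequence $S$. \cref{lem: Lipschitz for MAS} gives, for every realized sequence $S$ and all $\theta,\eta\in\Theta$,
\[
\bigl|(-\log p_\theta(S))-(-\log p_\eta(S))\bigr|\le m(S)\,\Vert\theta-\eta\Vert,\qquad m(S):=\frac{C_1}{T}\sum_{n=1}^{N} r_n .
\]
Here $r_n$ depends on $S$ (and on the knot layout). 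Under the fixed-$\Delta$ convention of this appendix it does not depend on $\theta$, so $m$ is a legitimate measurable envelope independent of the parameter. Since $N\le N_0$ and $r_n\ge 0$, I would pad the sum to $N_0$ terms, setting $r_n=0$ for absent events, so that $m(S)\le \frac{C_1}{T}\sum_{n=1}^{N_0} r_n$.

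Next I would compute $\Vert m\Vert_{L_2}=\bigl(\mathbb{E}\,m(S)^2\bigr)^{1/2}$ under the data distribution, which gives
\[
\Vert m\Vert_{L_2}\le \frac{C_1}{T}\Bigl(\mathbb{E}\bigl(\textstyle\sum_{n=1}^{N_0} r_n\bigr)^2\Bigr)^{1/2}.
\]
Renaming the structural constant $C_1$ as $B$ yields exactly $c_1$. If $B$ is meant to carry the loss bound $B_0$ from \cref{assum: bounded loss}, I would take $B=\max(C_1,B_0)$, which only enlarges the bound. This quantity is finite because each $r_n\le M+1$, with the extrapolation region counted as one extra segment, so $\sum r_n\le N_0(M+1)$.

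Finally, $\Theta$ is compact by \cref{assum: bounded loss}, hence a bounded subset of $\mathbb{R}^d$. Substituting $\Vert m\Vert_{L_2}\le c_1$ into \cref{lem: bracketing number} gives, for every $0<\varepsilon<\operatorname{diam}\Theta$,
\[
N_{[\,]}(\varepsilon,\mathcal{M})\le K\bigl(\Vert m\Vert_{L_2}\operatorname{diam}\Theta/\varepsilon\bigr)^d\le K\bigl(c_1\operatorname{diam}\Theta/\varepsilon\bigr)^d .
\]
The last inequality uses monotonicity of the bound in $\Vert m\Vert_{L_2}$.

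I expect the main obstacle to be justifying that the envelope $m$ from \cref{lem: Lipschitz for MAS} is valid uniformly in $\theta$. If the knots $w_m^{(n)}$ were $\theta$-dependent, the index $r_n$ would vary with the parameter. The fix is to use the fixed-$\Delta$ convention stated at the start of this appendix. Alternatively, one can replace $r_n$ by its worst case $M+1$, at the cost of a cruder constant $c_1\le \frac{B}{T}N_0(M+1)$, which is still finite and sufficient.
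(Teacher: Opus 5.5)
Your proposal is correct and follows the paper's route. The paper's entire argument is the one-line remark that \cref{lem: bracketing number}, applied with the envelope $m(S)=\frac{C_1}{T}\sum_{n} r_n$ supplied by \cref{lem: Lipschitz for MAS}, yields the bound, with $B$ playing the role of $C_1$ in $c_1$. Your additional checks fill in details the paper leaves implicit: that $r_n$ does not depend on $\theta$ under the fixed-$\Delta$ convention, that $\Vert m\Vert_{L_2}$ is finite because $r_n\le M+1$, and that $\Theta$ is compact.
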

An important feature of the bracketing number is that it bounds the complexity of a model. We first introduce the concept of Rademacher complexity \citep{Rademacher}.

\begin{definition}[Rademacher complexity \citep{Rademacher}]
    The \textit{Rademacher complexity} of a class of functions $\mathcal{F}$ with respect to a sample $\{x_1, x_2, \dots, x_Z\}$ drawn from a distribution $P$ is defined as:
    \[
    \mathcal{R}_Z(\mathcal{F}) = \mathbb{E}_{\sigma} \left[ \sup_{f \in \mathcal{F}} \frac{1}{Z} \sum_{z=1}^Z \sigma_z f(x_z) \right],
    \]
    where $\sigma_1, \sigma_2, \dots, \sigma_Z$ are independent Rademacher random variables, i.e., $\mathbb{P}(\sigma_z = 1) = \mathbb{P}(\sigma_z = -1) = \frac{1}{2}$, and the expectation $\mathbb{E}_{\sigma}$ is taken over the distribution of these random variables.
\end{definition}

Rademacher complexity can be understood as the degree to which a model fits noise. The higher the Rademacher complexity of a model, the stronger its ability to fit noise, and the more prone it is to overfitting. Since \cref{assum: bounded loss} bounds the negative log-likelihood class, we may normalize the loss functions into $[0,1]$; the scaling constant is absorbed into $C_2$ and $C_3$. We can derive the following lemma:

\begin{lemma}
\label{lem:Rademacher bound}
    For a real-valued function space $\mathcal{F}: \mathcal{X} \to [0,1]$, given a training set $X = \{x_1, x_2, \dots, x_Z\}$ of size $Z$ sampled independently and identically distributed from $\mathcal{D}$, for any $f \in \mathcal{F}$ and $0 < \xi < 1$, with probability at least $1 - \xi$, we have:
    \[
    \left[\mathbb{E}[f(x)]-\frac{1}{Z} \sum_{z=1}^Z f(x_z) \right] \leq   2 \mathcal{R}_Z(\mathcal{F}) + \sqrt{\frac{\ln(1/\xi)}{2Z}}.
    \]
\end{lemma}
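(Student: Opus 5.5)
This is the standard symmetrization bound for functions valued in $[0,1]$, and I plan to prove it with McDiarmid's inequality followed by a ghost-sample argument. Throughout I read the statement uniformly over $\mathcal{F}$: the bound holds simultaneously for every $f\in\mathcal{F}$.

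\textbf{Step 1: concentration of the worst-case gap.} Define the random variable
\[
\Phi(X)=\sup_{f\in\mathcal{F}}\Bigl(\mathbb{E}[f(x)]-\tfrac{1}{Z}\sum_{z=1}^{Z}f(x_z)\Bigr).
\]
Since every $f$ takes values in $[0,1]$, replacing one sample $x_z$ by $x_z'$ changes the empirical mean of each $f$ by at most $1/Z$. Hence $\Phi$ has bounded differences with constant $1/Z$. McDiarmid's inequality then gives, with probability at least $1-\xi$,
\[
\Phi(X)\le \mathbb{E}[\Phi(X)]+\sqrt{\frac{\ln(1/\xi)}{2Z}}.
\]
Because $\Phi(X)$ upper-bounds the gap for each individual $f$, this already produces the deviation term in the statement.

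\textbf{Step 2: bounding $\mathbb{E}[\Phi(X)]$ by symmetrization.} Draw an independent ghost sample $X'=\{x_1',\dots,x_Z'\}$ from $\mathcal{D}$ and write $\mathbb{E}[f(x)]=\mathbb{E}_{X'}\bigl[\tfrac1Z\sum_z f(x_z')\bigr]$. Moving the supremum inside this expectation (Jensen's inequality) gives
\[
\mathbb{E}\Phi\le \mathbb{E}_{X,X'}\sup_{f\in\mathcal{F}}\tfrac1Z\sum_{z=1}^{Z}\bigl(f(x_z')-f(x_z)\bigr).
\]
For each $z$, swapping $x_z$ and $x_z'$ leaves the joint distribution unchanged. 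We may therefore multiply the $z$-th summand by an independent Rademacher sign $\sigma_z$ without changing the expectation. Splitting the supremum of the sum into two suprema, and using that $-\sigma_z$ has the same law as $\sigma_z$, gives
\[
\mathbb{E}\Phi\le 2\,\mathbb{E}\,\mathcal{R}_Z(\mathcal{F}).
\]

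\textbf{Step 3: passing to the empirical Rademacher complexity.} The statement uses the empirical $\mathcal{R}_Z(\mathcal{F})$ computed on the training sample $X$, not its expectation. To bridge the two, I apply McDiarmid's inequality again: the empirical Rademacher complexity also has bounded differences with constant $1/Z$. A union bound over the two concentration events then yields the result with $\xi$ split between them. This only inflates the constant in front of the $\sqrt{\ln(1/\xi)/Z}$ term, for instance to $3\sqrt{\ln(2/\xi)/(2Z)}$. Alternatively, if the paper's $\mathcal{R}_Z$ is understood as the expected Rademacher complexity, Step 3 is unnecessary and the stated constant is exact.

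\textbf{Main obstacle.} The only delicate point is the symmetrization in Step 2: one must justify that inserting the random signs preserves the expectation. This follows from exchangeability of $x_z$ and $x_z'$, applied conditionally on the signs. Measurability of the supremum is handled in the usual way, by assuming $\mathcal{F}$ is countable or separable. In our setting this holds because the parameter set $\Theta$ is compact and the functions are continuous in $\theta$ (\cref{assum: bounded loss}).
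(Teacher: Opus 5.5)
Your proof is correct, and it is the standard textbook argument: McDiarmid for the supremum of the gap, then ghost-sample symmetrization, as in Mohri, Rostamizadeh and Talwalkar. The paper gives no proof of this lemma and simply imports it as a known fact, so there is no competing route to compare against. You are right to read the statement uniformly over $\mathcal{F}$. For a single fixed $f$, Hoeffding alone gives the bound with no complexity term. The paper needs the uniform version because it applies the lemma to a data-dependent $\theta$.

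Your Step~3 caveat matters, because it exposes an inconsistency in the paper rather than a gap in your proof. The paper's definition of $\mathcal{R}_Z(\mathcal{F})$ takes the expectation over $\sigma$ only, with the sample held fixed. The later bracketing theorem explicitly calls it the empirical Rademacher complexity. The constant $\sqrt{\ln(1/\xi)/(2Z)}$ in the statement, however, is what the argument delivers for the expected complexity $\mathbb{E}_X[\mathcal{R}_Z(\mathcal{F})]$. Under the paper's literal definition, your argument establishes the version with $3\sqrt{\ln(2/\xi)/(2Z)}$, and that is the correct form of the lemma.

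Relatedly, the paper later uses the lemma as a two-sided bound on $|\widehat{\mathcal{L}}-\mathbb{E}[\widehat{\mathcal{L}}]|$. Getting that requires repeating your argument for the reversed gap and taking a union bound, which again turns $\ln(1/\xi)$ into $\ln(2/\xi)$.

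One small attribution slip: compactness of $\Theta$ comes from the bounded-loss assumption. The continuity in $\theta$ that you use for separability comes from the Lipschitz lemma for the MAS loss, not from that assumption.
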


There exists the following relationship between the Rademacher complexity and the bracketing number of the model:

\begin{theorem}[Bounding Rademacher Complexity via Bracketing Number]
\label{thm: brac num and rad comp}
Let \( \mathcal{F} \) be a class of measurable functions, and let \( P \) be a probability distribution over \( \mathcal{X} \). Suppose that the bracketing number \( N_{[\,]}(\varepsilon, \mathcal{F}) \) is finite for all \( \varepsilon > 0 \). Then, for a sample \( S = \{x_1, \ldots, x_Z\} \) drawn i.i.d. from probability measure \( \mu \), the empirical Rademacher complexity satisfies:
\begin{align*}
    \mathcal{R}_Z(\mathcal{F}) &\leq \inf_{\delta > 0} \left( 4\delta + \frac{12}{\sqrt{Z}} \int_\delta^{\operatorname{diam}\Theta} \sqrt{\log N_{[\,]}(\varepsilon, \mathcal{F})} \, \text{d}\varepsilon \right)\\
    &\le \frac{12}{\sqrt{Z}} \int_0^{\operatorname{diam}\Theta} \sqrt{\log N_{[\,]}(\varepsilon, \mathcal{F})} \, \text{d}\varepsilon.
\end{align*}
\end{theorem}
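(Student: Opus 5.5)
The plan is to run a Dudley-type chaining argument in which brackets play the role of covering elements. Throughout I read $\mathcal{R}_Z(\mathcal{F})$ as also averaged over the sample; this is the version fed into \cref{lem:Rademacher bound}, and it is what makes $L_2(\mu)$-brackets usable. Write $D=\operatorname{diam}\Theta$ for the upper limit of integration as in the statement; this is consistent with \cref{thm: bracketing number of MAS}, where the bracketing bound is only asserted on $(0,D)$. Fix $\delta>0$ and set $\varepsilon_j=D2^{-j}$ for $j=0,1,\dots,J$, with $J$ the first index such that $\varepsilon_J\le 2\delta$. For every $j$ I fix a minimal family of $\varepsilon_j$-brackets $\{[l^{(j)}_i,u^{(j)}_i]\}$ of size $N_j=N_{[\,]}(\varepsilon_j,\mathcal{F})$. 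For each $f\in\mathcal{F}$ I define $\pi_j f=l^{(j)}_{i_j(f)}$, the lower end of a bracket containing $f$. I may refine the brackets to nested partitions at the cost of replacing $N_j$ by $\prod_{k\le j}N_k$, which only changes constants since $\sum_{k\le j}\log N_k$ is still dominated by the entropy integral.

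The decomposition is the telescoping identity
\[
f=\pi_0 f+\sum_{j=1}^{J}(\pi_j f-\pi_{j-1}f)+(f-\pi_J f),
\]
after which the Rademacher average $\frac1Z\sum_z\sigma_z(\cdot)$ is split into three pieces.

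\textbf{Remainder.} Since $0\le f-\pi_J f\le u^{(J)}-l^{(J)}$, I get
\[
\frac1Z\sum_z\sigma_z(f-\pi_J f)(x_z)\le \frac1Z\sum_z(u^{(J)}-l^{(J)})(x_z),
\]
uniformly over $f$ sharing the bracket. Taking expectations over the sample, this is at most $\Vert u^{(J)}-l^{(J)}\Vert_{L_1(\mu)}\le\Vert u^{(J)}-l^{(J)}\Vert_{L_2(\mu)}\le\varepsilon_J\le 2\delta$. A symmetric treatment of the other sign gives the $4\delta$ term.

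\textbf{Chaining links.} Each link $\pi_j f-\pi_{j-1}f$ ranges over at most $N_jN_{j-1}\le N_j^2$ functions, each with $L_2(\mu)$-norm at most $\varepsilon_j+\varepsilon_{j-1}=3\varepsilon_j$. Here I would use Massart's finite-class lemma conditionally on the sample:
\[
\mathbb{E}_\sigma\max_{g\in G}\frac1Z\sum_z\sigma_z g(x_z)\le\frac{\sqrt{2\log|G|}}{\sqrt Z}\,\max_{g\in G}\Vert g\Vert_{L_2(P_Z)},
\]
and then take the expectation over the sample.

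\textbf{Coarsest level.} The initial term $\pi_0 f$ is handled the same way at scale $D$. Alternatively, note that a single bracket $[0,1]$ covers $\mathcal{F}$ once the loss is normalized to $[0,1]$, so $N_0=1$ can be arranged.

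\textbf{Main obstacle.} The hard step is that the finite-class bound involves the \emph{empirical} norm $\max_g\Vert g\Vert_{L_2(P_Z)}$, while the brackets are only controlled in $L_2(\mu)$, and the expectation of a maximum of empirical norms is not bounded by the maximum of the expectations. I would first try Cauchy--Schwarz over the sample,
\[
\mathbb{E}\max_g\Vert g\Vert_{P_Z}\le\Bigl(\mathbb{E}\max_g\Vert g\Vert_{P_Z}^2\Bigr)^{1/2},
\]
followed by a crude union bound using boundedness in $[0,1]$. If this loses too much, I would fall back on the truncation device of the bracketing chaining theorem in \citep{van2000asymptotic}: at each level, split off the part of each link on the event where the bracket width is large, and control that part by the bracket widths exactly as in the remainder step. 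That yields a Bernstein-type bound whose leading term is again $\sqrt{\log N_j/Z}\,\varepsilon_j$. All numerical losses are absorbed into the constant $12$, and the Lipschitz-in-$\theta$ constants are absorbed into $C_2,C_3$ downstream.

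\textbf{Assembling the bound.} Summing the link bounds gives
\[
\sum_j\frac{c\,\varepsilon_j\sqrt{\log N_j}}{\sqrt Z}\le\frac{12}{\sqrt Z}\int_{\delta}^{D}\sqrt{\log N_{[\,]}(\varepsilon,\mathcal{F})}\,d\varepsilon,
\]
using monotonicity of $\varepsilon\mapsto N_{[\,]}(\varepsilon,\mathcal{F})$ and $\varepsilon_j=2(\varepsilon_j-\varepsilon_{j+1})$ to compare the sum with the integral. Adding the remainder term yields the first inequality for each $\delta$, hence for the infimum. For the second inequality, let $\delta\to0$: the term $4\delta$ vanishes and the integrand is nonnegative, so $\int_\delta^{D}\le\int_0^{D}$. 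This is finite for MAS by \cref{thm: bracketing number of MAS}, since $\sqrt{d\log(c/\varepsilon)}$ is integrable at $0$.
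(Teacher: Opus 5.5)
Your chaining structure, and your reading of $\mathcal{R}_Z$ as averaged over the sample, fit an entropy integral better than the paper's own proof does. The paper works at a single scale: it takes bracket midpoints $f_\varepsilon$, claims $\mathcal{R}_Z(\mathcal{F})\le\mathcal{R}_Z(\mathcal{F}_\varepsilon)+\varepsilon$, applies Massart, and then asserts that ``integrating over $\varepsilon$'' yields the Dudley integral. However, the step you flag as the main obstacle is left open, and it is the whole content of the theorem.

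\textbf{The remainder and the links.} The same obstacle already invalidates your remainder estimate. After taking $\sup_f$, the quantity to bound is $\mathbb{E}\max_{i\le N_J}P_Z(u^{(J)}_i-l^{(J)}_i)$. By Jensen this is \emph{at least} $\max_i\Vert u^{(J)}_i-l^{(J)}_i\Vert_{L_1(\mu)}$, so ``taking expectations, this is at most $\varepsilon_J$'' does not follow. You also need to control $\mathbb{E}\max_i(P_Z-P)(u^{(J)}_i-l^{(J)}_i)$, which is the same maximal-inequality problem as in the links. The paper's bound $\sup_f\frac1Z\sum_z\sigma_z(f-f_\varepsilon)(x_z)\le\varepsilon$ makes the identical conflation of $L_2(\mu)$ and empirical control.

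For the links, Cauchy--Schwarz plus a union bound leaves a residual of order $\log N_j/Z$ at every level. This residual is not part of the target bound. For $\log N_{[\,]}(\varepsilon)\asymp d\log(1/\varepsilon)$ the sum $\sum_{j\le J}\log N_j/Z$ grows like $dJ^2/Z$ and diverges as $\delta\to0$.

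Van der Vaart's truncated bracketing chaining does close the argument. What it delivers, though, is a bound of the same shape with untracked universal constants, and typically with $\sqrt{1+\log N_{[\,]}}$ in the integrand. The numbers $4$ and $12$ are fixed in the statement, so ``all numerical losses are absorbed into the constant $12$'' is not a valid step. Your route proves only the $\lesssim$ version. That happens to suffice downstream, where the constant disappears into $C_2$, but it is not the stated inequality.

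\textbf{The coarsest level.} This is a separate gap. ``Handled the same way at scale $D$'' fails because the lower ends $\pi_0 f$ have $L_2$-norm of order $\sup_f\Vert f\Vert$, not $D$. Massart therefore gives a term $\sqrt{2\log N_{[\,]}(D,\mathcal{F})/Z}$ with no factor $D$, and the integral contains no such term. Your alternative $N_0=1$ needs the single bracket $[0,1]$ to be a $D$-bracket, i.e.\ $D\ge1$.

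This cannot be repaired when $D=\operatorname{diam}\Theta<1$, because the inequality is then false. Take $\mathcal{F}=\{0,1\}$ (constant functions) indexed by $\Theta=\{0,\eta\}$. Then $\mathcal{R}_Z(\mathcal{F})=Z^{-1}\mathbb{E}\bigl(\sum_z\sigma_z\bigr)^+\asymp Z^{-1/2}$. Meanwhile $N_{[\,]}(\varepsilon,\mathcal{F})=2$ for $\varepsilon<1$, so the right-hand side is at most $12\eta\sqrt{\log 2}/\sqrt{Z}$, which is smaller once $\eta$ is small.

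The natural upper limit is $1$, the $L_2$-radius of the normalized loss class, and that is what the paper's proof actually integrates over. You should prove the result with upper limit $1$ (or assume $\operatorname{diam}\Theta\ge1$) rather than with $D=\operatorname{diam}\Theta$.
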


\begin{proof}
Let \( \sigma_1, \ldots, \sigma_Z \) be i.i.d. Rademacher random variables. Then:
\[
\mathfrak{R}_Z(\mathcal{F}) = \mathbb{E}_\sigma \left[ \sup_{f \in \mathcal{F}} \frac{1}{Z} \sum_{z=1}^Z \sigma_z f(x_z) \right].
\]
Let \( \delta > 0 \) be arbitrary. For each \( \varepsilon \in (\delta,1] \), construct an \( \varepsilon \)-bracketing cover \( \{[l_j, u_j]\} \) of \( \mathcal{F} \) such that \( \|u_j - l_j\|_{L_2(\mu)} \leq \varepsilon \).

For each \( f \in \mathcal{F} \), choose bracket midpoint \( f_\varepsilon = (l_j + u_j)/2 \). Then:
\[
\|f - f_\varepsilon\|_{L_2(\mu)} \leq \frac{\varepsilon}{2}.
\]
All these $f_\epsilon$ form a new hypothesis set $\mathcal{F}_\epsilon$. 
The size of $\mathcal F_\epsilon$ is smaller than its bracketing number, i.e., $\#\{\mathcal F_\epsilon\}\le{}N_{[\ ]}(\epsilon,\mathcal F)$.

By triangle inequality and properties of empirical norms:
\begin{equation*}
    \begin{aligned}
    \mathfrak{R}_Z(\mathcal{F})&=\mathbb{E}_\sigma \left[ \sup_{f \in \mathcal{F}} \frac{1}{Z} \sum_{z=1}^Z \sigma_z f(x_z) \right]\\
&\le\mathbb{E}_\sigma \left[ \sup_{f \in \mathcal{F}} \frac{1}{Z} \sum_{z=1}^Z  \sigma_z (f(x_z)-f_\epsilon(x_z))+\sigma_zf_\epsilon(x_z) \right]\\
&\leq \mathbb{E}_\sigma \left[ \sup_{f \in \mathcal{F}} \frac{1}{Z} \sum_{z=1}^Z  \sigma_zf_\epsilon(x_z) \right]+ \varepsilon \le\mathfrak{R}_Z(\mathcal{F}_\varepsilon)+\epsilon.
    \end{aligned}
\end{equation*}
Using Massart's Lemma \citep{Rademacher}:
\[
\mathfrak{R}_Z(\mathcal{F}_\varepsilon) \leq \sqrt{ \frac{2 \log N_{[\,]}(\varepsilon, \mathcal{F})}{Z} }.
\]
Integrating over \( \varepsilon \in [\delta,1] \) yields:
\[
\mathfrak{R}_Z(\mathcal{F}) \leq 4\delta + \frac{12}{\sqrt{Z}} \int_\delta^1 \sqrt{ \log N_{[\,]}(\varepsilon, \mathcal{F}) } \, \text{d}\varepsilon.
\]
Taking the infimum over \( \delta > 0 \) completes the proof. Setting $\delta=0$, we derive the second inequality in \cref{thm: brac num and rad comp}.
\end{proof}
Now that we get:
\begin{align*}
    \vert \widehat{\mathcal{L}}-\mathbb{E}[\widehat{\mathcal{L}}]\vert &\le 2 \mathcal{R}_Z(\mathcal{F}) + \sqrt{\frac{\ln(1/\xi)}{2Z}} \\
    &\le \frac{12}{\sqrt{Z}} \int_0^{\operatorname{diam}\Theta} \sqrt{\log N_{[\,]}(\varepsilon, \mathcal{F})} \, \text{d}\varepsilon+\sqrt{\frac{\ln(1/\xi)}{2Z}}\\
     &\le \frac{12}{\sqrt{Z}} \int_0^{\operatorname{diam}\Theta} \sqrt{\log \left[ K \frac{B}{T}W^{\frac{1}{2}d} \left( \frac{ \operatorname{diam} \Theta}{\varepsilon} \right)^d\right]} \, \text{d}\varepsilon + \sqrt{\frac{\ln(1/\xi)}{2Z}}\\
    &\le \frac{12}{\sqrt{Z}}\int_0^{\operatorname{diam}\Theta}{\sqrt{d}\sqrt{C_3 -\log \epsilon}}\,\text{d}\varepsilon +\sqrt{\frac{\ln \left( 1/\xi \right)}{2Z}},
\end{align*}
where $W=\mathbb{E}\left(\sum_{n=1}^{N} r_n\right)^2$.

Note that the parameter dimension is $N_0Mp$ ($N_0$ is the maximum number of timestamps), $M$ is the number of interpolation knots, and $p$ is the number of parameters required for each interpolation interval. In \cref{thm: Gen bound for MAS}, $p$ is treated as a fixed per-knot degree of freedom and absorbed into $C_2$; it is restored explicitly in the hyperparameter-selection corollary. By collecting each term in the above inequality, we acquire the final bound:
\begin{align*}
\vert \widehat{\mathcal{L}}-\mathbb{E}[\widehat{\mathcal{L}}]\vert \le\frac{1}{\sqrt{Z}}\left(\frac{1}{2}\sqrt{{\log\frac1\xi}}+{C_2\sqrt{N_0M}}\int_0^{c}\sqrt{C_3-\log t}d t\right),
\end{align*}
where $c=\operatorname{diam}\Theta$. This finishes the proof.

\subsection{Proof of Theorem~\ref{thm: Gen Error for MAS: 2}}
Suppose $\mathcal{L}^*$ is the optimal negative log-likelihood, derived from the ground-truth intensity. Then, the generalized loss for MAS can be decomposed as:
\begin{equation*}
    \begin{aligned}
        \mathbb{E}\widehat{\mathcal{L}}&\le \mathcal{L}^*+\left| \mathcal{L}^*-\widehat{\mathcal{L}} \right|+\left| \widehat{\mathcal{L}}-\mathbb{E}\widehat{\mathcal{L}} \right|
\\
&\le \mathcal{L}^*+\frac{1}{ZT}\sum_{z=1}^Z{\left| \log p^*\left( S_z \right) -\log p_{\theta}\left( S_z \right) \right|}+\left| \widehat{\mathcal{L}}-\mathbb{E}\widehat{\mathcal{L}} \right|\\
&\le \mathcal{L}^*
+
\frac{1}{ZT}\sum_{z=1}^Z{\sum_{n=1}^N{\left| \log p^*\left( t^{\left( z \right)}_n \right) -\log p_{\theta}\left( t_{n}^{\left( z \right)} \right) \right|}}+\left| \widehat{\mathcal{L}}-\mathbb{E}\widehat{\mathcal{L}} \right|\\
&\le 
\underbrace{\frac{1}{ZT}\sum_{z,n=1}^{Z,N}{\left| \log p^*\left( t_{n}^{\left( z \right)} \right) -\log p_{\theta}\left( t_{n}^{\left( z \right)} \right) \right|}\mathcal{I}\left( t_{n}^{\left( z \right)}\ge t_{n-1}^{\left( z \right)}+M\Delta \right)}_{W_1} \\
&+\underbrace{\frac{1}{ZT}\sum_{z,n=1}^{Z,N}{\left| \log p^*\left( t_{n}^{\left( z \right)} \right) -\log p_{\theta}\left( t_{n}^{\left( z \right)} \right) \right|}\mathcal{I}\left( t_{n}^{\left( z \right)}<t_{n-1}^{\left( z \right)}+M\Delta \right)}_{W_2} +\mathcal{L}^*
+
\underbrace{\left| \widehat{\mathcal{L}}-\mathbb{E}\widehat{\mathcal{L}} \right|}_{W_3}.
\end{aligned}
\end{equation*}
$W_1$ can be bounded by following inequality:
\begin{align*}
    \quad \ \ W_1&=\frac{1}{ZT}\sum_{z,n=1}^{Z,N}{\left| \log p^*\left( t_{n}^{\left( z \right)} \right) -\log p_{\theta}\left( t_{n}^{\left( z \right)} \right) \right|\mathcal{I}_n} \\
    &\le \frac{B_0}{T}\mathcal{I}\left( t_{n}^{\left( z \right)}-t_{n-1}^{\left( z \right)}\ge M\Delta \right), 
\end{align*}
where $\mathcal{I}_n=\mathcal{I}\left( t_{n}^{\left( z \right)}\ge t_{n-1}^{\left( z \right)}+M\Delta \right)$, and $B_0$ is a bound on $\left| \log p^*\left( t_{n}^{\left( z \right)} \right) -\log p_{\theta}\left( t_{n}^{\left( z \right)} \right) \right|$ from \cref{assum: bounded loss}. The factor depending on the maximum sequence length is absorbed into the constant $B$. Thus, we can take the expectation on both sides:
\begin{equation}
    \begin{aligned}
        &\quad \ \mathbb{E}W_1\le \frac{B_0}{T}\mathbb{P}\left( t_{n}^{\left( z \right)}-t_{n-1}^{\left( z \right)}\ge M\Delta \right) \\
        &\le \frac{B_0}{T}\int_{t_{n}^{\left( z \right)}-t_{n-1}^{\left( z \right)}\ge M\Delta}{\frac{t_{n}^{\left( z \right)}-t_{n-1}^{\left( z \right)}}{M\Delta}}\text{d}\mu \le \frac{B_0}{T}\frac{\mathbb{E}t_i}{L}:=\frac{B}{TL}.
    \end{aligned}
\end{equation}
For $W_2$, we have proven in \cref{thm: accuracy of monotone splines} that there exists a MAS $\Lambda^*_\theta$, s.t., $\Vert\Lambda^*_\theta-\Lambda^*\Vert\le\frac{1}{2}C_0\Delta^2$, $\Vert\Lambda^{*\prime}_\theta-\Lambda^{*\prime}\Vert\le C_0\Delta$.
As a result, $W_2$ can be bounded by:
\begin{equation}
    \begin{aligned}
        \ \ W_2&=\frac{1}{ZT}\sum_{z=1}^Z{\sum_{n=1}^N{\left| \log p^*\left( t_{n}^{\left( z \right)} \right) -\log p_{\theta}\left( t_{n}^{\left( z \right)} \right) \right|}}(1-\mathcal{I}_n) \\
&\le \frac{1}{ZT}\sum_{z=1}^Z{\sum_{n=1}^N{\left( \left| \log \Lambda _{\theta}^{*'}\left( t_{n}^{\left( z \right)} \right) -\log \Lambda ^{*'}\left( t_{n}^{\left( z \right)} \right) \right|\right.}}\\
&+\left.\left| \left( \Lambda _{\theta}^{*}\left( t_{n}^{\left( z \right)} \right) -\Lambda _{\theta}^{*}\left( t_{n-1}^{\left( z \right)} \right) \right) -\left( \Lambda ^{*}\left( t_{n}^{\left( z \right)} \right) -\Lambda ^{*}\left( t_{n-1}^{\left( z \right)} \right) \right) \right| \right)\\
&\le \frac{1}{ZT}\sum_{z=1}^Z{\sum_{n=1}^N{\left( \frac{1}{v_1}\lVert \Lambda _{\theta}^{*'}\left( t_{n}^{\left( z \right)} \right) -\Lambda ^{*'}\left( t_{n}^{\left( z \right)} \right) \rVert +C_0\Delta ^2 \right)}}\\
&\le \frac{1}{ZT}\sum_{z=1}^Z{\sum_{n=1}^N{\left( \frac{C_0}{v_1}\Delta +C_0\Delta ^2 \right)}}
\le\frac{1}{T}{\sum_{n=1}^{N_0}{\left( \frac{C_0}{v_1}\Delta +C_0\Delta ^2 \right)}}:={C_0\left(\frac{\Delta}{C_4}+\frac{\Delta^2}{2}\right)}.
    \end{aligned}
\end{equation}
Here $C_4$ absorbs the lower intensity bound $v_1$ and the normalization constants involving $T$ and $N_0$.
Finally, according to \cref{thm: Gen bound for MAS}:
\begin{align*}
    W_3&=\vert \widehat{\mathcal{L}}-\mathbb{E}[\widehat{\mathcal{L}}]\vert 
    \le
    \frac{1}{\sqrt{Z}}\left(\frac{1}{2}\sqrt{{\log\frac1\xi}}\right)+
    \frac{1}{\sqrt{Z}}\left({C_2\sqrt{N_0M}}\int_0^{c}\sqrt{C_3-\log t}d t\right)\\
    :&=\frac{1}{\sqrt{Z}}\left(\frac{1}{2}\sqrt{{\log\frac1\xi}}\right)+\frac{C_2}{\sqrt{Z}}\frac{\sqrt{L}}{\sqrt \Delta}R(L,\Delta).
\end{align*}
By combining $W_1,W_2$ and $W_3$, we acquire:
\begin{equation}
\label{equ: bound (app)}
    \mathbb{E}[\widehat{\mathcal{L}}]\le
    {\mathcal{L}^*}+
    {\frac{1}{2\sqrt{Z}}\sqrt{{\log\frac1\xi}}}+{\frac{B}{T\cdot L}}
    +{C_0\left(\frac{\Delta}{C_4}+\frac{\Delta^2}{2}\right)}+
    {\frac{C_2}{\sqrt{Z}}\frac{\sqrt{L}}{\sqrt{\Delta}}R(L,\Delta)}.
\end{equation}
This finishes the proof. 

\subsection{Proof of Theorem~\ref{thm: MAS vs MNN gap}}

\begin{proof}
Let $a=t_n^+$ and $b=t_n+T_w$. By assumption,
\[
    \lambda^*(a)-\lambda^*(b)=\gamma>0.
\]
For any convex MNN in \cref{thm: convexity trap}, the induced intensity is non-decreasing on the interval, so
\[
    \lambda_{\mathrm{MNN}}(b)\ge \lambda_{\mathrm{MNN}}(a).
\]
If the MNN approximation error were smaller than $\gamma/2$ at both endpoints, then
\[
    \lambda_{\mathrm{MNN}}(a)>\lambda^*(a)-\frac{\gamma}{2}
    =
    \lambda^*(b)+\frac{\gamma}{2},
\]
and
\[
    \lambda_{\mathrm{MNN}}(b)<\lambda^*(b)+\frac{\gamma}{2}.
\]
These two inequalities imply $\lambda_{\mathrm{MNN}}(a)>\lambda_{\mathrm{MNN}}(b)$, contradicting the non-decreasing property. Therefore,
\[
    \inf_{\mathrm{MNN}}\mathrm{Error}(\lambda_{\mathrm{MNN}})\ge \frac{\gamma}{2}.
\]
On the other hand, \cref{thm: accuracy of monotone splines} gives an MAS intensity approximation error bounded by $C_0\Delta$. Hence,
\[
    \inf_{\mathrm{MNN}}\mathrm{Error}(\lambda_{\mathrm{MNN}})
    -
    \mathrm{Error}(\lambda_{\mathrm{MAS}})
    \ge
    \frac{\gamma}{2}-C_0\Delta,
\]
which proves the theorem.
\end{proof}

\subsection{Proof of Corollary~\ref{col: selection of L and delta}}

Finally, we estimate the bounds for two important parameters, $\Delta$ and $L$, by minimizing the generalization bound in \cref{thm: Gen Error for MAS: 2}. If we restore all constants in the bound \cref{equ: bound (app)}, we get:

\begin{equation}
\label{equ: bound detail}
    \mathbb{E}[\widehat{\mathcal{L}}]\le
    {\mathcal{L}^*}
    +
    {\frac{1}{2\sqrt{Z}}\sqrt{{\log\frac1\xi}}}
    +
    {\frac{N_0B_0}{p}\cdot\frac{\mathbb{E} t_i}{T\cdot L}}
    +
    {l_1\left(\frac{L}{pv_1}+\frac{L^2}{p^2}\right)}
    +
    {\frac{12}{\sqrt{Z}}{\sqrt{N_0pq}}\int_0^{\text{diam}\Theta}\sqrt{C_3-\log t}dt},
\end{equation}

where $\Delta=L/p$. We first focus on how to estimate $L$. Typically, $\Delta\ll1$, thus we could ignore the $\Delta^2$ term in \cref{equ: bound (app)} and focus on the term $\frac{l_1L}{v_1p}$. Notice that in \cref{equ: bound detail}, only two terms ${N_0B_0\cdot\frac{\mathbb{E} t_i}{T\cdot L}}+\frac{l_1L}{v_1p}$ relate to $L$. Then when \cref{equ: bound detail} is minimized, $L$ should satisfy:
\[
L\ge\sqrt{\frac{v_1N_0B_0\mathbb{E}[t_i]}{l_1T}}.
\]
As a bound between the ground truth $\log p^*\left( t_{n}^{\left( z \right)} \right)$ and  $\log p_{\theta}\left( t_{n}^{\left( z \right)} \right)$, $B_0=\sup_{t^{(z)}_n}\left| \log p^*\left( t_{n}^{\left( z \right)} \right) -\log p_{\theta}\left( t_{n}^{\left( z \right)} \right) \right|>1$ typically holds. Moreover, $N_0$ is the maximum number of events that happen during the interval $[0,T]$. Thus, $N_0B_0\ge T $, and the bound can be reduced to:
\[
L\ge\sqrt{\frac{v_1\mathbb{E}[t_i]}{l_1}},
\]
where $l_1$ is the Lipschitz constant of the intensity during event intervals, and $v_1$ is the base intensity during $[0,T]$.

We next focus on deriving the bound of $p$. Because $\Delta=L/p$, \cref{equ: bound detail} can be rewritten as:

\begin{align}
\label{equ: bound detail 2}
    \mathbb{E}[\widehat{\mathcal{L}}] 
    & \le
    {\mathcal{L}^*}
    +
    {\frac{1}{2\sqrt{Z}}\sqrt{{\log\frac1\xi}}}
    +
    {\frac{N_0B_0}{p}\cdot\frac{\mathbb{E} t_i}{T\cdot \Delta}}
    +
    {l_1\left(\frac{L}{pv_1}+\frac{L^2}{p^2}\right)} \notag \\
    &+
    {\frac{12}{\sqrt{Z}}{\sqrt{N_0pq}}\int_0^{\text{diam}\Theta}\sqrt{C_3-\log t}dt}, \notag \\
    &\le
    {\mathcal{L}^*}
    +
    {\frac{1}{2\sqrt{Z}}\sqrt{{\log\frac1\xi}}}
    +
    {\frac{2N_0B_0}{p}\cdot\frac{\mathbb{E} t_i}{T\cdot \Delta}}
    +
    {\frac{12}{\sqrt{Z}}{\sqrt{N_0pq}}\int_0^{\text{diam}\Theta}\sqrt{C_3-\log t}dt}.
\end{align}

The second step is due to the assumption that $l_1\Delta^2$ is significantly smaller than $\Delta$ and can be ignored. 
Let $C_5=\int_0^{\text{diam}\Theta}\sqrt{C_3-\log t}dt$. For a fixed interpolation length $L$, the leading $p$-dependent terms are
\[
    \frac{l_1L}{v_1p}
    +
    \frac{12C_5}{\sqrt{Z}}\sqrt{N_0pq}.
\]
Minimizing the above expression equals the following convex optimization regarding $p$: $\frac{A}{p}+B\sqrt{p}$, $A,B>0$, where $A=\frac{l_1L}{v_1}$ and $B=\frac{12C_5\sqrt{N_0q}}{\sqrt{Z}}$. $f(p)=\frac{A}{p}+B\sqrt{p}$ is minimized when $f'(p)=0$, which implies $p=\left(\frac{2A}{B}\right)^{\frac{2}{3}}$. That is,

\begin{align}
    p^\star
    &=
    \left(
    \frac{l_1L\sqrt{Z}}
    {6v_1C_5\sqrt{N_0q}}
    \right)^{\frac{2}{3}}.
\end{align}

Thus, the optimal number of interpolation intervals increases with the sample size $Z$ and decreases with the per-knot freedom $q$ and maximum sequence length $N_0$. When $L$ is set to the lower-bound scale derived above and $v_1$ is identified with the baseline intensity $\lambda_{\min}$, because $\Delta=L/p^\star$, we acquire the corresponding interval scale:

\[
\Delta
=\frac{L}{p^\star}
\le
C_{\Delta}
\frac{(N_0q)^{\frac{1}{3}}\lambda_{\text{min}}^{\frac{5}{6}}(\mathbb E [t_i])^{\frac{1}{6}}}
{Z^{\frac{1}{3}}l_1^{\frac{5}{6}}},
\]
where $C_{\Delta}$ absorbs the numerical and bracketing-integral constants such as $C_5$. This finishes the proof.

}

\vfill

\end{document}